\def\eqref#1{equation~\ref{#1}}
\def\Eqref#1{Equation~\ref{#1}}
\def\floor#1{\lfloor #1 \rfloor}
\def\1{\bm{1}}
\def\eps{{\varepsilon}}
\def\rmM{{\mathbf{M}}}
\def\vh{{\bm{h}}}
\def\vx{{\bm{x}}}
\def\vX{{\bm{X}}}
\def\vZ{{\bm{Z}}}
\DeclareMathAlphabet{\mathsfit}{\encodingdefault}{\sfdefault}{m}{sl}
\SetMathAlphabet{\mathsfit}{bold}{\encodingdefault}{\sfdefault}{bx}{n}
\newcommand{\E}{\mathbb{E}}
\newcommand{\R}{\mathbb{R}}
\newcommand{\KL}{D_{\mathrm{KL}}}
\newcommand{\Var}{\mathrm{Var}}
\DeclareMathOperator*{\argmax}{arg\,max}
\DeclareMathOperator*{\argmin}{arg\,min}
\crefname{lemma}{Lemma}{Lemmas}
\crefname{fact}{Fact}{Facts}
\crefname{theorem}{Theorem}{Theorems}
\crefname{corollary}{Corollary}{Corollaries}
\crefname{claim}{Claim}{Claims}
\crefname{example}{Example}{Examples}
\crefname{problem}{Problem}{Problems}
\crefname{setting}{Setting}{Settings}
\crefname{definition}{Definition}{Definitions}
\crefname{assumption}{Assumption}{Assumptions}
\crefname{subsection}{Subsection}{Subsections}
\crefname{section}{Section}{Sections}
\crefname{figure}{Figure}{Figures}
\algrenewcommand\algorithmicrequire{\textbf{Input:}}
\algrenewcommand\algorithmicensure{\textbf{Output:}}
\newtheorem{theorem}{Theorem}[section]
\newtheorem*{theorem*}{Theorem}
\newtheorem{proposition}[theorem]{Proposition}
\newtheorem*{proposition*}{Proposition}
\newtheorem{lemma}[theorem]{Lemma}
\newtheorem*{lemma*}{Lemma}
\newtheorem{corollary}[theorem]{Corollary}
\newtheorem*{conjecture*}{Conjecture}
\newtheorem*{fact*}{Fact}
\newtheorem*{exercise*}{Exercise}
\newtheorem*{hypothesis*}{Hypothesis}
\theoremstyle{definition}
\newtheorem{definition}[theorem]{Definition}
\newtheorem{exercise-easy}[theorem]{Exercise}
\newtheorem{exercise-med}[theorem]{Exercise}
\newtheorem{exercise-hard}[theorem]{Exercise$^\star$}
\newtheorem{claim}[theorem]{Claim}
\newtheorem*{claim*}{Claim}
\newtheorem*{remark*}{Remark}
\newtheorem*{observation*}{Observation}
\newcommand{\savehyperref}[2]{\texorpdfstring{\hyperref[#1]{#2}}{#2}}
\newcommand{\Sref}[1]{\hyperref[#1]{\S\ref*{#1}}}
\newcommand\whichfont{0}
\DeclareMathOperator*{\poly}{poly}
\DeclarePairedDelimiterX{\norm}[1]{\lVert}{\rVert}{#1}
\DeclarePairedDelimiterX{\abs}[1]{\lvert}{\rvert}{#1}
\DeclarePairedDelimiterX{\inp}[2]{\langle}{\rangle}{#1, #2}
\DeclarePairedDelimiterX{\infdivx}[2]{(}{)}{%
  #1\;\delimsize\|\;#2%
}
\newcommand{\E}{\mathbb{E}}
\renewcommand{\P}{\mathbb{P}}
\newcommand{\mc}[1]{\mathcal{#1}}
\newcommand{\mb}[1]{\mathbb{#1}}
\newcommand{\mrm}[1]{\mathrm{#1}}
\newcommand{\lprp}[1]{\left(#1\right)}
\newcommand{\lbrb}[1]{\left\{#1\right\}}
\newcommand{\lsrs}[1]{\left[#1\right]}
\newcommand{\N}{\mathbb{N}}
\renewcommand{\le}{\leqslant}
\renewcommand{\leq}{\leqslant}
\renewcommand{\geq}{\geqslant}
\renewcommand{\emph}{\textit}
\newcommand{\wt}[1]{\widetilde{#1}}
\newcommand{\wh}[1]{\widehat{#1}}
\newcommand{\ignore}[1]{{}}
\newcommand{\Ot}{\widetilde{O}}
\newcommand{\ts}{\thicksim}
\newcommand{\distr}{\mathcal{D}}
\newcommand{\mom}{\mathbf{m}}
\DeclareMathOperator{\wass}{W_1}
\DeclareMathOperator{\tv}{TV}
\title{How much is a noisy image worth? \\ Data Scaling Laws for Ambient Diffusion.}
\author{Giannis Daras\thanks{Equal contribution.} \\
CSAIL \\
MIT,  IFML \\
\texttt{gdaras@mit.edu} \\
\And
Yeshwanth Cherapanamjeri$^*$ \\
CSAIL \\
MIT  \\
\texttt{yesh@mit.edu} \\
\And
Costantinos Daskalakis \\
CSAIL \\
MIT \\
\texttt{costis@csail.mit.edu}
}
\begin{document}

\maketitle

\begin{abstract}
     The quality of generative models depends on the quality of the data they are trained on. Creating large-scale, high-quality datasets is often expensive and sometimes impossible, e.g.~in certain scientific applications where there is no access to clean data due to physical or instrumentation constraints. Ambient Diffusion and related frameworks train diffusion models with solely corrupted data (which are usually cheaper to acquire) but ambient models significantly underperform models trained on clean data. We study this phenomenon at scale by training more than $80$ models on data with different corruption levels across three datasets ranging from $30,000$ to $\approx 1.3$M samples. We show that it is impossible, at these sample sizes, to match the performance of models trained on clean data when only training on noisy data. Yet, a combination of a small set of clean data (e.g.~$10\%$ of the total dataset) and a large set of highly noisy data suffices to reach the performance of models trained solely on similar-size datasets of clean data, and in particular to achieve near state-of-the-art performance. We provide theoretical evidence for our findings by developing novel sample complexity bounds for learning from Gaussian Mixtures with heterogeneous variances. Our theoretical model suggests that, for large enough datasets, the effective marginal utility of a noisy sample is exponentially worse that of a clean sample. Providing a small set of clean samples can significantly reduce the sample size requirements for noisy data, as we also observe in our experiments.
\end{abstract}

\section{Introduction}
\label{sec:introduction}

A key factor behind the remarkable success of modern generative models, from image Diffusion Models (DMs) to Large Language Models (LLMs), is the curation of large scale datasets~\citep{gadre2023datacomp, li2024datacomplm}. However, in certain applications, access to high-quality data is scarce, expensive, or impossible. For example, in Magnetic Resonance Imaging (MRI) the quality of the data is proportional to the time spent in the scanner~\citep{mri_paper} and, in black-hole imaging, it is never possible to get full measurements from the object of interest~\citep{linimaging}. Constructing a (copyright-free) large-scale dataset of high-quality general domain images is also an expensive and complex process. Enterprise text-to-image image DMs rely on proprietary datasets, often acquired from third-party vendors at significant cost~\citep{dalle3, imagenteamgoogle2024imagen3}. State-of-the-art open-source DMs are typically trained by crawling a large pool of images from the Web and filtering them for quality with a pipeline that deems each sample as suitable or non-suitable for training~\citep{gadre2023datacomp}. However, this binary treatment of samples is problematic because often low-quality images still contain useful information. For example, a blurry image might get dismissed from the filtering pipeline to avoid blurry generations at inference time, yet the image might still contain important information about the world, such as the type of objects present at the scene.

Recently, there has been a growing interest in developing frameworks for training generative models using corrupted data, e.g.~from blurry or noisy images~\citep{bora2018ambientgan,
daras2023ambient, kawar2023gsure, aali2023solving, daras2024consistent, bai2024expectation, rozet2024learning, wang2024integrating, kelkar2023ambientflow}. Most frameworks that are applicable to diffusion models introduce training or sampling approximations, which significantly hurt performance. A recent work, \emph{Consistent Diffusion Meets Tweedie}~\citep{daras2024consistent}, proposes a method that guarantees exact sampling, at least asymptotically as the number of noisy samples tends to infinity. For realistic sample sizes, however, models trained on noisy data using this method (or any other proposed method) are significantly inferior to models trained on clean data. How do we overcome this challenge?

\begin{SCfigure}
    \centering
    \includegraphics[width=0.7\linewidth]{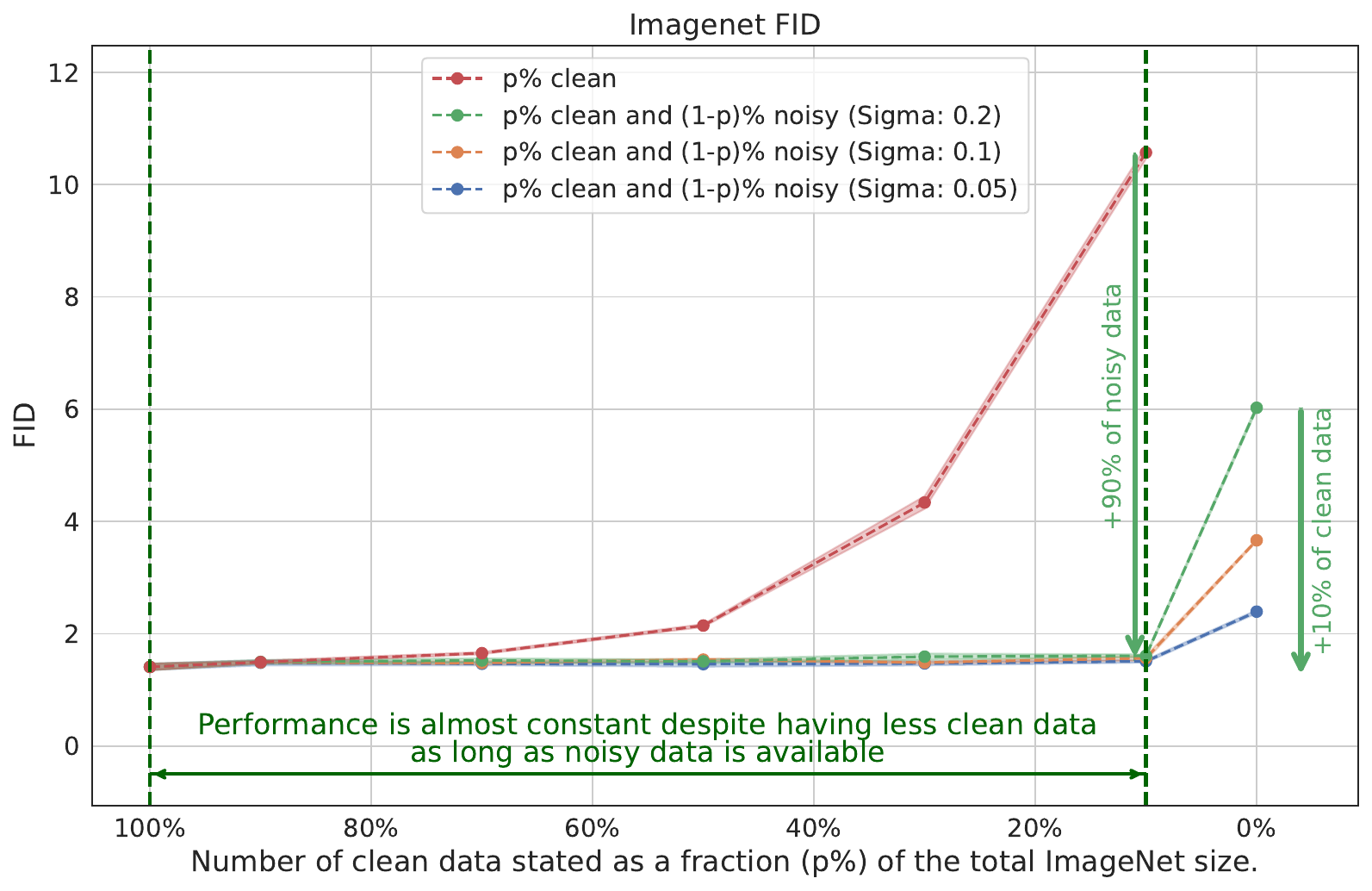}
    \hspace{-0.5em}
    \caption{ImageNet FID performance (lower is better) for models trained with different amounts of clean and noisy data. \small{Performance of models trained with only clean data (red curve) reduces as we decrease the amount of data used. Training with purely noisy data (right-most points) also gives poor performance -- even if $100\%$ of the dataset is available.} Training with a mix of noisy and clean data strikes an interesting balance: \textit{a model trained with $90\%$ noisy and $10\%$ clean data is almost as good as a model trained with $100\%$ clean data.}}
    \label{fig:fig1}
    \vspace{-2em}
\end{SCfigure}
A fundamental limitation facing existing approaches for leveraging corrupted data is that, so far, they have been studied under the assumption that no access to clean samples is available. As we show in this work, it is theoretically and experimentally impossible to compensate for the lack of clean data without a significant hit (that is \emph{exponential in the number of classes} in our theoretical analysis) in the number of noisy datapoints. On the other hand, we show that a combination of a small set of clean images (e.g. 10\% of the total dataset) and a large set of noisy images (e.g. 90\% of the total dataset) suffices to maintain state-of-the-art performance. Models trained on this data mixture significantly outperform two natural baselines: i) training only on the small set of clean images; and ii) training only on the large set of noisy images (see also Figure \ref{fig:fig1} and Table \ref{tab:highlighted-fids}). We validate our finding by training more than 80 models across various noise scales and corruption ratios on ImageNet, CelebA-HQ, and CIFAR-10.

\begin{wraptable}{r}{0.6\textwidth}
    \centering
    \begin{tabular}{l|ccc}
        \hline
        \textbf{Available Data} & \textbf{CIFAR-10} & \textbf{CelebA} & \textbf{ImageNet} \\
        \hline
        \rowcolor{lightgray!20}
        100\% clean & $1.99 \scriptstyle{\pm 0.02}$ & $2.40 
        \scriptstyle{\pm 0.07}$ & $1.41 \scriptstyle{\pm 0.03}$ \\
        \rowcolor{Bittersweet!20}
        100\% noisy & $11.93\scriptstyle{\pm 0.09}$ & $12.97 \scriptstyle{\pm 0.11}$ & $5.32 \scriptstyle{\pm 0.08}$ \\
        \rowcolor{Bittersweet!20}
        10\% clean (smaller dataset) & $17.30 \scriptstyle{\pm 0.14}$ & $11.92 \scriptstyle{\pm 0.05}$ & $10.57 \scriptstyle{\pm 0.06}$ \\
        \rowcolor{SpringGreen!20}
        90\% noisy + 10\% clean & $2.81 \scriptstyle{\pm 0.02}$ & $2.75 \scriptstyle{\pm 0.02}$ & $1.68 \scriptstyle{\pm 0.02}$ \\
        \hline
    \end{tabular}
    \caption{FID scores for models trained on datasets with different percentages of noisy and clean data. 
    \small{Training with a mix of 90\% noisy and 10\% clean data achieves much better performance than training the same number of purely noisy samples, or just the clean samples. The standard deviation of the additive Gaussian noise for this experiment is $\sigma=0.2$, which corresponds to SNR $\approx 14.15\mathrm{dB}$.}}
    \label{tab:highlighted-fids}
\end{wraptable}
Theoretically, we consider a novel variant of the classical problem of learning Gaussian Mixture Models (GMMs) with \emph{heterogeneous} noise levels. In contrast to the classical setting, each sample is generated with a (potentially) different noise level reflective of how clean or noisy the data point is. We derive minimax optimal estimation rates, which showcase the differing utility of clean and noisy data. Concretely, having lots of noisy data enables identification of the right low-dimensional structure for learning the mixture components, while a more moderate number of clean data allows fine-grained differentiation of the components in low dimensions. In particular, effective learning is possible even when the number of clean samples is substantially \emph{smaller} than the dimension. At the same time, relying solely on noisy data \emph{requires} exponentially (in the number of components) noisy samples than clean ones. This theoretical result could justify why training on a mixture of clean and noisy data significantly outperforms training solely on either one of the clean or the noisy sets. In summary, \textbf{our contributions} are as follows:
\begin{compactitem}
    \item We train more than 80 models at different datasets, noise levels and corruption ratios.
    \item We show that as long as a few clean images are available, models trained with corrupted data perform on par with models trained on clean data. We further show that training on a combination of a small set of clean images and a large set of noisy images significantly outperforms training solely on either one of these sets.
    \item We provide a theoretical explanation for this phenomenon by developing novel sample complexity bounds for the case of GMMs with heterogeneous variances. Our minimax rates show that the availability of noisy data removes the \emph{polynomial} dependence on dimension in the number of clean samples while simultaneously showing \emph{exponentially} more noisy samples are required to compensate for the lack of clean data.  
    \item  Through our theoretical analysis and our experimental validation, we pinpoint the price of noisy images for different datasets and noise levels. These insights can potentially guide budget allocation for dataset curation.
    \item We open-source our code and pre-trained models to facilitate further research in this area: \href{https://github.com/giannisdaras/ambient-laws}{https://github.com/giannisdaras/ambient-laws}.
\end{compactitem}

\section{Background and Related Work}

\subsection{Diffusion modeling}
The dominant paradigm for learning distributions is diffusion models. A diffusion model is trained to denoise images at different levels of additive Gaussian noise~\citep{ddpm,ncsn,ncsnv3}. To establish some notation, we will use $t \in [0, T]$ for the diffusion time and $\sigma_t$ for the associated standard deviation of the added noise. Let $\vX_0 \sim p_0$ be a sample from the distribution of interest and let $\vX_t \sim p_t$ a sample of the noisy distribution at time $t$, where it holds that $\vX_t = \vX_0 + \sigma_t \vZ, \vZ \sim \mathcal N(\bm 0, I)$. At training time, the goal is to learn $\E[\vX_0 | \vX_t=\vx_t]$ for all levels $t$. Diffusion models are trained to approximate this quantity using the following objective\footnote{In practice, a non-uniform distribution for $t$ and a loss weighting $w(t)$ are often used~\citep{karras2022elucidating}.}~\citep{vincent2011connection}:
\begin{gather}
    J_{\mathrm{DSM}}(\theta) = \E_{\vx_0 \sim p_0}\E_{t\sim U[0, T]}\E_{\vx_t \sim p_t(\vx_t | \vX_0 = \vx_0)}\left\| \vh_{\theta}(\vx_t, t) - \vx_0\right\|^2.
    \label{eq:dsm}
\end{gather}
Once trained, the model is used for sampling following a discretized version of the update rule~\citep{ncsnv3, ddim}:
\begin{gather}
    \mathrm{d}\vx_t = -\mathrm{d}\sigma_t\frac{\vh_{\theta}(\vx_t, t) - \vx_t}{\sigma_t},
    \label{eq:ode}
\end{gather}

that is guaranteed to sample from $p_0$ given that it is initialized from $p_T$ and $\vh_{\theta}(\vx_t, t) = \E[\vX_0 | \vX_t=\vx_t]$. In practice, there are errors due to initialization, discretization, and improper learning of the score~\citep{chen2022sampling, restoration_degradation}.
\subsection{Learning Diffusion Models from Noisy Data}
The training objective of \Eqref{eq:dsm} requires access to data from the distribution $p_0$. We will refer to this data as \textit{clean} for the rest of this paper. As explained in the Introduction, in many applications access to clean data is scarce and hence methods for training diffusion models from corrupted data have been developed. 

Ambient Diffusion~\citep{daras2023ambient} proposes a framework to train diffusion models from linear measurements. An alternative framework to Ambient Diffusion leverages Stein's Unbiased Risk Estimate (SURE)~\citep{kawar2023gsure, aali2023solving}. These works extend classical results for training \textit{restoration} models from corrupted data~\citep{moran2020noisier2noise, stein1981estimation, xu2020noisy, pang2021recorrupted, krull2019noise2void, batson2019noise2self} to the setting of learning \textit{generative} models under corruption. Unfortunately, these methods induce sampling approximations that significantly hurt the performance.
A different thread of papers on this topic relies on the Expectation-Maximization (EM) algorithm~\citep{bai2024expectation, rozet2024learning, wang2024integrating}. The convergence of these models to the true distribution depends on the convergence of the EM algorithm, which might get stuck in a local minimum.

Consistent Diffusion Meets Tweedie~\citep{daras2023consistent, daras2024consistent} generalized Ambient Diffusion to handle additive Gaussian noise and became the first framework with provably exact sampling given infinite (noisy) data. In what follows, we give a brief description of the method, since we are going to use this in the rest of this paper.
\paragraph{Problem Setting.} Assume that we are given samples at a specific noise level $t_n$ (nature noise level). The question becomes how we can leverage these samples to learn the optimal denoiser $\E[\vX_0 | \vX_t = \vx_t]$ for all noise levels $t$. 
\paragraph{Learning the optimal denoiser for high noise levels.} The first task is to learn the optimal denoiser for $t > t_n$. To perform this goal, one can leverage the following Lemma:

\begin{lemma}[\citep{daras2024consistent}]
Let $\vX_{t_n} = \vX_0 + \sigma_{t_n}\vZ_1$ and $\vX_t = \vX_0 + \sigma_t \vZ_2, \quad \vZ_1, \vZ_2\sim \mathcal N(\bm{0}, I)$ i.i.d. Then, for any $\sigma_t > \sigma_{t_n}$, we have that:
    \begin{gather}
        \underbrace{\E[\vX_0 | \vX_t = \vx_t]}_{\textrm{removes all the noise}} = \frac{\sigma_t^2}{\sigma_t^2 - \sigma_{t_n}^2}\underbrace{\E[\vX_{t_n} | \vX_t=\vx_t]}_{\textrm{removes additional noise}} - \frac{\sigma_{t_n}^2}{\sigma_t^2 - \sigma_{t_n}^2}\vx_t.
    \end{gather}
    \label{lemma:cond_expectations_lemma_ve}
\end{lemma}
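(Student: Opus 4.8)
The plan is to obtain the identity from a single application of Gaussian conditioning followed by the tower property. The first thing I would do is fix the joint law of $(\vX_0, \vX_{t_n}, \vX_t)$: the conditional expectations in the statement only make sense once $(\vX_{t_n}, \vX_t)$ is a coupled pair, and the relevant coupling is the one induced by the (variance-exploding) forward process, i.e. the Markov chain $\vX_0 \to \vX_{t_n} \to \vX_t$ in which $\vX_t = \vX_{t_n} + \sqrt{\sigma_t^2 - \sigma_{t_n}^2}\,\vZ_3$ with $\vZ_3 \sim \mathcal N(\vzero, I)$ drawn independently of $\vX_{t_n}$. This is consistent with the stated marginals, because $\sigma_{t_n}\vZ_1 + \sqrt{\sigma_t^2 - \sigma_{t_n}^2}\,\vZ_3$ has law $\mathcal N(\vzero, \sigma_t^2 I)$, which matches that of $\sigma_t \vZ_2$.

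Conditioning on $\vX_0 = \vx_0$, the pair $(\vX_{t_n} - \vx_0,\ \vX_t - \vx_0)$ is a centered Gaussian vector with $\Var(\vX_{t_n} - \vx_0) = \sigma_{t_n}^2 I$, $\Var(\vX_t - \vx_0) = \sigma_t^2 I$, and cross-covariance $\Cov(\vX_{t_n} - \vx_0,\ \vX_t - \vx_0) = \sigma_{t_n}^2 I$ (the last holds because $\vZ_3$ is independent of $\vX_{t_n}$). The standard formula for the conditional mean of a jointly Gaussian vector then gives
\[
  \E[\vX_{t_n} \mid \vX_0 = \vx_0,\ \vX_t = \vx_t] \;=\; \vx_0 + \frac{\sigma_{t_n}^2}{\sigma_t^2}(\vx_t - \vx_0) \;=\; \frac{\sigma_t^2 - \sigma_{t_n}^2}{\sigma_t^2}\,\vx_0 + \frac{\sigma_{t_n}^2}{\sigma_t^2}\,\vx_t .
\]

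Next I would take the expectation of both sides over $\vX_0$ conditioned on $\vX_t = \vx_t$ and apply the tower property to the left-hand side, obtaining
\[
  \E[\vX_{t_n}\mid \vX_t = \vx_t] \;=\; \frac{\sigma_t^2 - \sigma_{t_n}^2}{\sigma_t^2}\,\E[\vX_0 \mid \vX_t = \vx_t] + \frac{\sigma_{t_n}^2}{\sigma_t^2}\,\vx_t ,
\]
and then solving this linear relation for $\E[\vX_0 \mid \vX_t = \vx_t]$ (multiply through by $\sigma_t^2/(\sigma_t^2 - \sigma_{t_n}^2)$ and rearrange) yields exactly the claimed formula. As an alternative derivation I would note the same identity drops out of applying Tweedie's formula twice and eliminating the shared score: once to $\vX_t = \vX_0 + \sigma_t\vZ$, giving $\E[\vX_0\mid\vX_t=\vx_t] = \vx_t + \sigma_t^2\,\nabla\log p_t(\vx_t)$, and once to $\vX_t = \vX_{t_n} + \sqrt{\sigma_t^2 - \sigma_{t_n}^2}\,\vZ_3$, giving $\E[\vX_{t_n}\mid\vX_t=\vx_t] = \vx_t + (\sigma_t^2 - \sigma_{t_n}^2)\,\nabla\log p_t(\vx_t)$.

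I do not expect a real obstacle here: the only point that genuinely needs care is the one flagged above, namely that the two noisy views must be coupled through the forward process, so that $\Cov(\vX_{t_n} - \vx_0,\ \vX_t - \vx_0) = \sigma_{t_n}^2 I$ rather than $0$; with genuinely independent noises the right-hand side would collapse to $\vx_t$. Once the coupling is fixed, the rest is a two-line computation using Gaussian conditioning and the law of total expectation, with the hypothesis $\sigma_t > \sigma_{t_n}$ entering only to ensure the coefficient $\sigma_t^2/(\sigma_t^2 - \sigma_{t_n}^2)$ is well defined.
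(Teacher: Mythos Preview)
Your proposal is correct. The paper does not actually prove this lemma; it is quoted as background from \cite{daras2024consistent} and used as a black box, so there is no in-paper proof to compare against. Both of your routes are valid: the Gaussian-conditioning-plus-tower argument and the ``apply Tweedie twice and eliminate the score'' argument each yield the identity in a couple of lines, and your remark about needing the forward-process coupling (so that $\Cov(\vX_{t_n}-\vx_0,\vX_t-\vx_0)=\sigma_{t_n}^2 I$ rather than $0$) is exactly the right caveat. Given the title of the cited source (\emph{Consistent Diffusion Meets Tweedie}), your Tweedie-based alternative is presumably closest in spirit to the original derivation.
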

This Lemma states that to remove all the noise there is, it suffices to remove the noise that we added on top of the dataset noisy samples from the R.V. $\vX_{t_n}$. One can train the denoiser $\E[\vX_{t_n} | \vX_t=\vx_t]$ with supervised learning, as in \Eqref{eq:dsm}.
The following loss will train the network to estimate $\E[\vX_0|\vX_t=\vx_t]$ directly:
\begin{gather}
    J_{\mathrm{Ambient \ DSM}}(\theta) = \E_{\vx_{t_n} \sim p_{t_n}}\E_{t\sim U(t_n, T]}\E_{\vx_t \sim p_t(\vx_t | \vX_{t_n} = \vx_{t_n})} \left\| \frac{\sigma_t^2-\sigma_{t_n}^2}{\sigma_t^2}\vh_{\theta}(\vx_t, t) + \frac{\sigma_{t_n}^2}{\sigma_t^2}\vx_t - \vx_{t_n}\right\|^2.
    \label{eq:ambient_dsm}
\end{gather}

This idea is closely related to Noisier2Noise~\citep{moran2020noisier2noise}. We underline that there are also alternative ways to learn the optimal denoiser in this regime, such as SURE~\citep{stein1981estimation}. However, SURE-based methods usually bring a computational overhead, as one needs to compute (or approximate) a Jacobian Vector Product.
\paragraph{Learning the optimal denoiser for low noise levels.} Learning the optimal denoiser for $t < t_n$ is a significantly harder problem since there is no available data in this regime. Most of the prior works did not even attempt to solve this problem. At training time, a denoiser $\vh_{\theta}(\vx_t, t)$ is trained for all $t \geq t_n$. At sampling time, one can: i) also use $\vh_{\theta}$ for times $t < t_n$, even though the model was not trained in this regime, hoping that the neural network will extrapolate in a meaningful way, or ii) truncate the sampling at time $t_n$, by predicting $\hat \vx_0 = \vh_{\theta}(\vx_{t_n}, t_n)$. In Algorithm \ref{alg:early_stopping} of the Appendix, we provide a reference implementation of the truncated sampling idea. A more principled attempt was introduced by \citet{daras2024consistent} where the authors attempt to learn the optimal denoiser for low noise levels by enforcing a consistency loss. Specifically, for $t < t_n$, the network is trained with the following objective:
\begin{gather}
    J_{\mathrm{C}}(\theta) = \E_{t, t', t'' \sim \mathcal U(t_n, T], \mathcal U(\epsilon, t), \mathcal U(t'-\epsilon, t')} \E_{\vx_{t}} \E_{\vx_{t'}|\vx_t}[
        \left|\left|\vh_{\theta}(\vx_{t'}, t') - \E_{\vx_{t''} \sim p_{\theta}(\vx_{t''}, t'' | \vx_{t'}, t')}\left[\vh_{\theta}(\vx_{t''}, t'')\right]\right|\right|^2\bigg],
\end{gather}
where $p_{\theta}$ is the distribution induced by the SDE version of \Eqref{eq:ode} (we refer the interested reader to the works of \citet{ncsnv3}, \citet{daras2023consistent}). Assuming that the network has been perfectly trained to predict $\E[\vX_0 | \vX_t=\vx_t]$ for all times $t \geq t_n$, the minimizer of this loss will be $\E[\vX_0 | \vX_t=\vx_t]$ also for times $t \leq t_n$. There are a few important drawbacks though. First, the consistency loss adds a significant training overhead since at each training step: i) expectations need to be computed and ii) sampling steps need to be performed. Second, the theoretical guarantees only hold given perfect learning of the conditional expectation for $t \geq t_n$ and it is unclear how errors propagate. Finally, even for slight corruption, the method leads to a significant performance deterioration~\citep{daras2024consistent}.

\section{Method}
\label{sec:method}

\paragraph{Problem Setting.} The need for consistency or sampling approximations arises from the fact that we do not have samples to train the network in the regime $t\in (t_n, 0]$. We relax this assumption by considering the setting where a small set of clean and a large set of noisy images are available. 
To establish the notation, let's use $S_{\mathrm{clean}}$ to denote the clean set and $S_{\mathrm{noisy}}$ to denote the noisy set.
The clean samples are realizations of the R.V. $\vX_0 \sim p_0$ and the noisy samples realizations of the R.V. $\vX_{t_n} = \vX_0 + \sigma_{t_n}\vZ$ for some fixed noise level $t_n$. We underline that our method can be naturally extended to handle arbitrary per-sample noise levels, but for the ease of the presentation, we focus on the setting of two noise levels: clean data (no noise) and noisy data (at noise level $t_n$).

\paragraph{Training algorithm.}
Our training algorithm leverages samples from both sets. For times $t \in [t_n, 0]$, we will rely solely on clean samples. On the other hand, for $t \in [T, t_n)$, we will leverage \textit{both} the clean and the noisy samples. The clean samples will be used as usual, following the Denoising Score Matching objective, defined in \Eqref{eq:dsm}. The noisy samples in this regime will utilize \Eqref{eq:ambient_dsm} to learn a denoiser by relating it to the quantity $\E [\vX_{t_n} | \vX_t = \vx_t]$ through the use of \cref{lemma:cond_expectations_lemma_ve} (see Algorithm \ref{alg:training_algorithm}).

\begin{algorithm}[!htp]
\caption{Algorithm for training using both clean and noisy samples.}
\begin{algorithmic}[1]
\Require untrained network $\vh_{\theta}$, set of clean samples $S_{\mathrm{clean}}$, set of noisy samples $S_{\mathrm{noisy}}$, std of the noise $\sigma_{t_n}$, batch size $B$, diffusion time $T$, noisy diffusion start time $t_n$
\While{not converged}
    \State Form a batch $\mathcal{B}$ of size $B$ uniformly sampled from $S_{\mathrm{clean}} \cup S_{\mathrm{noisy}}$
    \State $\text{loss} \gets 0$ \Comment{Initialize loss.}
    \For{each sample $x \in \mathcal{B}$}
        \State $\epsilon \sim \mathcal N(\bm 0, I)$ \Comment{Sample noise.}
        \If{$x \in S_{\mathrm{noisy}}$}
            \State $\vx_{t_n} \gets \vx$    \Comment{We are dealing with a noisy sample.}
            \State $t \sim \mathcal{U}(t_n, T)$ \Comment{Sample diffusion time for noisy sample.}
            \State $\vx_t = \vx_{t_n} + \sqrt{\sigma_t^2 - \sigma_{t_n}^2}\epsilon$ \Comment{Add additional noise.}
            \State loss $\gets$ loss + $\left\| \frac{\sigma_t^2-\sigma_{t_n}^2}{\sigma_t^2}\vh_{\theta}(\vx_t, t) + \frac{\sigma_{t_n}^2}{\sigma_t^2}\vx_t - \vx_{t_n}\right\|^2$ \Comment{Ambient Denoising Score Matching loss.}
        \Else
            \State $\vx_0 \gets \vx$ \Comment{We are dealing with a clean sample.}
            \State $t \sim \mathcal{U}(0, T)$ \Comment{Sample diffusion time for clean sample.}
            \State $\vx_t = \vx_0 + \sigma_t\epsilon$ \Comment{Add additional noise.}
            \State loss $\gets$ loss + $\left\| \vh_{\theta}(\vx_t, t) - \vx_0\right\|^2$ \Comment{Regular Denoising Score Matching loss.}
        \EndIf
    \EndFor
    \State loss $\gets \frac{\mathrm{loss}}{B}$ \Comment{Compute mean loss.}
    \State $\theta \gets \theta - \eta \nabla_{\theta} \text{loss}$ \Comment{Update network parameters via backpropagation.}
\EndWhile
\end{algorithmic}
\label{alg:training_algorithm}
\end{algorithm}

\paragraph{Method Discussion.}
In prior work~\citep{daras2024consistent}, the optimal denoiser for times $t < t_n$ was learned through the (expensive) consistency loss. In our setting, we assume access to a small set of clean images that we are using to learn the optimal denoiser for these times. The question then becomes: ``\textit{does a small set of clean images suffice to learn the optimal denoiser for low-noise levels?}'' As we show in this paper, the answer to this question is affirmative, i.e. we only need a handful of uncorrupted images to learn the optimal denoiser for low corruption.

The intuition is as follows. To learn the distribution, we need to be able to generate accurately both structural information (low-frequencies) and texture (high-frequencies). Clean samples have both low-frequency and high-frequency content. The addition of noise corrupts the high-frequency content faster (for a discussion, see \citet{dieleman2024spectral}). Hence, noisy images are mostly useful to learn the low-frequency content of the distribution, i.e. they are useful to learn to generate structure. Humans, and metrics that are trying to model human preferences (such as FID), are more perceptible to errors in structural information than errors in the high-entropy image texture. So even though it might be impossible to perfectly generate high-frequency content using only a few clean images, metrics, such as FID, are going to penalize these errors less, given that the model can accurately generate the low-frequency content.

The method we developed in Algorithm \ref{alg:training_algorithm} can be trivially extended to more than two sets of points. In fact, each point $\vx_i$ in the training set can have its own noise level $t_i$. In that case, our algorithm would use each point $\vx_i$ to learn the optimal denoiser for $t \geq t_i$. For the ease of presentation, we stick to two noise levels for the rest of this paper.

\section{Theoretical Results}
\label{sec:theory_main}
\subsection{Setting}
Before presenting our experiments, we establish theoretical results that will support our experimental findings. We note that most theoretical works on distribution learning operate under the assumption of i.i.d samples from a fixed distribution. In contrast, the goal of our paper is to understand the effects that heterogeneity in sample \emph{quality} has on estimation error. This setting is quite niche and creates space for interesting theory to be developed.

We develop our results for the Gaussian Mixture Model that is commonly used in diffusion theory~\citep{kulin_gmms, gatmiry2024learning, gu2024unraveling, chidambaram2024does, li2024critical,cui2023analysis, chen2024learning}. We examine the following adaptation of the classical estimation problem to the heterogeneous setting:
\begin{definition}
    \label{def:mog_heterogeneous}
    We are given $X_1, \dots, X_n$ independently drawn from the following distribution:
    \begin{equation*}
        \vspace{-1em}
        X_i \thicksim \distr \ast \mc{N} (0, \sigma^2_i I), \text{ where } \distr = \sum_{i = 1}^k w_i \delta_{\mu_i}
    \end{equation*}
    where $\delta_{\mu}$ denotes the Dirac distribution at $\mu$ and $\ast$ the convolution operator. The goal is to output $\wh{\distr}$ minimizing:
    \begin{equation*}
        \wass (\distr, \wh{\distr}) \coloneqq \min_{C(\distr, \wh{\distr})} \E_{(X, X') \thicksim C} [\norm{X - X'}]
    \end{equation*}
    where $C(\distr, \wh{\distr})$ is a coupling of $\distr, \wh{\distr}$ with marginals equal to $\distr$ and $\wh{\distr}$.
\end{definition}
Note now that variance in sample quality can now be naturally expressed in terms of the noise level, $\sigma_i$, of the individual observations with the larger $\sigma_i$ denoting lower-quality samples and vice-versa. 

\subsection{Main theoretical results and discussion}
\vspace{1em}
\begin{theorem}[Upper bound]
    \label{thm:ub_heterogenous}
    Let $\eqnmarkbox[Lavender]{n}{n}, \eqnmarkbox[Peach]{d}{d}, \eqnmarkbox[LimeGreen]{k}{k} \in \N$, $R > 0$, \annotate[yshift=0.3em]{above,left}{n}{\# samples} and $\delta \in (0, 1/2)$. Suppose $\distr = \sum_{i = 1}^k w_i \delta_{\mu_i}$ is a $k$-atomic distribution over $\R^d$ with $\norm{\mu_i} \leq R$.\annotate[yshift=.5em]{above, right}{d}{dimension} \annotate[yshift=.3em, xshift=4em]{above,right}{k}{\# classes}Then, there is a procedure which when given $n$ independent samples $X_1, \dots, X_n$ such that $X_i \ts \distr \ast \mc{N} (0, \sigma_i^2 I)$ with $\sigma_i \geq 1$ satisfying:
    \begin{gather*}
        \max_i \frac{1}{\sigma_i^4} \leq \frac{c_1}{(d + \log (1 / \delta))} \sum_{i = 1}^n \frac{1}{\sigma_i^4},\quad \max_i \frac{1}{\sigma_i^{4k - 2}} \leq \frac{c_2}{\log (1 / \delta) + k \log (k)} \eqnmarkbox[Salmon]{eff}{\sum_{i = 1}^n \frac{1}{\sigma_i^{4k - 2}}} 
    \end{gather*}
    \annotate[yshift=0.5em]{above,right}{eff}{\# effective samples}
    and $\delta > 0$, returns an estimate $\wh{\distr}$ with:
    \vspace{1.5em}
    \begin{equation*}
        \wass (\distr, \wh{\distr}) \leq C \lprp{\eqnmarkbox[Plum]{dimred}{k \lprp{\frac{d + \log (1 / \delta)}{\sum_{i = 1}^n 1 / \sigma_i^4}}^{1/4}} + \eqnmarkbox[RoyalBlue]{lowdimerror}{k^3 \lprp{\frac{k \log k + \log (1 / \delta)}{\sum_{i = 1}^n 1/\sigma_i^{4k - 2}}}^{1/(4k - 2)}}},
    \end{equation*}
    \annotate[yshift=0.5em]{above,left}{dimred}{dimensionality reduction error}
    \annotate[yshift=0.5em]{above,right}{lowdimerror}{low-dim estimation error}with probability at least $1 - \delta$. Furthermore, the runtime of the algorithm is $\Ot_k (nd)$.
\end{theorem}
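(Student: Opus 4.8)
The plan is to decouple the estimation into two stages that exploit the two noise regimes implied by the assumptions: a \emph{dimensionality reduction} stage that uses the aggregate second-moment information (captured by $\sum_i 1/\sigma_i^4$), and a \emph{low-dimensional moment-matching} stage that uses higher-order tensor information (captured by $\sum_i 1/\sigma_i^{4k-2}$). The key structural observation is that $\distr$ is supported on the $k$ points $\mu_1,\dots,\mu_k$, hence on a subspace $V$ of dimension at most $k-1$ (after centering). If we can identify a subspace $\wh V$ of dimension $O(k)$ close to $V$, then projecting the data onto $\wh V$ reduces the problem to learning a $k$-atomic mixture in $O(k)$ dimensions, where sample complexity no longer depends on $d$.

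\textbf{Step 1: Estimating the covariance and recovering the subspace.} Each $X_i$ has covariance $\Sigma + \sigma_i^2 I$, where $\Sigma = \sum_j w_j \mu_j\mu_j^\top - (\sum_j w_j\mu_j)(\sum_j w_j\mu_j)^\top \succeq 0$ has rank $\le k-1$. The debiased estimator $\wh\Sigma = \frac1n\sum_i \frac{1}{?}(\dots)$ must be built carefully: since the $\sigma_i$ are known, I would use an inverse-variance-type weighting. The natural unbiased estimator of $\Sigma$ from a single sample is $X_iX_i^\top - \sigma_i^2 I$ (ignoring the mean shift, which I handle by also estimating $\E[X]=\sum_j w_j\mu_j$ via a weighted average and subtracting), and its fluctuation scales with $\sigma_i^2$, so the optimal combination weights samples proportionally to $1/\sigma_i^4$. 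A matrix-Bernstein / truncation argument then gives $\norm{\wh\Sigma - \Sigma}_{\mathrm{op}} \lesssim \big(\frac{d+\log(1/\delta)}{\sum_i 1/\sigma_i^4}\big)^{1/2}$ with probability $1-\delta$, exactly under the first stated condition $\max_i \sigma_i^{-4} \le \frac{c_1}{d+\log(1/\delta)}\sum_i \sigma_i^{-4}$, which is the condition ensuring no single sample dominates. Taking $\wh V$ to be the span of the top $k-1$ eigenvectors of $\wh\Sigma$, a Davis–Kahan / perturbation bound shows every $\mu_j$ lies within distance $\lesssim k\cdot(\dots)^{1/4}$ of $\wh V$ after an eigenvalue-gap-free argument (I would avoid assuming a spectral gap by instead bounding $\sum_j w_j \|\mu_j - P_{\wh V}\mu_j\|^2 \le \mathrm{tr}((I-P_{\wh V})\Sigma) \lesssim k\,\norm{\wh\Sigma-\Sigma}_{\mathrm{op}}$, then converting the weighted bound to a per-atom bound using $w_j$ being bounded below — or, if $w_j$ can be tiny, noting that atoms with negligible weight contribute negligibly to $\wass$). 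This yields the first term $k\big(\frac{d+\log(1/\delta)}{\sum_i 1/\sigma_i^4}\big)^{1/4}$ in the bound.

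\textbf{Step 2: Learning the mixture in low dimensions.} After projecting all samples onto $\wh V$ (dimension $m = O(k)$), we face learning a $k$-atomic distribution $\distr'$ on $\R^m$ from samples $\distr' \ast \mc N(0,\sigma_i^2 I_m)$. Here I would use the classical moment method for one-dimensional-type mixtures generalized to $m$ dimensions via random projections: project onto $\poly(k)$ random directions, and on each direction run a univariate Gaussian-mixture moment-matching estimator à la Moitra–Valiant / Wu–Yang. The univariate estimator needs the first $2k-1$ moments; since the $i$-th sample along a direction has variance-contribution $\sigma_i^2$, the empirical estimate of the $(2k-2)$-th moment has fluctuation governed by $\sigma_i^{4k-4}$ per sample, and the optimal inverse-variance combination weights by $1/\sigma_i^{4k-2}$ (matching the $\sum_i 1/\sigma_i^{4k-2}$ in the statement), with the second stated condition $\max_i \sigma_i^{-(4k-2)} \le \frac{c_2}{k\log k + \log(1/\delta)}\sum_i \sigma_i^{-(4k-2)}$ again preventing single-sample domination. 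Standard polynomial-stability results for the moment problem (the Wasserstein distance between two $k$-atomic measures is controlled by a root of the moment discrepancy) give per-direction error $\lesssim \big(\frac{k\log k + \log(1/\delta)}{\sum_i 1/\sigma_i^{4k-2}}\big)^{1/(4k-2)}$; stitching the directional estimates back into an $m$-dimensional estimate (e.g.\ via a grid search over candidate atom locations, or the tensor-decomposition reconstruction) costs extra $\poly(k)$ factors, producing the $k^3(\cdots)^{1/(4k-2)}$ term. The $\Ot_k(nd)$ runtime follows since Step 1 is dominated by forming $n$ rank-one $d\times d$ updates implicitly (only matrix-vector products are needed for the top eigenvectors, $\Ot_k(nd)$ via subspace iteration), and Step 2 operates entirely in $O(k)$ dimensions with $\poly(k)$ overhead independent of $d$.

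\textbf{Main obstacle.} The hardest part will be Step 2's dependence on a spectral/weight gap: the moment method's error blows up polynomially in $1/w_{\min}$ and in the inverse separation of the $\mu_j$, yet the theorem statement has \emph{no} such dependence. I expect the resolution is that these quantities only enter \emph{polynomially in $k$} once one measures error in $\wass$ rather than in parameter distance — atoms that are close together or have small weight are cheap to confuse under $\wass$ — so the argument must be phrased throughout in terms of $\wass$ and never in terms of recovering individual $(\mu_j, w_j)$. Making the reduction from the $m$-dimensional reconstruction to univariate sub-problems clean (avoiding an exponential-in-$m$, i.e.\ exponential-in-$k$, loss, and keeping it at $k^3$) is the delicate accounting step, and is where I would spend the most care.
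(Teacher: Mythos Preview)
Your two-stage plan is essentially the paper's proof: they too (i) form a debiased second-moment estimator with weights $\alpha_i\propto 1/\sigma_i^4$, bound $\norm{\wh\Sigma-\Sigma}_{\mathrm{op}}$ by a Bernstein-type argument under exactly the first assumption, and convert this to a $\wass$ bound on the top-$k$ projection via the gap-free trace argument you sketched (their Lemma~C.7, no Davis--Kahan needed); and (ii) in the resulting $k$-dimensional space run one-dimensional Hermite-polynomial moment estimators weighted by $1/\sigma_i^{4k-2}$, then stitch via exactly the grid search you suggest (candidate atoms are the $k^k$ coordinate-wise products of the one-dimensional atom locations, Lemma~C.6).

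The one technical point you should sharpen is Step~2's control of the supremum over directions. The paper does \emph{not} use random projections; its contribution here is a deterministic sliced-Wasserstein lemma (Lemma~C.3): for any $1/(32k^{5/2})$-net $\mathcal G$ of $\mathbb S^{k-1}$ one has $\wass(\distr,\distr')\le 16k^{5/2}\sup_{v\in\mathcal G}\wass(\distr_v,\distr'_v)$. Because $|\mathcal G|$ depends only on $k$, one can run a median-of-means version of the one-dimensional estimator per direction and union-bound over $\mathcal G$, which is precisely what delivers the $(\log(1/\delta))^{1/(4k-2)}$ dependence in the theorem (rather than the $\sqrt{\log(1/\delta)}$ that chaining or random projections would give). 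Your ``main obstacle'' diagnosis is right, and this fixed-net-plus-median trick is how the paper resolves it while keeping all losses $\poly(k)$. (Minor slip: it is the $(2k{-}1)$st moment whose variance scales as $\sigma_i^{4k-2}$, not the $(2k{-}2)$nd.)
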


\paragraph{Discussion.} The full proof of this result is given in \cref{sec:proof_heterogeneous}. We follow a two-stage process where in the first stage we reduce the dimensionality of the estimation problem, from the dimension $d$, to the number of components, $k$. Next, we run a low-dimensional estimation procedure using the $k$-dimensional projected data. The two terms in the error bound reflect the contributions of each of these steps. The first corresponds to the error in estimating the low-dimensional sub-space and the second to the low-dimensional estimation procedures. The two conditions (on $\max 1/\sigma_i^4$ and $\max 1/\sigma_i^{4k - 2}$) lower bound the \emph{effective} number of samples for the dimensionality-reduction and low-dimensional estimation steps respectively. The first requires that at least $d$ (effective) samples are available for dimensionality reduction while at least $k$ are available for low-dimensional estimation.

Observing the two terms of the bound, we see that, compared to the high-quality points, the low-quality points have relatively more utility in the dimensionality-reduction step versus the lower-dimensional estimation step. This observation is made clearer in the following corollary in the simplified setting with $2$ noise levels. 
\vspace{1em}
\begin{corollary}
    \label{cor:two_noise_levels}
    Assume the setting of \cref{thm:ub_heterogenous}. Furthermore, suppose that for $\eqnmarkbox[Tan]{p}{p} \in [0, 1]$, we additionally have that $\sigma_1, \dots, \sigma_{pn} = 1$ and $\sigma_{p n + 1}, \dots, \sigma_n = \sigma \geq 1$. Then, the guarantees of \cref{thm:ub_heterogenous} reduce to:
    \annotate[yshift=0.1em]{above,left}{p}{Percentage of ``clean'' points}
    \begin{equation*}
        \wass (\distr, \wh{\distr}) \leq C \lprp{k \lprp{\frac{d + \log (1 / \delta)}{n(p + \eqnmarkbox[ForestGreen]{dw}{(1 - p) / \sigma^4})}}^{1/4} + k^3 \lprp{\frac{k \log k + \log (1 / \delta)}{n \lprp{p + \eqnmarkbox[BrickRed]{dww}{(1 - p)/\sigma^{4k - 2}}}}}^{1/(4k-2)}}.
    \end{equation*}
    \annotate[yshift=-0.5em]{below,left}{dw}{Factor for dim. reduction}
    \annotate[yshift=-0.5em]{below,left}{dww}{Factor for fine-grained estimation}
\end{corollary}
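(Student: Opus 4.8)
The plan is to obtain \cref{cor:two_noise_levels} as an immediate specialization of \cref{thm:ub_heterogenous}: nothing new needs to be proved, and the entire argument consists of evaluating the two power-sums $\sum_{i=1}^n 1/\sigma_i^4$ and $\sum_{i=1}^n 1/\sigma_i^{4k-2}$ under the two-level noise profile and checking that the theorem's hypotheses specialize to mild (and natural) non-vacuity conditions on the sample counts.

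First I would substitute $\sigma_i = 1$ for $i \le pn$ and $\sigma_i = \sigma$ for $i > pn$ into the relevant quantities. This gives $\sum_{i=1}^n 1/\sigma_i^4 = pn + (1-p)n/\sigma^4 = n\bigl(p + (1-p)/\sigma^4\bigr)$ and, identically, $\sum_{i=1}^n 1/\sigma_i^{4k-2} = n\bigl(p + (1-p)/\sigma^{4k-2}\bigr)$. Plugging these two closed forms into the error bound of \cref{thm:ub_heterogenous} reproduces verbatim the claimed inequality, with the same absolute constant $C$; so the bulk of the work here is purely arithmetic and the factors annotated in the corollary statement ($p + (1-p)/\sigma^4$ for dimensionality reduction, $p + (1-p)/\sigma^{4k-2}$ for fine-grained estimation) are read off directly.

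Next I would verify that the two side conditions of the theorem hold in this regime, so that the invocation is legitimate. Since $\sigma \ge 1$ we have $\max_i 1/\sigma_i^4 \le 1$ and $\max_i 1/\sigma_i^{4k-2} \le 1$ (the maximum is $1$ when $p>0$ and $1/\sigma^4$, resp.\ $1/\sigma^{4k-2}$, in the degenerate case $p=0$). Hence the first hypothesis $\max_i 1/\sigma_i^4 \le \tfrac{c_1}{d+\log(1/\delta)}\sum_i 1/\sigma_i^4$ is implied by $n\bigl(p+(1-p)/\sigma^4\bigr) \ge (d+\log(1/\delta))/c_1$ — i.e.\ the effective number of samples for dimensionality reduction is at least of order $d$ — and the second hypothesis is implied analogously by $n\bigl(p+(1-p)/\sigma^{4k-2}\bigr) \ge (\log(1/\delta)+k\log k)/c_2$. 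These are precisely the conditions under which each bracket in the displayed bound is $o(1)$, so it is natural to state the corollary under exactly these implicit conditions inherited from the theorem.

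There is essentially no obstacle: the only thing to be careful about is bookkeeping around the edge case $p=0$ (pure noisy data), which only weakens the side conditions, and confirming that $pn$ may be taken to be an integer (or replaced by $\lfloor pn\rfloor$ at the cost of constants absorbed into $C$). No fresh probabilistic, spectral, or moment-method argument is required beyond what already underlies \cref{thm:ub_heterogenous}.
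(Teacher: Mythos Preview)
Your proposal is correct and matches the paper's approach: the corollary is stated without proof there, as it is immediate from substituting the two-level noise profile into the sums $\sum_i 1/\sigma_i^4$ and $\sum_i 1/\sigma_i^{4k-2}$ appearing in \cref{thm:ub_heterogenous}. Your additional care in checking the side conditions and the edge case $p=0$ is fine and goes slightly beyond what the paper spells out.
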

\vspace{0.3em}
We see that the noisy samples are downweighted only by $1 / \sigma^4$ for the dimensionality-reduction step whereas for the fine-grained estimation procedure, its discounting is much more substantial at $1 / \sigma^{4k - 2}$. Next, we present a matching lower bound establishing that the rate achieved by \cref{thm:ub_heterogenous} is optimal for learning with heterogenous samples.
\begin{theorem}[Lower bound]
    \label{thm:lb_combined}
    Let $k \in \N$. Let $n, k, d \in \N$, $\delta \in (0, 1/4]$, and $\{\sigma_i\}_{i = 1}^n \geq 1$ be such that $d \geq 3$ and:
    \begin{equation*}
        \sum_{i = 1}^n \frac{1}{\sigma_i^{4k - 2}} \geq \lprp{Ck}^{2k - 1} \log (1/\delta) \text{ and } \sum_{i = 1}^n \frac{1}{\sigma_i^4} \geq C (d + \log (1 / \delta))
    \end{equation*}
    For any estimator $\wh{T}$, there exists a $k$-atomic distribution $\distr$ supported on $\mb{B}_1$ such that for $\bm{X} \ts \prod_{i = 1}^n \distr \ast \mc{N} (0, \sigma_i^2 I)$:
    \begin{equation*}
        \P_{\bm{X}} \lbrb{\wass (\wh{T} (\bm{X}, \distr)) \geq c \max \lbrb{\lprp{\frac{d + \log (1/\delta)}{\sum_{i = 1}^n 1 / \sigma_i^4}}^{1 / 4}, \frac{1}{k} \lprp{\frac{1}{\log (1 / \delta)} \cdot \sum_{i = 1}^n \frac{1}{\sigma_i^{4k - 2}}}^{-1/(4k - 2)}}} \geq \delta.
    \end{equation*}
\end{theorem}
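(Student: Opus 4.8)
Up to absolute constants the right-hand side is $\max\{\Delta_{\mathrm{dim}},\Delta_{\mathrm{low}}\}$, where $\Delta_{\mathrm{dim}}:=\lprp{(d+\log(1/\delta))\big/\sum_i\sigma_i^{-4}}^{1/4}$ and $\Delta_{\mathrm{low}}:=\tfrac1k\lprp{\log(1/\delta)\big/\sum_i\sigma_i^{-(4k-2)}}^{1/(4k-2)}$, so it suffices to prove a matching lower bound for each of the two quantities separately and then present the hard instance coming from whichever of the two sub-families is the harder one. In both cases I would use the standard reduction from estimation to testing: construct a finite sub-family $\{\distr_j\}$ of $k$-atomic measures supported in $\mathbb{B}_1$ that is $\Omega(\Delta)$-separated in $\wass$ but whose observation laws $P_j:=\prod_{i=1}^n\distr_j\ast\mathcal N(0,\sigma_i^2 I)$ are statistically indistinguishable, and then invoke either the Bretagnolle--Huber / Le Cam two-point inequality (two hypotheses, with the total $\KL$ calibrated to $\asymp\log(1/\delta)$, so that some $P_j$ forces error $\ge c\Delta$ with probability $\ge\delta$) or Fano over a packing (to harvest the extra factor of $d$ in the first term, at confidence $1/2\ge\delta$). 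The two hypotheses of the theorem are exactly the feasibility constraints forcing the resulting perturbation scales to be $\le 1$: then every atom stays in $\mathbb{B}_1$, and since $\sigma_i\ge 1$ every perturbation is small relative to the noise, which is the regime in which the divergence estimates below are sharp. The condition $d\ge 3$ is a mild technicality ensuring there is room to place the auxiliary atoms used below.

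\textbf{The dimension-dependent term.} Fix $k-2$ well-separated ``anchor'' atoms and use the remaining two atoms to place a \emph{mean-cancelling} symmetric pair at $\pm\epsilon\theta$ with equal weights, for a unit vector $\theta$. Then $\distr_\theta$ has a $\theta$-independent mean and a covariance equal to a fixed matrix plus a rank-one term proportional to $\epsilon^2\,\theta\theta^\top$, so the only dependence of $\distr_\theta\ast\mathcal N(0,\sigma_i^2 I)$ on $\theta$ is through an $O(\epsilon^2)$ perturbation of the second moment, and a routine small-perturbation expansion gives $\KL\lprp{\distr_\theta\ast\mathcal N(0,\sigma_i^2 I)\,\big\|\,\distr_{\theta'}\ast\mathcal N(0,\sigma_i^2 I)}\lesssim \epsilon^4/\sigma_i^4$, hence total $\KL\lesssim\epsilon^4\sum_i\sigma_i^{-4}$. (Cancelling the mean is essential here: a mean-shift construction would only yield the weaker ``clean-data'' rate with $\sigma_i^{-2}$ in place of $\sigma_i^{-4}$.) Any two $\theta,\theta'$ at constant angular distance satisfy $\wass(\distr_\theta,\distr_{\theta'})\gtrsim\epsilon$ (optimally match $\pm\epsilon\theta$ to $\pm\epsilon\theta'$). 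Taking $\theta$ over a $1/2$-packing of $\mathbb{S}^{d-1}$ of cardinality $e^{\Omega(d)}$ and applying Fano forces error $\gtrsim\epsilon$ once $\epsilon^4\sum_i\sigma_i^{-4}\lesssim d$; the two-point version with $\epsilon^4\sum_i\sigma_i^{-4}\asymp\log(1/\delta)$ supplies the $\log(1/\delta)$ contribution. Since $(a+b)^{1/4}\asymp\max(a^{1/4},b^{1/4})$, combining these yields the $\Delta_{\mathrm{dim}}$ lower bound, and the hypothesis $\sum_i\sigma_i^{-4}\ge C(d+\log(1/\delta))$ keeps $\epsilon\le 1$.

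\textbf{The component-resolution term (the main obstacle).} Work on the line, embedded in $\R^d$. Using classical Chebyshev-system / Gauss-quadrature facts (a $k$-atomic law is determined by its first $2k-1$ moments, so the two ``principal representations'' of a moment sequence of order $2k-2$ give two distinct $k$-atomic probability measures with matching first $2k-2$ moments), construct such a pair $\nu_0,\nu_1$ on $[-1,1]$ and rescale to $\epsilon\nu_0,\epsilon\nu_1$ on $[-\epsilon,\epsilon]$. Two estimates are needed. First, a $\wass$ separation $\wass(\epsilon\nu_0,\epsilon\nu_1)\gtrsim\epsilon/k$, which for this canonical pair can be checked directly since it is comparable to the spacing of the $2k$ atoms inside $[-\epsilon,\epsilon]$. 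Second — and this is the crux — the divergence bound: because the first $2k-2$ moments agree, the Hermite expansion of the density of $\epsilon\nu_b\ast\mathcal N(0,\sigma_i^2 I)$ has all low-order coefficients cancel, so $\chi^2\lprp{\epsilon\nu_0\ast\mathcal N(0,\sigma_i^2 I)\,\big\|\,\epsilon\nu_1\ast\mathcal N(0,\sigma_i^2 I)}\le\sum_{j\ge 2k-1}c_{j,i}^2\lesssim\sum_{j\ge 2k-1}\frac1{j!}\lprp{\frac{\epsilon^2}{\sigma_i^2}}^{j}\lesssim\frac1{(2k-1)!}\frac{\epsilon^{4k-2}}{\sigma_i^{4k-2}}$, the series being dominated by its first term. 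Summing over $i$ and choosing $\epsilon^{4k-2}\asymp\log(1/\delta)\big/\sum_i\sigma_i^{-(4k-2)}$ makes the total $\chi^2$, hence the total $\KL$, $\asymp\log(1/\delta)$; the hypothesis $\sum_i\sigma_i^{-(4k-2)}\ge(Ck)^{2k-1}\log(1/\delta)$ guarantees $\epsilon\le 1/\sqrt{Ck}\le 1$, so the construction is feasible. Bretagnolle--Huber then forces error $\gtrsim\epsilon/k\asymp\Delta_{\mathrm{low}}$ with probability $\ge\delta$.

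\textbf{Conclusion and where the difficulty lies.} Comparing $\Delta_{\mathrm{dim}}$ and $\Delta_{\mathrm{low}}$ and exhibiting the hard instance from the corresponding construction produces a single $k$-atomic $\distr$ supported in $\mathbb{B}_1$ for which every estimator errs by at least a constant multiple of $\max\{\Delta_{\mathrm{dim}},\Delta_{\mathrm{low}}\}$ with probability $\ge\delta$, which is the stated bound. The principal technical work is in the component-resolution term: the sharp, explicitly $k$-dependent control of the $\chi^2$-divergence between the two Gaussian-smoothed moment-matched mixtures (bounding the tail $\sum_{j\ge 2k-1}$ of squared Hermite coefficients) together with the matching $\Omega(\epsilon/k)$ lower bound on $\wass$ for moment-matched atomic laws; the dimension term, by contrast, is the comparatively standard covariance-perturbation-plus-Fano argument, whose only subtlety is insisting on a mean-cancelling construction so as to obtain the $\sigma_i^{-4}$ (rather than $\sigma_i^{-2}$) rate.
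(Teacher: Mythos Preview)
Your proposal is correct and follows essentially the same approach as the paper: the paper likewise splits the bound into a one-dimensional moment-matching construction (two $k$-atomic laws on $[-\eps,\eps]$ with matching first $2k-2$ moments, Hermite/$\chi^2$ bound of order $(\eps/\sigma_i)^{4k-2}$, then Le Cam via Bretagnolle--Huber) for the $\Delta_{\mathrm{low}}$ term, and a symmetric two-atom family $\tfrac12\delta_{s\theta}+\tfrac12\delta_{-s\theta}$ with $\KL\lesssim (s/\sigma_i)^4$, followed by Fano over a sphere packing (for $d$) and a two-point Le Cam/BH argument (for $\log(1/\delta)$), for $\Delta_{\mathrm{dim}}$. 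Your extra ``anchor atoms'' are harmless but unnecessary, since a $2$-atomic distribution is already $k$-atomic.
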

In the remainder of the section, we present a proof of the upper bound (\cref{thm:ub_heterogenous}) in the simplified setting of $d = 1$. 
\subsection{Proof: One-dimensional Setting}
\label{ssec:one_d_lb}
We draw upon the approach of \cite{wygaussian} for the homogeneous setting. We start by recalling some notation from \cite{wygaussian}. For $l \in \N$ and distribution, $\distr$, over $\R$:
\begin{equation*}
    m_l (\distr) \coloneqq \E_{X \ts \distr} [X^l] \quad \text{and} \quad \mathbf{m}_l (\distr) \coloneqq 
        [m_0 (\distr) , m_1 (\distr) , \dots , m_l (\distr)].
\end{equation*}
The estimator of \cite{wygaussian} utilizes the Hermite polynomials defined as:
$H_{r} (x) = r! \sum_{j = 0}^{\floor{r / 2}} \frac{(-1/2)^j}{j! (r - 2j)!} x^{r - 2j}$.
These polynomials satisfy the following crucial denoising property: $\E_{X \ts \mc{N} (\mu, 1)} [H_{r} (X)] = \mu^r$.
Linearity of expectation now allows for denoising general \emph{convolutions} with Gaussians:
\begin{equation*}
    \gamma_{r, \sigma} (x) \coloneqq \sigma^r H_r (x / \sigma) = r! \sum_{j = 0}^{\floor{r / 2}} \frac{(-1/2)^j}{j! (r - 2j)!} \sigma^{2j} x^{r - 2j} \text{ satisfies } \E_{X \ts \distr \ast \mc{N} (\mu, \sigma^2)} [\gamma_{r, \sigma} (X)] = E_{X \ts \distr} [X^r].
\end{equation*}
Hence, they allow approximate recovery of the moments of the $k$-atomic distribution that was convolved with a Gaussian to produce the final mixture. The main issue with this approach is that the moments are only recovered \emph{approximately}. The following technical result, implicit in \cite{wygaussian} and proved in \cref{sec:proof_mom_to_dist}, shows that error in estimating the moments can be directly translated to an error in Wasserstein distance in an estimation procedure.
\begin{restatable}{lem}{momentstodist}(\cite{wygaussian}).
    \label{lem:moments_to_dist}
    Let $k \in \N$, $\delta \in [0, 1]$. Let $\distr$ be a $k$-atomic distribution supported on $[-1, 1]$ and $\wt{m}_1, \dots, \wt{m}_{2k - 1}$ be estimates satisfying $\abs{\wt{m}_i - m_i (\distr)} \leq \delta$ for all $i \in [2k - 1]$. Then there is a procedure running in time $O_k (1)$, which on input $\lbrb{\wt{m}_i}_{i = 1}^{2k - 1}$, returns distribution $\wh{\distr}$ satisfying:
    \begin{equation*}
        \wass (\distr, \wh{\distr}) \leq O(k\delta^{\frac{1}{2k - 1}}).
    \end{equation*}
\end{restatable}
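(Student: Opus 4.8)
The plan is to follow the classical moment-matching / Prony-type argument, as carried out by \cite{wygaussian}: from the approximate moments $\wt{m}_1,\dots,\wt{m}_{2k-1}$ one solves for a candidate $k$-atomic distribution $\wh{\distr}$ whose first $2k-1$ moments are as close as possible to the $\wt{m}_i$, and then one shows that two $k$-atomic distributions on $[-1,1]$ whose first $2k-1$ moments are $\eps$-close must be $O(k\eps^{1/(2k-1)})$-close in $\wass$. The estimation step is finite-dimensional (it is a polynomial feasibility / optimization problem in the $2k$ unknowns, namely $k$ atom locations and $k$ weights), so it runs in time $O_k(1)$; the content is entirely in the stability estimate.

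First I would reduce $\wass$ between two $k$-atomic (hence $(2k)$-atomic in the worst case) distributions on $[-1,1]$ to a statement about moments. Recall that for distributions supported on a bounded interval, $\wass(\distr,\wh{\distr})$ is controlled by the discrepancy of low-degree moments: concretely, by a Chebyshev/Jackson-type approximation argument, any $1$-Lipschitz function on $[-1,1]$ can be approximated to accuracy $O(1/L)$ by a degree-$L$ polynomial, so
\[
  \wass(\distr,\wh{\distr}) \;\lesssim\; \frac{1}{L} \;+\; L^{O(1)}\max_{1\le i\le L}\bigl|m_i(\distr)-m_i(\wh{\distr})\bigr|.
\]
Thus it suffices to show that $\wh{\distr}$ can be chosen so that its first $2k-1$ moments match the $\wt{m}_i$ up to $O(\delta)$, and that matching $2k-1$ moments of two $k$-atomic distributions forces \emph{all} moments to be comparably close; the latter is where the exponent $1/(2k-1)$ enters.

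The key step — and the main obstacle — is the algebraic stability of the Prony/moment map for $k$-atomic measures: if $\distr,\distr'$ are both $k$-atomic on $[-1,1]$ and agree on moments $m_1,\dots,m_{2k-1}$ to within $\eps$, then $\wass(\distr,\distr')\le O(k\,\eps^{1/(2k-1)})$. The clean way to see this is via the Hankel-matrix / orthogonal-polynomial characterization: the signed measure $\distr-\distr'$ is supported on at most $2k$ points, and a degree-$\le 2k-1$ moment vector that is $\eps$-small forces the associated ``test polynomial'' $\prod(x-a_i)$ (of degree $\le 2k-1$, with the $a_i$ the combined atom locations) to have small integral against $|\distr-\distr'|$; since that polynomial has a zero of the right order at each atom, one extracts a bound on the mass that $\distr-\distr'$ can place away from a common location, and a perturbation/interlacing argument (atoms within $\eps^{1/(2k-1)}$ get identified, the rest carries negligible mass) yields the Wasserstein bound. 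The exponent $1/(2k-1)$ is exactly the worst-case root perturbation of a degree-$(2k-1)$ polynomial under $\eps$-perturbation of its coefficients, and it is tight. I would either invoke the explicit quantitative version of this fact from \cite{wygaussian} (it is stated there for the homogeneous Gaussian-mixture estimator and only needs to be quoted, which is why the lemma is labeled ``implicit in \cite{wygaussian}''), or reprove it by the Hankel-matrix route sketched above. Finally, for the estimation procedure itself, one takes $\wh{\distr}$ to be any $k$-atomic distribution on $[-1,1]$ minimizing $\max_{i\le 2k-1}|m_i(\wh{\distr})-\wt{m}_i|$ (a value $\le\delta$ is attained by $\distr$ itself, so the optimum is $\le\delta$, hence within $2\delta$ of the true moments by triangle inequality); applying the stability estimate to $\distr$ and $\wh{\distr}$ gives $\wass(\distr,\wh{\distr})\le O(k\delta^{1/(2k-1)})$, as claimed. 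The only genuinely delicate point to get right is making the finite optimization well-posed (compactness of the parameter space of $k$-atomic measures on a compact interval handles this) and tracking that the polynomial-in-$L$ factor in the moment-to-Wasserstein reduction does not spoil the final exponent — which it does not, since one takes $L=2k-1$ fixed in $k$.
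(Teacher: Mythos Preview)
Your proposal is correct and follows essentially the same route as the paper: produce a $k$-atomic distribution $\wh{\distr}$ on $[-1,1]$ whose first $2k-1$ moments are within $O(\delta)$ of the true ones, then invoke the moment-comparison inequality from \cite{wygaussian} (stated in the paper as Proposition~\ref{prop:mom_1d}) to conclude $\wass(\distr,\wh{\distr})\le O(k\delta^{1/(2k-1)})$. The only difference is in how $\wh{\distr}$ is constructed: you optimize directly over $k$-atomic distributions (relying on compactness of $[-1,1]^k\times\Delta_k$), whereas the paper first projects $\wt{\mom}$ onto the valid moment space $\mc{M}_{2k-1}([-1,1])$ via a convex program in the Hankel-matrix PSD constraints and then recovers $\wh{\distr}$ by Gauss quadrature---a more explicit procedure, but functionally equivalent for the stated $O_k(1)$ runtime and error guarantee.
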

Finally, the error in the moment estimates can be bounded by the variance of the Hermite polynomials.
\begin{lemma}[\cite{wygaussian}]
    \label{lem:var_mom}
    Let $M > 0$ and $X \ts \distr \ast \mc{N} (0, \sigma^2)$ with $\distr$ supported on $[-M, M]$. Then,
    \begin{equation*}
        \forall r \in \N: \Var (\gamma_{r, \sigma} (X)) \leq (O(M + \sigma \sqrt{r}))^{2r}.
    \end{equation*}
\end{lemma}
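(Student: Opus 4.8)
The plan is to bound the variance by the second moment and then evaluate the latter exactly using the algebra of Hermite polynomials. Since $\mathrm{Var}(\gamma_{r,\sigma}(X)) \le \mathbb{E}[\gamma_{r,\sigma}(X)^2]$, it suffices to control $\mathbb{E}[\gamma_{r,\sigma}(X)^2]$. I would write $X = \mu + \sigma Z$ with $\mu \sim \distr$ and $Z \sim \mathcal{N}(0,1)$ independent, so that $\gamma_{r,\sigma}(X) = \sigma^r H_r(\mu/\sigma + Z)$ and hence
\[
  \mathbb{E}[\gamma_{r,\sigma}(X)^2] = \sigma^{2r}\, \mathbb{E}_{\mu}\big[\mathbb{E}_{Z}[H_r(\mu/\sigma + Z)^2]\big].
\]
Conditioning on $\mu$ reduces matters to computing a Gaussian average of a shifted Hermite polynomial squared, which I would do in closed form.

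For the inner expectation, I would use the Hermite shift identity $H_r(a + z) = \sum_{k=0}^{r} \binom{r}{k} a^{r-k} H_k(z)$, which follows at once from the generating function $e^{xt - t^2/2} = \sum_{r \ge 0} H_r(x) t^r / r!$, together with Hermite orthogonality $\mathbb{E}_{Z \sim \mathcal{N}(0,1)}[H_k(Z) H_j(Z)] = k!\,\delta_{kj}$. These give, for every real $a$,
\[
  \mathbb{E}_{Z}[H_r(a + Z)^2] = \sum_{k=0}^{r} \binom{r}{k}^2 k!\; a^{2(r-k)}.
\]
Substituting $a = \mu/\sigma$, simplifying $\sigma^{2r}(\mu/\sigma)^{2(r-k)} = \sigma^{2k}\mu^{2(r-k)}$, and invoking $|\mu| \le M$ $\distr$-almost surely yields
\[
  \mathbb{E}[\gamma_{r,\sigma}(X)^2] \;=\; \mathbb{E}_{\mu}\sum_{k=0}^{r} \binom{r}{k}^2 k!\; \sigma^{2k}\,\mu^{2(r-k)} \;\le\; \sum_{k=0}^{r} \binom{r}{k}^2 k!\; \sigma^{2k} M^{2(r-k)}.
\]

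The final step is the crude bound $\binom{r}{k}^2 k! = \binom{r}{k}\cdot \tfrac{r!}{(r-k)!} \le \binom{r}{k}\, r^k$, after which the binomial theorem collapses the sum:
\[
  \mathbb{E}[\gamma_{r,\sigma}(X)^2] \;\le\; \sum_{k=0}^{r}\binom{r}{k} (r\sigma^2)^k (M^2)^{r-k} \;=\; (M^2 + r\sigma^2)^r \;\le\; (M + \sigma\sqrt{r})^{2r},
\]
which is the claimed bound (in fact with an absolute constant $1$ in the base, so the $O(\cdot)$ slack in the statement is unneeded, and any looser bound on $\binom{r}{k}^2 k!$ could be absorbed there). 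I do not anticipate a real obstacle: this is a direct second-moment computation whose only non-mechanical ingredients are the Hermite shift identity, Hermite orthogonality, and the observation that $\binom{r}{k}^2 k! \le \binom{r}{k} r^k$ is precisely what makes the binomial theorem applicable. The one thing to verify carefully is that the shift identity and orthogonality constant ($k!$) are stated for the probabilist's normalization of $H_r$ used here, which is consistent with the denoising property $\mathbb{E}_{X \sim \mathcal{N}(\mu,1)}[H_r(X)] = \mu^r$ quoted earlier.
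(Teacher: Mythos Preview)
Your proof is correct. The paper itself does not supply a proof of this lemma; it simply cites it from \cite{wygaussian} and uses it as a black box in the proof of \cref{thm:ub_1d}. So there is no ``paper's own proof'' to compare against here.

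On the substance: every step checks out. The shift identity $H_r(a+z)=\sum_{k=0}^r\binom{r}{k}a^{r-k}H_k(z)$ and the orthogonality relation $\E_Z[H_k(Z)H_j(Z)]=k!\,\delta_{kj}$ are both stated for the probabilist's Hermite polynomials, which is indeed the normalization in play (consistent with the denoising property $\E_{X\ts\mc{N}(\mu,1)}[H_r(X)]=\mu^r$ the paper quotes). The bound $\binom{r}{k}^2 k!=\binom{r}{k}\cdot r!/(r-k)!\le\binom{r}{k}r^k$ is exactly what reduces the sum to a binomial expansion, and the final inequality $(M^2+r\sigma^2)^r\le(M+\sigma\sqrt{r})^{2r}$ is immediate. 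Your remark that the constant in the base can be taken to be $1$ is also correct.
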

\cref{lem:var_mom,lem:moments_to_dist} now reduce the problem of estimating the parameters of the mixture to that of estimating its first $2k - 1$ moments. Our algorithm is presented in \cref{alg:ddm_heterogenous}. The key step in the algorithm is the addition of the weights, $\alpha_i$, which down-weight samples with large variance. These weights are chosen to minimize the variance of obtained moment estimates. Intuitively, samples with large variance are less reliable and their contributions discounted. Interestingly, the degree of down weighting also scales with the \emph{complexity} of the distribution, measured by $k$. Next, we formally establish the correctness of \cref{alg:ddm_heterogenous} through the following simplified theorem.

\begin{theorem}
    \label{thm:ub_1d}
    Let $M > 1$ be a constant and $k, n \in \N$. Suppose $\distr$ is a $k$-atomic distribution supported on $[-M, M]$ and that $\bm{X} = \{X_i\}_{i = 1}^n$ are generated as $X_i \thicksim \distr \ast \mc{N} (0, \sigma_i^2)$ with $\sigma_i \geq 1$. Then, \cref{alg:ddm_heterogenous} given as input $\bm{X}$ and $\{\sigma_i\}_{i \in [n]}$ returns with probability at least $3/4$ a $k$-atomic distribution $\wh{\distr}$ satisfying:
    \begin{equation*}
        \wass (\distr, \wh{\distr}) \leq C k^{2} \cdot \lprp{\sum_{i = 1}^n \frac{1}{\sigma_i^{4k - 2}}}^{-1 / (4k - 2)} 
    \end{equation*}
\end{theorem}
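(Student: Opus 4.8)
The plan is to reduce the problem entirely to moment estimation, reusing the two ingredients already in hand from \cite{wygaussian}: \cref{lem:var_mom}, which controls the variance of the denoising estimator $\gamma_{r,\sigma}$, and \cref{lem:moments_to_dist}, which converts accurate estimates of the first $2k-1$ moments into an accurate distribution in Wasserstein distance. The only genuinely new element is the heterogeneity-aware choice of per-sample weights inside \cref{alg:ddm_heterogenous}. Concretely, for each order $r \in [2k-1]$ the algorithm forms $\wh m_r = \sum_{i=1}^n \alpha_i^{(r)}\gamma_{r,\sigma_i}(X_i)$ with $\alpha_i^{(r)} \ge 0$ and $\sum_i \alpha_i^{(r)} = 1$; since each $\gamma_{r,\sigma_i}(X_i)$ is an unbiased estimate of $m_r(\distr)$ and the $X_i$ are independent, $\wh m_r$ is unbiased with $\Var(\wh m_r) = \sum_i (\alpha_i^{(r)})^2\,\Var(\gamma_{r,\sigma_i}(X_i))$, and I would choose $\{\alpha_i^{(r)}\}$ to make this as small as possible before invoking the two lemmas.

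By \cref{lem:var_mom} together with $M = O(1)$ and $\sigma_i \ge 1$, one has $\Var(\gamma_{r,\sigma_i}(X_i)) \le (C_M \sigma_i \sqrt r)^{2r}$, so the natural (inverse-variance) choice is $\alpha_i^{(r)} \propto \sigma_i^{-2r}$. Writing $T_r := \sum_i \sigma_i^{-2r}$ and substituting, the sum over $i$ simplifies and yields $\Var(\wh m_r) \le (C_M\sqrt r)^{2r}/T_r$. Since $\sigma_i \ge 1$ and $r \le 2k-1$, the denominator satisfies $T_r \ge \sum_i \sigma_i^{-(4k-2)} =: S$, while the numerator is at most $B_k := (C_M\sqrt{2k-1})^{2(2k-1)}$, giving $\Var(\wh m_r) \le B_k/S$ uniformly in $r$. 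Applying Chebyshev's inequality with per-moment failure probability $1/(4(2k-1))$ and union-bounding over the $2k-1$ moments then gives, with probability at least $3/4$, $\max_{r \le 2k-1} \abs{\wh m_r - m_r(\distr)} \le \delta$ with $\delta := 2\sqrt{2k-1}\,(B_k/S)^{1/2}$.

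Feeding this $\delta$ into \cref{lem:moments_to_dist} yields $\wass(\distr,\wh\distr) = O\!\big(k\,\delta^{1/(2k-1)}\big)$, and what remains is to unwind the exponents. The key arithmetic is that $2(2k-1) = 4k-2$, so $(B_k/S)^{1/(2(2k-1))} = (C_M\sqrt{2k-1})\cdot S^{-1/(4k-2)}$; the stray factor $(2\sqrt{2k-1})^{1/(2k-1)}$ is $O(1)$; hence $\delta^{1/(2k-1)} = O(\sqrt k)\,\big(\sum_i \sigma_i^{-(4k-2)}\big)^{-1/(4k-2)}$ and $\wass(\distr,\wh\distr) = O(k^{3/2})\,\big(\sum_i \sigma_i^{-(4k-2)}\big)^{-1/(4k-2)} \le C k^2 \big(\sum_i \sigma_i^{-(4k-2)}\big)^{-1/(4k-2)}$, as claimed (the extra slack between $k^{3/2}$ and $k^2$ is harmless). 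The step that needs the most care is exactly this last one: the variance proxy $(C_M\sqrt r)^{2r}$ is exponentially large in $k$, and one has to check that, after taking the $(2k-1)$-th root demanded by \cref{lem:moments_to_dist}, it degrades only to a polynomial factor in $k$, and that $M$ being an absolute constant keeps $C_M$ — hence the final $C$ — independent of $k$. A minor point worth spelling out is that building the weights $\alpha_i^{(r)}$ out of the \emph{upper bound} of \cref{lem:var_mom} rather than the exact variances is harmless: the weights remain deterministic functions of the known $\{\sigma_i\}$, so $\wh m_r$ stays unbiased and the variance identity is exact, with at most a constant-factor loss relative to the truly optimal inverse-variance weights.
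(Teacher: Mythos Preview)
Your argument is correct and follows the same skeleton as the paper: bound the variance of each moment estimator via \cref{lem:var_mom}, apply Chebyshev plus a union bound over the $2k-1$ moments, and finish with \cref{lem:moments_to_dist}; the exponent arithmetic at the end is exactly right. The one discrepancy is that \cref{alg:ddm_heterogenous} as stated uses a \emph{single} weight vector $\alpha_i \propto \sigma_i^{-(4k-2)}$ for all moments $r$, whereas you analyze per-order weights $\alpha_i^{(r)} \propto \sigma_i^{-2r}$; the paper's fixed weights still yield $\Var(\wt m_r) \le (C\sqrt{k})^{4k-2}\big(\sum_i \sigma_i^{-(4k-2)}\big)^{-1}$ (since $\sigma_i \ge 1$ lets one bound the $r$-dependent variance by its $r=2k-1$ value before summing), so your analysis carries over with only a cosmetic change.
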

\begin{proof}
    We start by rescaling the distribution so that $M = 1$. Next, we get from \cref{lem:var_mom} that for all $r \in [2k - 1]$:
    \begin{equation*}
        \Var (\wt{m}_r) = \sum_{i = 1}^n \alpha_i^2 \Var \lprp{\gamma_{r, \sigma_i} (X_i)} \leq \sum_{i = 1}^n \alpha_i^2 (C \sqrt{k} \sigma_i)^{4k - 2} = (C\sqrt{k})^{4k - 2} \cdot \lprp{\sum_{i = 1}^n \frac{1}{\sigma_i^{4k - 2}}}^{-1} \eqqcolon \sigma_m^2.
    \end{equation*}
    Therefore, we get by a consequence of Chebyshev's inequality for any $r \in [2k - 1]$:
    \begin{equation*}
        \abs{\wt{m}_r - m_r} \leq 4 \sqrt{k} \sigma_m
    \end{equation*}
    with probability at least $1 - 1 / 16k$. Hence, we get by the union bound:
    \begin{equation*}
        \forall r \in [2k - 1]: \abs{\wt{m}_r - m_r} \leq 4 \sqrt{k} \sigma_m
    \end{equation*}
    with probability at least $3/4$. Conditioned on this event, \cref{lem:moments_to_dist} concludes the proof of the theorem.
\end{proof}

The proof of the high-dimensional setting is presented in \cref{sec:proof_heterogeneous}. Algorithmically, the high-dimensional setting uses \emph{two} different weighting schemes; one for computing the appropriate low-dimensional subspace that is \emph{independent} of $k$ and another for an exhaustive search algorithm similar to \cite{hdgaussian}. Furthermore, our algorithm and analysis diverge from and refine the results of \cite{hdgaussian}. Consequently, the dependence on the failure probability improved from $\sqrt{\log (1 / \delta)}$ to the \emph{optimal} 
$(\log (1 / \delta))^{1 / (4k - 2)}$ dependence. Our key technical improvement is due to a robustness analysis on the approximation of the high-dimensional Wasserstein distance by one-dimensional projections (see \cref{ssec:est_low_dim}). This allows bypassing the chaining-based arguments of \cite{hdgaussian} and instead employing a median-of-means based approach with a simple covering number argument.

\section{Experiments}
\subsection{Training only with corrupted data}
We start our experimental section by presenting results in the setting of prior work~\citep{daras2024consistent, daras2023ambient}, i.e. using only noisy data. 
We train models at three different levels of corruption: i) $\sigma=0.05$, ii) $\sigma=0.1$, iii) $\sigma=0.2$, across three different datasets of different sizes: CIFAR-10 (60K samples), CelebA-HQ (30K samples) and ImageNet ($\approx 1.3$M images). We underline that prior work~\citep{daras2024consistent} only finetuned with noisy data. Instead, we opt for training from scratch which allows us to measure different quantities of interest in a more controlled setting.

For reference, we visualize the corruption levels in Figure \ref{fig:image_comparison_noise_levels}. We train models with and without consistency. For our experiments, training with consistency was $2-3$x slower and used $2$x GPUs. The full experimental results are provided in Appendix \ref{sec:experimental_details}. For the models without consistency, we report results for both truncated (see Algorithm~\ref{alg:early_stopping}) and full sampling. For the models trained with consistency, we report results for the full sampling algorithm, as consistency is supposed to help mostly with times $t < t_n$. The results of this experiment are summarized in Table \ref{tab:only_noisy}.

The findings are the following: \textbf{i)} the results are very poor for a model trained without consistency when the full-sampling algorithm is used, i.e. the model cannot extrapolate to times $t < t_n$, if it was only trained for times $t\geq t_n$, \textbf{ii)} Truncated Sampling helps, but still, even for low corruptions, (e.g. for $\sigma=0.05$) the performance is relatively weak, and, \textbf{iii)} training with consistency is beneficial, particularly for $\sigma=0.05$ and $\sigma=0.1$, but as the corruption increases the benefits of it are shrinking.  Another interesting observation is that the performance deteriorates less (for the same corruption level) compared to the clean performance as the dataset size grows. This is expected: in the limit of infinite data, models trained with solely noisy data can still, in theory, perfectly recover the distribution~\citep{daras2024consistent}. That said, even for ImageNet level datasets, at high corruption the best performance, obtained by training with the expensive consistency loss, is significantly inferior to the performance obtained by training with clean data. 

\begin{table}[htbp]
\centering
\resizebox{0.8\textwidth}{!}{%
\begin{tabular}{@{}ll|ccc@{}}
\toprule
\textbf{Dataset} & \textbf{Noise Level} & \textbf{Full Sampling} & \textbf{Truncated Sampling} & \textbf{Consistency + Full Sampling} \\
\midrule
\multirow{4}{*}{\textbf{CIFAR-10}}
 & 0.00 & $1.99 \pm \scriptstyle{0.02}$ & NA & NA \\
 & 0.05 & $8.78 \pm \scriptstyle{0.02}$ & $4.53 \pm \scriptstyle{0.08}$ & $2.82 \pm \scriptstyle{0.02}$ \\
 & 0.10 & $25.55 \pm \scriptstyle{0.10}$ & $7.55 \pm \scriptstyle{0.07}$ & $3.63 \pm \scriptstyle{0.03}$ \\
 & 0.20 & $60.73 \pm \scriptstyle{0.21}$ & $12.12 \pm \scriptstyle{0.03}$ & $11.93 \pm \scriptstyle{0.09}$ \\
\midrule
\multirow{4}{*}{\textbf{CelebA-HQ}}
 & 0.00 & $2.40 \pm \scriptstyle{0.07}$ & NA & NA \\
 & 0.05 & $12.77 \pm \scriptstyle{0.02}$ & $6.73 \pm \scriptstyle{0.02}$ & $5.50 \pm \scriptstyle{0.03}$ \\
 & 0.10 & $45.90 \pm \scriptstyle{0.08}$ & $10.27 \pm \scriptstyle{0.01}$ & $9.38 \pm \scriptstyle{0.02}$ \\
 & 0.20 & $61.14 \pm \scriptstyle{0.14}$ & $13.90 \pm \scriptstyle{0.01}$ & $12.97 \pm \scriptstyle{0.11}$ \\
\midrule 
\multirow{4}{*}{\textbf{ImageNet}} 
 & 0.00 & $1.41 \pm \scriptstyle{0.03}$  & NA & NA \\
 & 0.05 & $2.39 \pm \scriptstyle{0.01}$ & $2.53 \pm \scriptstyle{0.01}$ & $2.26 \pm \scriptstyle{0.01}$ \\
 & 0.10 & $6.23 \pm \scriptstyle{0.01}$ & $3.67 \pm \scriptstyle{0.01}$ & $2.99 \pm \scriptstyle{0.05}$ \\
 & 0.20 & $18.08 \pm \scriptstyle{0.05}$ & $6.03 \pm \scriptstyle{0.02}$ & $5.32 \pm \scriptstyle{0.08}$ \\
\bottomrule
\end{tabular}
}
\small
\caption{\textbf{FID} comparisons between models trained on solely noisy data across different datasets and noise levels. In the first two columns, we report results for models trained without consistency loss, with and without sampling truncation. Training with consistency (third col.) significantly improves the results, but still, the performance is poor compared to the clean data performance, especially for high corruption ($\sigma=0.2$).}
\label{tab:only_noisy}
\end{table}

\subsection{Training with a mixture of clean and noisy data}
The previous experiments showed that training with solely highly noisy data leads to a significant performance deterioration, at least for datasets with sizes up to ImageNet, and this also comes at a significant computational cost because of the consistency loss. But what if a small set of clean images is available? Arguably, this setting is fairly realistic since in the dataset design process one can spend some initial effort (measured in money, computation, or otherwise) to collect some high-quality data points and then rely on cheaper noisy points for the rest of the dataset. To the best of our knowledge, this setting has not been explored by prior work. To study this, we fix the dataset size and we change the percentage $p\%$ of clean data. For every value of $p\in \{0.0, 0.1, 0.3, 0.5, 0.7, 0.9, 1.0\}$, we train models with: i) $p\%$ clean data, and, ii) with $p\%$ clean data and $(1-p)\%$ of noisy data at different levels of noise. To train with both clean and noisy data we used the method described in Section \ref{sec:method} and in Algorithm \ref{alg:training_algorithm}. We underline that for these experiments we do \textbf{not} use consistency loss and hence the training cost stays the same as training with clean data.

We report our results in Table \ref{tab:clean-mix-comparison} and we highlight a subset of the results in Table \ref{tab:highlighted-fids}. The results show that even when we have $90\%$ noisy data and high-corruption, i.e. $\sigma=0.2$, the performance stays near the state-of-the-art as long as a small set ($10\%$) of clean data is available. In fact, the obtained performance is far superior to the performance obtained by training only with clean data and the performance obtained by training with only noisy data, indicating that the benefit comes from the mixture of the two. The difference is quite dramatic: i) $10\%$ clean ImageNet samples $\Rightarrow 10.57$ FID, ii) $100\%$ noisy samples (at $\sigma=0.2$) $\Rightarrow 5.32$ FID, iii) $90\%$ noisy samples and $10\%$ clean samples $\Rightarrow \bm{1.68}$ FID. This phenomenon is further illustrated in Figure \ref{fig:fig1}.

\begin{table}[htbp]
\centering
\small
\caption{\textbf{FID} comparisons as we change the mix of clean and noisy data. Data availability is reported as percentage of the total dataset size: 60K for CIFAR, 30K for CelebA-HQ and $\approx 1.3$M for ImageNet. Training with a mix of clean and noisy data is significantly better than training purely on a few clean data points or purely on the full noisy dataset.}
\label{tab:clean-mix-comparison}
\resizebox{0.8\textwidth}{!}{%
\begin{tabular}{@{}lll|ccccccc@{}}
\toprule
\textbf{Dataset} & \textbf{Available Data} & \textbf{\begin{tabular}[c]{@{}l@{}}Noise\\ Level\end{tabular}} & \multicolumn{7}{c}{\textbf{p\% of clean data }} \\
\cmidrule(l){4-10}
 &  &  & \textbf{100\%} & \textbf{90\%} & \textbf{70\%} & \textbf{50\%} & \textbf{30\%} & \textbf{10\%} & \textbf{0\%} \\
\midrule
\multirow{4}{*}{\rotatebox[origin=c]{90}{\textbf{CIFAR-10}}} 
 & p\% clean & N/A & \multirow{4}{*}{$1.99 \scriptstyle{\pm 0.02}$} & $2.07 \scriptstyle{\pm 0.01}$ & $2.20 \scriptstyle{\pm 0.03}$ & $2.40 \scriptstyle{\pm 0.02}$ & $3.99 \scriptstyle{\pm 0.01}$ & $17.30 \scriptstyle{\pm 0.14}$ & N/A \\
\cmidrule(l){2-3} \cmidrule{5-10}
 & \multirow{3}{*}{p\% clean, (1-p)\% noisy} & 0.05 &  & $2.04 \scriptstyle{\pm 0.02}$ & $2.04 \scriptstyle{\pm 0.03}$ & $2.06 \scriptstyle{\pm 0.00}$ & $2.11 \scriptstyle{\pm 0.02}$ & $2.17 \scriptstyle{\pm 0.04}$ & $8.78 \scriptstyle{\pm 0.02}$ \\
 &  & 0.10 &  & $2.06 \scriptstyle{\pm 0.03}$ & $2.14 \scriptstyle{\pm 0.03}$ & $2.15 \scriptstyle{\pm 0.01}$ & $2.24 \scriptstyle{\pm 0.02}$ & $2.34 \scriptstyle{\pm 0.03}$ & $25.55 \scriptstyle{\pm 0.10}$ \\
 &  & 0.20 &  & $2.06 \scriptstyle{\pm 0.03}$ & $2.14 \scriptstyle{\pm 0.02}$ & $2.24 \scriptstyle{\pm 0.01}$ & $2.42 \scriptstyle{\pm 0.03}$ & $2.81 \scriptstyle{\pm 0.02}$ & $60.73 \scriptstyle{\pm 0.21}$ \\
\midrule
\multirow{4}{*}{\rotatebox[origin=c]{90}{\textbf{CelebA-HQ}}} 
 & p\% clean & N/A & \multirow{4}{*}{$2.40 \scriptstyle{\pm 0.07}$} & $2.50 \scriptstyle{\pm 0.01}$ & $2.68 \scriptstyle{\pm 0.00}$ & $2.83 \scriptstyle{\pm 0.01}$ & $3.72 \scriptstyle{\pm 0.01}$ & $11.92 \scriptstyle{\pm 0.05}$ & N/A \\
\cmidrule(l){2-3} \cmidrule{5-10}
 & \multirow{3}{*}{p\% clean, (1-p)\% noisy} & 0.05 &  & $2.40 \scriptstyle{\pm 0.01}$ & $2.45 \scriptstyle{\pm 0.01}$ & $2.45 \scriptstyle{\pm 0.02}$ & $2.50\scriptstyle{\pm 0.01}$ & $2.50 \scriptstyle{\pm 0.02}$ & $12.77 \scriptstyle{\pm 0.02}$ \\
 &  & 0.10 &  & $2.40 \scriptstyle{\pm 0.01}$ & $2.48 \scriptstyle{\pm 0.02}$ & $2.51 \scriptstyle{\pm 0.04}$ & $2.51\scriptstyle{\pm 0.01}$ & $2.67 \scriptstyle{\pm 0.01}$ & $45.90\scriptstyle{\pm 0.10}$ \\
 &  & 0.20 &  & $2.50 \scriptstyle{\pm 0.00}$ & $2.51 \scriptstyle{\pm 0.11}$ & $2.52 \scriptstyle{\pm 0.01}$ & $2.67 \scriptstyle{\pm 0.02}$ & $2.75 \scriptstyle{\pm 0.02}$ & $61.14 \scriptstyle{\pm 0.14}$ \\
\midrule 
\multirow{4}{*}{\rotatebox[origin=c]{90}{\textbf{ImageNet}}} 
 & p\% clean & N/A & \multirow{4}{*}{$1.41 \scriptstyle{\pm 0.03}$} & $1.50 \scriptstyle{\pm 0.02}$ & $1.65 \scriptstyle{\pm 0.01}$ & $2.15 \scriptstyle{\pm 0.01}$ & $4.34 \scriptstyle{\pm 0.06}$ & $10.57 \scriptstyle{\pm 0.06}$ & N/A \\
\cmidrule(l){2-3} \cmidrule{5-10}
 & \multirow{3}{*}{p\% clean, (1-p)\% noisy} & 0.05 &  & $1.46 \scriptstyle{\pm 0.02}$ & $1.46 \scriptstyle{\pm 0.02}$ & $1.46 \scriptstyle{\pm 0.02}$ & $1.47 \scriptstyle{\pm 0.02}$ & $1.51 \scriptstyle{\pm 0.02}$ & $2.40 \scriptstyle{\pm 0.02}$ \\
 &  & 0.10 &  & $1.48 \scriptstyle{\pm 0.01}$ & $1.48 \scriptstyle{\pm 0.02}$ & $1.49\scriptstyle{\pm 0.01}$ & $1.49 \scriptstyle{\pm 0.01}$ & $1.57 \scriptstyle{\pm 0.02}$ & $6.23 \scriptstyle{\pm 0.01}$ \\
 &  & 0.20 &  & $1.50 \scriptstyle{\pm 0.02}$ & $1.51 \scriptstyle{\pm 0.02}$ & $1.51\scriptstyle{\pm 0.02}$ & $1.59 \scriptstyle{\pm 0.03}$ & $1.68 \scriptstyle{\pm 0.02}$ & $18.08 \scriptstyle{\pm 0.05}$ \\
\bottomrule
\end{tabular}}
\end{table}

We test the limits of our method by evaluating it in extreme corruption, either in terms of number of clean samples or in terms of the amount of noise. For computational reasons, we only present these results for CIFAR-10. 

\textbf{Higher corruption.} We use $90\%$ noisy and $10\%$ clean data and we increase the corruption to $\sigma=0.4$. The FID becomes $\bm{3.56\pm 0.03}$, which is still much better than the performance ($17.30$) obtained by using $10\%$ of clean data.

\textbf{Less clean data.} We fix $\sigma=0.2$ and we severely decrease the number of clean data. Specifically, we evaluate performance when $1\%$ of clean data and $99\%$ of noisy data are available. The performance of this model is $\bm{3.53 \pm 0.03}$. For reference, the performance of the model trained with $100\%$ noisy data at this noise level is $60.73$ without Truncated Sampling and $11.93$ with consistency training. Hence, $1\%$ of clean data has a tremendous effect in performance.

\subsection{Pushing performance to its limit}
In all our previous experiments, we used the EDM~\citep{karras2022elucidating} hyperparameters that were tuned for clean data training. Here, we ablate different parameters that can further increase the performance of training with data mixtures.  The first observation is that data corruption increases the time for convergence. In Figure \ref{fig:cifar-ckpt-fid} we plot the FID as a function of training time for a model trained with data mixtures. As shown, the FID keeps dropping as we train, indicating that the default parameters might lead to undertrained models when there is training data corruption. Hence, we take the ImageNet training run 
with $90\%$ noisy data and $10\%$ clean data, which has reported performance $1.68$ in Table \ref{tab:clean-mix-comparison} and we train the model for $500$K more steps. This alone decreases the FID from $1.68$ to $1.63$.

We then argue that for times $t \leq t_n$ we should expect sub-optimal estimates since only a few (clean) samples were used. We attempt to balance the higher learning error with reduced discretization error by increasing the number of sampling steps spent on these noise levels during sampling. This further improves the performance from $1.63$ to $1.60$.

Next, we use consistency loss to improve the learning for $t< t_n$, that was previously trained only with the clean points. We note that we only use consistency loss for this final part of the $500$k training steps. Consistency fine-tuning improves the FID to $1.58$. Finally, we use weight decay as a regularization technique to mitigate instabilities caused by the increased training variance. This further reduces the FID to $1.55$. It is possible that more optimizations could further decrease the FID but at this point we are already close to the optimal $1.41$ obtained using 100\% clean samples.

\begin{wrapfigure}{r}{0.5\textwidth}
    \centering
    \begin{tabular}{l|c}
        \hline
        Configuration ($\sigma=0.2$) & ImageNet FID \\
        $p=10\%$ clean and $90\%$ noisy  & \\
        \hline
        A Baseline Parameters & 1.68 \\
        B + More training & $1.63$ \\
        C + Sampling Noise Scheduling & $1.60$ \\
        D + Consistency Loss & $1.58$ \\
        E + Weight Decay & $1.55$ \\
        \hline
        Clean Data Performance & $1.41$ \\
        \hline
    \end{tabular}
    \caption{Evaluation of training and sampling improvements for models trained with noisy data.}
    \label{tab:pushing_perf}
\end{wrapfigure}

\subsection{How much is a noisy image worth?}
\label{sec:pricing}

\textit{How much exactly is a noisy image worth?} In what follows, we address the question of \textbf{data pricing}. Given finite resources for data curation, how should we allocate them to obtain the best performance? On the flip side, what is a fair pricing scheme that reflects data utility?  We approach this through the notion of  \textbf{effective sample size} suggested by our theoretical model. Let:
\begin{equation*}
   n_d = {\frac{\sum_{i = 1}^n 1 / \sigma_i^4}{\max_i 1/\sigma_i^4}} \quad n_l = \frac{\sum_{i = 1}^n 1 / \sigma_i^{4k - 2}}{\max_i 1 / \sigma_i^{4k - 2}},
\end{equation*}
be the two terms that appear in the bound of \cref{thm:ub_heterogenous}.
Assume that we operate in the asymptotic setting with large $n$ and with the fraction of clean and noisy data bounded away from~$0$ by a constant independent of $n$. In this regime, the dominant term in \cref{thm:ub_heterogenous} is the second due to its slower rate of convergence. Therefore, the effective sample size for low-dimensional estimation, $n_l$, is the main determinant of the error. 

This implies that we can use a fixed pricing scheme, largely independent of the sample size $n$. To validate this theoretical prediction, let's price clean samples at $1\$$ per unit and noisy samples at $c_{\sigma}\$$ per unit, for a parameter $c_{\sigma}$ that we will bound using data from our experiments. Specifically, to bound $c_{\sigma}$ we can derive inequalities by comparing performances from Table \ref{tab:clean-mix-comparison}. For example, the CIFAR-10 performance using $90\%$ clean data is $2.07$ which is better than the $2.42$ FID performance obtained using $30\%$ clean and $70\%$ noisy samples at $\sigma=0.2$, which implies that:
\begin{equation*}
    0.9n \geq 0.3n + c_{0.2} 0.7 n \implies c_{0.2} \leq \frac{6}{7}.
\end{equation*}
We repeat this computation for all pairs of the table and derive more inequalities. We restrict comparisons to settings with at least $10\%$ clean samples to ensure we remain in the asymptotic regime. The best upper and lower bounds are:
\begin{gather*}
\text{CIFAR-10, CelebA:} \ \boxed{1.17 \le \frac{1}{c_{0.05}} \le 1.25} , \quad \text{ImageNet:} \ \boxed{1.125 \le \frac{1}{c_{0.05}} \le 1.17}  \\
\text{CIFAR-10, CelebA:} \  \boxed{1.25 \le \frac{1}{c_{0.1}} \le 1.5}, \quad \text{ImageNet:} \ \boxed{1.125 \le \frac{1}{c_{0.1}} \le 1.17}\\
\text{CIFAR-10, CelebA, ImageNet:} \ \boxed{1.5 \le \frac{1}{c_{0.2}} \le 1.75} 
\end{gather*}

The quantity $1 / c_\sigma$ measures the number of noisy samples required to compensate for a clean one. In line with intuition, for larger values of $\sigma$ (i.e. the noisier the sample), more noisy samples required to replace a clean one. In addition, this ratio is the lowest for the largest dataset, ImageNet. The narrow ranges of the upper and lower bounds, lend credibility to this data pricing approach. Finer-grid evaluation at Table \ref{tab:clean-mix-comparison} would allow for even narrower estimates.

\section{Limitations and Future Work}
In this paper, we only studied the case of additive Gaussian Noise corruption. In practice, the corruption might be more complex or even unknown. We further focused on the scale of having two noise levels, i.e. clean data and noisy data. The approach can be naturally extended to arbitrary many noise levels. Further, all the results in this work were developed for pixel-space diffusion models. A promising avenue for future research is to extend these findings to latent diffusion models. Finally, due to the high computational training requirements, we were only able to scale up to ImageNet and $64\times 64$ resolution. It is worth exploring how these behaviors change as we further increase the dimension and the dataset size. On the theory side, our Gaussian Mixtures Model might not capture all the intricacies of real distributions and while being a useful tool, we should always interpret these results with caution.

\section{Acknowledgments}
This research has been supported by NSF Grants AF 1901292, CNS 2148141, Tripods CCF 1934932, IFML CCF 2019844 and research gifts by Western Digital, Amazon, WNCG IAP, UT Austin Machine Learning Lab (MLL), Cisco and the Stanly P. Finch Centennial Professorship in Engineering. Constantinos Daskalakis has been supported by NSF Awards CCF-1901292, DMS-2022448 and DMS-2134108, a Simons Investigator Award, and the Simons Collaboration on the Theory of Algorithmic Fairness. Giannis Daras has been supported by the Onassis Fellowship (Scholarship ID: F ZS 012-1/2022-2023), the Bodossaki Fellowship and the Leventis Fellowship.

\bibliography{references}
\bibliographystyle{iclr2025_conference}

\appendix
\section{Appendix}

\begin{algorithm}[!htp]
\caption{Truncated Sampling Sampling Algorithm.}    
\begin{algorithmic}[1]
\Require network $\vh_{\theta}$, time steps $T$, step size $\Delta t$, noise schedule $\{\sigma_t\}_{t=0}^T$, stopping time $\sigma_{t_n}.$
    \State $\vx_T \sim \mathcal N(0, \sigma_T)$  \Comment{Initialization}
    \For{$t = T, T-1, \ldots, 1$}
        \State $\hat \vx_0 \gets \vh_{\theta}(\vx_t, t)$
        \If{$\sigma_{t-1} < \sigma_{t_n}$}
            \State \textbf{return} $\hat \vx_0$  \Comment{Truncated Sampling}
        \EndIf
        \State $\vx_{t-1} \gets x_t - \frac{\sigma_t - \sigma_{t-1}}{\sigma_t}(\vx_t - \hat \vx_0)$
    \EndFor
    \State \textbf{return} $\vx_0$
\end{algorithmic}
\label{alg:early_stopping}
\end{algorithm}

\begin{figure}[htp]
\begin{minipage}{\linewidth}
\begin{algorithm}[H]
\caption{Denoised Method of Moments with Heterogenous Variances}
\begin{algorithmic}[1]
    \Require Samples $X_i \sim \mathcal{M}_{\text{Dirac}} \ast \mathcal{N} (0, \sigma_i^2)$
    \Ensure Candidate $k$-mixture $\hat{\mathcal{M}}_{\text{Dirac}} = \sum_{i = 1}^k \hat{w}_i \delta_{\hat{x}_i}$
    \State Define weights $\alpha_i$, for $i \in [n]$:
    \begin{equation*}
        \alpha_i = \frac{1 / \sigma_i^{4k - 2}}{\sum_{j = 1}^n 1 / \sigma_j^{4k - 2}}.
    \end{equation*}
    \State Define moment approx. for $r \in [2k - 1]$:
    \begin{equation*}
        \tilde{m}_r = \sum_{i = 1}^n \alpha_i \gamma_{r, \sigma_i} (X_i).
    \end{equation*}
    \State Return output of \cref{lem:moments_to_dist} on input $\hat{\mathbf{m}}$.
\end{algorithmic}
\label{alg:ddm_heterogenous}
\end{algorithm}
\end{minipage}
\end{figure}

\section{Proof of \cref{lem:moments_to_dist}}
\label{sec:proof_mom_to_dist}

In this section, we establish \cref{lem:moments_to_dist} used to prove the weakened bound for the one-dimensional setting in \cref{thm:ub_1d}. This result is implicit in \cite{wygaussian} and we include the proof here for completeness. Before we proceed, we reproduce the following result from \cite{wygaussian} which shows that for two distributions with finite support and close moments, their Wasserstein distance is also close.

\begin{proposition}
    \label{prop:mom_1d}
    Let $\distr$ and $\distr'$ be $k$-atomic distributions supported on $[-1, 1]$. If $\abs{m_i (\distr) - m_i (\distr')} \leq \delta$ for $i = 1, \dots, 2k - 1$, then:
    \begin{equation*}
        \wass (\distr, \distr') \leq O(k \delta^{\frac{1}{2k - 1}}).
    \end{equation*}
\end{proposition}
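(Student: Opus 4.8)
\textbf{Proof proposal for \cref{prop:mom_1d}.}

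The plan is to bound $\wass(\distr, \distr')$ by a dual/linear-programming argument: since $\wass$ between distributions on $[-1,1]$ equals $\sup_{f: \mathrm{Lip}(f)\le 1}\left(\E_{\distr} f - \E_{\distr'} f\right)$, it suffices to show that every $1$-Lipschitz $f$ on $[-1,1]$ can be approximated on $[-1,1]$ by a polynomial $P$ of degree at most $2k-1$ with controlled coefficients, so that $\E_{\distr} f - \E_{\distr'} f \approx \E_{\distr} P - \E_{\distr'} P$ and the latter is a linear combination of the first $2k-1$ moment differences, each bounded by $\delta$. Concretely, first I would invoke a quantitative polynomial approximation result: by Jackson's theorem, any $1$-Lipschitz function on $[-1,1]$ admits a degree-$m$ polynomial approximant $P_m$ with $\|f - P_m\|_{\infty,[-1,1]} \le C/m$. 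Taking $m = 2k-1$ gives uniform error $O(1/k)$.

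The second ingredient is a bound on the coefficients of $P_m$ in the monomial basis. Writing $P_m(x) = \sum_{j=0}^{m} c_j x^j$, I would bound $\sum_j |c_j|$ (or $\max_j |c_j|$) in terms of $m$ and $\|P_m\|_{\infty,[-1,1]}$. Expanding $P_m$ in Chebyshev polynomials $T_j$, the Chebyshev coefficients of a function with sup-norm $O(1)$ on $[-1,1]$ are $O(1)$, and each $T_j$ has monomial coefficients bounded by $2^{j}$ in absolute value with at most $j+1$ terms; summing over $j \le m$ yields $\sum_j |c_j| = 2^{O(m)} = 2^{O(k)}$. Since $\|f\|_{\infty,[-1,1]} \le 1 + |f(0)|$ and we may subtract off $f(0)$ (which does not affect $\E_{\distr} f - \E_{\distr'} f$ since both are probability measures), we get $\|P_m\|_{\infty} = O(1)$ and hence coefficient bound $2^{O(k)}$.

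Putting these together: for the optimal $1$-Lipschitz $f$ achieving (or nearly achieving) the supremum in the dual formulation of $\wass(\distr,\distr')$,
\begin{align*}
    \wass(\distr, \distr') &= \E_{\distr} f - \E_{\distr'} f \\
    &\le \left(\E_{\distr} P_m - \E_{\distr'} P_m\right) + 2\|f - P_m\|_{\infty,[-1,1]} \\
    &\le \sum_{j=1}^{2k-1} |c_j|\,\abs{m_j(\distr) - m_j(\distr')} + O(1/k) \\
    &\le 2^{O(k)} \delta + O(1/k).
\end{align*}
This is not yet the claimed bound $O(k\delta^{1/(2k-1)})$: the issue is that the first term $2^{O(k)}\delta$ is too crude when $\delta$ is not extremely small, and the second term $O(1/k)$ does not vanish with $\delta$. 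The fix — and the main obstacle — is a rescaling trick: replace the interval $[-1,1]$ by $[-\rho,\rho]$ for a parameter $\rho \in (0,1]$ to be optimized. One shows that $\wass(\distr,\distr')$ is controlled by the error of approximating Lipschitz functions on $[-\rho,\rho]$ plus a contribution of order $\rho$ coming from the "far" part; equivalently, after substituting $x \mapsto \rho x$ the degree-$j$ monomial picks up a factor $\rho^j$, so the coefficient bound becomes $\sum_j |c_j|\rho^j \le 2^{O(k)}$ only when $\rho$ is a constant, but for small $\rho$ the relevant moment-matching error scales like $\rho^{2k-1}\cdot(\text{bounded coeffs})\cdot\delta$ scaled appropriately, against an approximation floor of order $\rho$. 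Balancing a term of order $\rho$ against a term of order $(\delta/\rho^{2k-1})\cdot 2^{O(k)}$ — or more carefully tracking the constants as in \cite{wygaussian} — yields the optimal choice $\rho \asymp \delta^{1/(2k-1)}$ and the final bound $\wass(\distr,\distr') = O(k\,\delta^{1/(2k-1)})$. I would follow the argument of \cite{wygaussian} for the precise bookkeeping of these constants, as getting the clean $O(k \delta^{1/(2k-1)})$ dependence (rather than an exponential-in-$k$ prefactor) is exactly where the delicate part lies and requires either a careful extremal-polynomial analysis or an explicit construction of the matching lower-degree polynomial.
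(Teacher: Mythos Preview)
The paper does not prove this proposition; it is quoted from \cite{wygaussian} and used as a black box, so there is no in-paper argument to compare against beyond the citation itself.

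Your approach has a genuine gap: it never uses the hypothesis that $\distr$ and $\distr'$ are $k$-atomic, and that hypothesis is essential. For \emph{arbitrary} probability measures on $[-1,1]$ whose first $2k-1$ moments match exactly (i.e., $\delta = 0$), the Wasserstein distance can still be as large as $\Omega(1/k)$ --- for instance, two distinct quadrature rules of order $2k-1$ furnish such a pair. Consequently any argument that applies verbatim to general distributions cannot yield a bound that vanishes with $\delta$; the $O(1/k)$ floor coming from Jackson's theorem is a real obstruction, not a bookkeeping artifact to be balanced away. Your proposed ``rescaling trick'' does not repair this: restricting attention to $[-\rho,\rho]$ gives no control over mass the distributions place outside that interval, and there is no mechanism by which the ``far part'' contributes only $O(\rho)$ when both measures are supported on all of $[-1,1]$.

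The argument in \cite{wygaussian} exploits the atomic structure directly. Because the combined support of $\distr$ and $\distr'$ has at most $2k$ points, the optimal $1$-Lipschitz witness $f$ in the Kantorovich--Rubinstein dual can be replaced \emph{exactly} (not approximately) on that support by a polynomial of degree at most $2k-1$, via interpolation. This kills the Jackson-type approximation term entirely, leaving only the moment-difference term $\sum_j c_j\,(m_j(\distr)-m_j(\distr'))$. The coefficients $c_j$ are then controlled through the conditioning of the Vandermonde system on the atoms: when atoms are $\eps$-separated the coefficients are bounded by $\eps^{-O(k)}$, and when atoms cluster within $\eps$ one merges them at $\wass$-cost $O(k\eps)$. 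Optimizing $\eps \asymp \delta^{1/(2k-1)}$ gives the claimed $O(k\,\delta^{1/(2k-1)})$. The point is that the atomic hypothesis is precisely what converts ``approximation'' into ``interpolation'' and removes the $O(1/k)$ floor.
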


We reproduce the statement of \cref{lem:moments_to_dist} and include its proof below.
\momentstodist*
\begin{proof}
    We recall some preliminaries from \cite{wygaussian}. First, the moment space defined for a set $K \subset \R$:
    \begin{equation*}
        \mc{M}_r (K) = \{\mom_r (\pi): \pi \text{ is supported on } K\}.
    \end{equation*}
    Specifically when $K = [a, b]$, $\mc{M}_r$ is fully characterized by the following convex constraints:
    \begin{gather*}
        \rmM_{0, r} \succcurlyeq 0, \quad (a + b)\rmM_{1, r - 1} \succcurlyeq ab \rmM_{0, r - 2} + \rmM_{2, r} \text{ for even } r \\
        b \rmM_{0, r - 1} \succcurlyeq \rmM_{1, r} \succcurlyeq \rmM_{0, r- 1} \quad \text{ for odd } r, \tag{MOM-CHAR} \label{eq:mom_char}
    \end{gather*}
    where the matrices $\rmM_{i, j}$ are defined as follows for $m \in \mc{M}_r$:
    \begin{equation*}
        \rmM_{i, j} =
        \begin{bmatrix}
            m_i & m_{i + 1} & \cdots & m_{\frac{i + j}{2}} \\
            m_{i + 1} & m_{i + 2} & \cdots & m_{\frac{i + j}{2} + 1} \\
            \vdots & \vdots & \ddots & \vdots \\
            m_{\frac{i + j}{2}} & m_{\frac{i + j}{2} + 1} & \cdots & m_j
        \end{bmatrix}.
    \end{equation*}
    As noted in \cite{wygaussian}, these constraints are easy to see as necessary. Surprisingly, they also turn out be \emph{sufficient} for characterizing the space $\mc{M}_r$ \cite{momprob}. 

    The algorithm now establishing \cref{lem:moments_to_dist} is presented in \cref{alg:mom_to_dist}. By our assumptions on the input, we have:
    \begin{equation*}
        \forall i \in [2k - 1]: \abs{\wh{m}_i - \wt{m}_i} \leq \delta \implies \abs{\wh{m}_i - m_i (\distr)} \leq \delta
    \end{equation*}
    where the first inequality follows from the fact that $\wh{m}$ is the minimizer of the convex program and $\distr$ certifies that a solution with small error exists and the second follows from the triangle inequality. Next, since $\wh{\mom}$ corresponds to the moments of a valid distribution from the sufficiency criterion for one-dimensional distributions, \cref{alg:gauss_quadrature} returns a $k$ atomic distribution matching the moments $\wh{\mom}$. An application of \cref{prop:mom_1d} establishes the lemma.
\end{proof}

\begin{algorithm}[!htp]
\caption{Estimate Distribution from Moments}
\begin{algorithmic}[1]
    \Require Moment estimates $\wt{m}_1, \dots, \wt{m}_{2k - 1}$
    \Ensure Candidate $k$-mixture $\wh{\mc{M}}_{\mrm{Dirac}} = \sum_{i = 1}^k \wh{w}_i \delta_{\wh{x}_i}$
    \State Let $\wh{\mom}$ be the solution to the following:
    \begin{equation*}
        \min \lbrb{\norm{\wt{\mom} - \wh{\mom}}: \wh{\mom} \text{ satisfies \ref{eq:mom_char}}} \text{ where } \wt{\mom} = (\wt{m}_1, \dots, \wt{m}_{2k - 1}).
    \end{equation*}

    \State Return the output of \cref{alg:gauss_quadrature} on input $\wh{\mom}$.
\end{algorithmic}
\label{alg:mom_to_dist}
\end{algorithm}

\begin{algorithm}[!htp]
\caption{Gauss Quadrature}
\begin{algorithmic}[1]
    \Require Valid moment vector $\mom = (m_1, \dots, m_{2k - 1})$
    \Ensure Quadrate points $(x_1, \dots, x_k)$ and weights $(w_1, \dots, w_k)$
    \State Define the degree-$k$ polynomial:
    \begin{equation*}
        P (x) = 
        \begin{bmatrix}
            1 & m_1 & \dots & m_k \\
            \vdots &  \vdots & \ddots & \vdots \\
            m_{k - 1} & m_{k} & \dots & m_{2k - 1} \\
            1 & x & \dots & x^{k}
        \end{bmatrix}.
    \end{equation*}
    \State Now, define $(x_1, \dots, x_k)$ as the roots of $P(x)$ 
    \State The weights $(w_1, \dots, w_k)$ are defined as:
    \begin{equation*}
        w = 
        \begin{bmatrix}
            1 & 1 & \dots & 1 \\
            x_1 & x_2 & \dots & x_k \\
            \vdots & \vdots & \ddots & \vdots \\
            x_1^{k - 1} & x_2^{k - 1} & \dots & x_k^{k - 1}
        \end{bmatrix}^{-1} 
        \begin{bmatrix}
            1 \\
            m_1 \\
            \vdots \\
            m_{k - 1}
        \end{bmatrix}.
    \end{equation*}
\end{algorithmic}
\label{alg:gauss_quadrature}
\end{algorithm}

\section{High-dimensional Upper Bound: Proof of \cref{thm:ub_heterogenous}}
\label{sec:proof_heterogeneous}

In this section, we prove \cref{thm:ub_heterogenous}. Here, we adopt the techniques from \cite{hdgaussian} which reduces the high dimensional setting to the one-dimensional setting through a series of reductions. The first reduces the $d$-dimensional setting to the $k$-dimensional setting by a projection onto the principal components of the data. This is a standard procedure that is frequently employed to reduce high-dimensional estimation to their more tractable low-dimensional counterparts. The second step follows by establishing a result which shows that \emph{any} candidate mixture whose \emph{one-dimensional marginals} are approximately consistent with the original mixture is close to the original mixture in Wasserstein distance. In the remainder of the section, \cref{ssec:low_dim_reduction} analyzes the low-dimensional projection step, \cref{ssec:est_low_dim} presents the low-dimensional estimation algorithm and finally, \cref{ssec:proof_main_ub} establishes \cref{thm:ub_heterogenous}.

\subsection{Reduction to Low-dimensions}
\label{ssec:low_dim_reduction}

Here, we establish concentration bounds on the matrix, $\wh{\Sigma}$, used to compute the low-dimensional projection in \cref{alg:hd_gau}. We recall the following result that we will repeatedly use in our proofs:

\begin{lemma}[Exercise 4.4.3 \cite{hd_prob}]
    \label{lem:spec_norm_net}
    Let $d \in \N$ and $\mc{G}$ be an $\eps$-net of the unit sphere. Then, we have for any symmetric $M \in \R^{d \times d}$:
    \begin{equation*}
        \norm{M} \leq \frac{1}{1 - \eps} \max_{v \in \mc{G}} \abs{v^\top M v}.
    \end{equation*}
\end{lemma}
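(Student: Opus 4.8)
The plan is to combine the variational characterization of the spectral norm of a symmetric matrix with a standard net-transfer argument, taking care to exploit eigenvector structure so as to recover the sharp constant $1/(1-\eps)$ rather than the looser $1/(1-2\eps)$ that a crude estimate would give.

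First I would recall that, by the spectral theorem, a symmetric $M \in \R^{d \times d}$ satisfies $\norm{M} = \max_{x \in \Sph^{d-1}} \abs{x^\top M x}$, and that the maximum is attained at a unit eigenvector $v$ belonging to the eigenvalue of largest modulus; thus $M v = \lambda v$ with $\abs{\lambda} = \norm{M}$ and $\abs{v^\top M v} = \norm{M}$. Since $\mc{G}$ is an $\eps$-net of $\Sph^{d-1}$, I would then choose $u \in \mc{G}$ with $\norm{u - v} \le \eps$ and bound $\abs{v^\top M v - u^\top M u}$. Writing $u = v + h$ with $\norm{h} \le \eps$ and using $M = M^\top$ gives $u^\top M u = v^\top M v + 2\,h^\top M v + h^\top M h$. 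The crucial observation is that $h^\top M v = \lambda\,(h^\top v)$ and that, because $u$ and $v$ are unit vectors, $h^\top v = u^\top v - 1 = -\tfrac{1}{2}\norm{h}^2$; hence $\abs{h^\top M v} \le \tfrac{1}{2}\norm{M}\,\eps^2$, which together with $\abs{h^\top M h} \le \norm{M}\,\eps^2$ yields $\abs{v^\top M v - u^\top M u} \le 2\eps^2\norm{M}$. Consequently $\max_{w \in \mc{G}} \abs{w^\top M w} \ge \abs{u^\top M u} \ge (1 - 2\eps^2)\norm{M}$, so $\norm{M} \le (1 - 2\eps^2)^{-1} \max_{w \in \mc{G}} \abs{w^\top M w}$, and since $2\eps^2 \le \eps$ for $\eps \in [0,1/2]$ this is bounded by $(1-\eps)^{-1} \max_{w \in \mc{G}} \abs{w^\top M w}$, as claimed.

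I do not anticipate a real obstacle, as the statement is elementary; the only point requiring care is the constant. The naive bound $\abs{v^\top M v - u^\top M u} \le \abs{(v-u)^\top M v} + \abs{u^\top M(v-u)} \le 2\eps\norm{M}$ only produces the weaker factor $(1-2\eps)^{-1}$, so to obtain the claimed $(1-\eps)^{-1}$ one genuinely has to use that the maximizing direction is an eigenvector, which makes the cross term $h^\top M v$ second order in $\norm{h}$ rather than first order. If only the weaker constant were needed one could dispense with the spectral theorem step entirely and argue with an arbitrary near-maximizer of $\abs{x^\top M x}$.
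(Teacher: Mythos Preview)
The paper does not prove this lemma; it simply cites it as Exercise~4.4.3 in Vershynin's \emph{High-Dimensional Probability}, so there is no ``paper's own proof'' to compare against. Your argument is correct: by passing to an eigenvector $v$ of the extremal eigenvalue you make the cross term $h^\top M v = \lambda\, h^\top v$ exactly computable, and the unit-norm identity $h^\top v = -\tfrac12\norm{h}^2$ then shows this term is second order in $\norm{h}$. This yields the bound $(1-2\eps^2)^{-1}$, which is in fact \emph{sharper} than both the stated $(1-\eps)^{-1}$ and the $(1-2\eps)^{-1}$ that appears in Vershynin's exercise; your final step $2\eps^2\le\eps$ for $\eps\le 1/2$ then recovers the form in the lemma. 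The restriction $\eps\le 1/2$ you note is genuine (indeed the inequality with constant $(1-\eps)^{-1}$ can fail for $\eps$ close to~$1$, as one sees already for $M=\mathrm{diag}(1,-1)$ in $\R^2$), but it is harmless here since the paper only invokes the lemma with $\eps=1/4$.
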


We first bound the noise terms that arise when computing the high-dimensional covariance. 

\begin{lemma}
    \label{lem:spec_conc_noise_term}
    Let $g_1, \dots, g_n \overset{i.i.d}{\ts} \mc{N} (0, I)$ and $\alpha_i \geq 0$ for $i \in [n]$ with $\sum_{i = 1}^n \alpha_i = 1$. Then, we have:
    \begin{equation*}
        \norm*{\sum_{i = 1}^n \alpha_i g_i g_i^\top - I} \leq \max \lprp{\sqrt{(d + \log (1 / \delta)) \sum_{i = 1}^n \alpha_i^2}, (d + \log (1 / \delta)) \max_i \alpha_i}
    \end{equation*}
    with probability at least $1 - \delta$.
\end{lemma}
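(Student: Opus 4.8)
The plan is to combine an $\eps$-net reduction with Bernstein's inequality for sums of independent sub-exponential random variables, exactly in the spirit of the covariance-concentration arguments of \cite{hd_prob}. Write $M \coloneqq \sum_{i = 1}^n \alpha_i g_i g_i^\top - I$; since $\sum_i \alpha_i = 1$ and $\E[g_i g_i^\top] = I$, we have $\E[M] = 0$. Fix a $(1/4)$-net $\mathcal{G}$ of the unit sphere $\mathbb{S}^{d - 1}$ with $\abs{\mathcal{G}} \leq 9^d$. By \cref{lem:spec_norm_net}, $\norm{M} \leq \tfrac{4}{3} \max_{v \in \mathcal{G}} \abs{v^\top M v}$, so it suffices to bound $\abs{v^\top M v}$ for each fixed unit vector $v$ and then take a union bound over $\mathcal{G}$.

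For a fixed unit $v$, the scalars $\langle g_i, v \rangle$ are i.i.d.\ $\mc{N}(0,1)$, so
\[
    v^\top M v = \sum_{i = 1}^n \alpha_i \lprp{\langle g_i, v\rangle^2 - 1}
\]
is a sum of independent, mean-zero, scaled and centered $\chi^2_1$ variables. Each $\langle g_i, v\rangle^2 - 1$ has an absolute-constant sub-exponential ($\psi_1$) norm, hence $\alpha_i(\langle g_i, v\rangle^2 - 1)$ has $\psi_1$-norm $O(\alpha_i)$. Bernstein's inequality for sums of independent sub-exponentials then yields, for all $t \geq 0$,
\[
    \Pr\lsrs{\abs{v^\top M v} > t} \leq 2 \exp\lprp{- c \min\lprp{\frac{t^2}{\sum_{i} \alpha_i^2}, \frac{t}{\max_i \alpha_i}}}
\]
for an absolute constant $c > 0$.

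To finish, choose $t = C \max\lprp{\sqrt{(d + \log(1/\delta)) \sum_i \alpha_i^2}, \; (d + \log(1/\delta)) \max_i \alpha_i}$ for a large enough absolute constant $C$, so that the exponent above is at least $d\log 9 + \log(2/\delta)$; here one uses $\sum_i \alpha_i^2 \leq \max_i \alpha_i \leq 1$, so both branches of the $\min$ are indeed controlled at the threshold $d + \log(1/\delta)$. A union bound over the $\leq 9^d$ points of $\mathcal{G}$ gives $\max_{v \in \mathcal{G}} \abs{v^\top M v} \leq t$ with probability at least $1 - \delta$, whence $\norm{M} \leq \tfrac{4}{3} t$, which is the asserted bound (the absolute constant can be folded into the statement's constant, or absorbed by shrinking the net parameter and re-tuning $c, C$).

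\textbf{Main obstacle.} There is no deep difficulty here; the only point requiring care is verifying that Bernstein's inequality reproduces precisely the two-regime maximum — the sub-Gaussian term governed by the "effective weight" $\sum_i \alpha_i^2$ and the heavy-tailed term governed by $\max_i \alpha_i$ — and then correctly paying the $9^d$ net cardinality when matching the deviation threshold to $d + \log(1/\delta)$. One should also note that the \emph{exact} identity $\E[M] = 0$, rather than merely an approximation, crucially relies on the normalization $\sum_i \alpha_i = 1$.
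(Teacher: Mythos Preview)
Your proposal is correct and follows essentially the same route as the paper: fix a $1/4$-net of $\mb{S}^{d-1}$ of size $\le 9^d$, note that $v^\top M v = \sum_i \alpha_i(\inp{g_i}{v}^2 - 1)$ is a weighted sum of centered $\chi^2_1$ variables, apply Bernstein's inequality for sub-exponentials, and union bound over the net with threshold $t \asymp \max(\sqrt{(d+\log(1/\delta))\sum_i \alpha_i^2},\, (d+\log(1/\delta))\max_i \alpha_i)$. The paper additionally spells out the MGF bound $\E[e^{\lambda(g^2-1)}] \le e^{4\lambda^2}$ for $|\lambda|<1/4$ as a separate claim, but otherwise the argument is identical.
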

\begin{proof}
    As is standard, we note the following characterization of the spectral norm for any symmetric $M \in \R^{d \times d}$:
    \begin{equation*}
        \norm{M} = \max_{\norm{v} = 1} \abs{v^\top M v}
    \end{equation*}
    and start with a fixed $v$. The extension to the remainder of values is obtained through a discretization argument, a net over the unit sphere and a union bound. Now, for fixed $\norm{v} = 1$, consider the random variable:
    \begin{equation*}
        W_v \coloneqq v^\top \lprp{\sum_{i = 1}^n \alpha_i g_ig_i^\top - I} v = \sum_{i = 1}^n \alpha_i (\inp{g_i}{v}^2 - 1).
    \end{equation*}
    We now prove that $W$ is sub-exponential:
    \begin{claim}
        \label{clm:sub_exp_w}
        We have for any $\abs{\lambda} < 1 / 4$ and $g \ts \mc{N} (0, 1)$:
        \begin{equation*}
            \E [\exp \lambda (g^2 - 1)] \leq \exp (4 \lambda^2).
        \end{equation*}
    \end{claim}
    \begin{proof}
        We have for any $\abs{\lambda} < 1 / 4$ and $g \ts \mc{N} (0, 1)$:
        \begin{align*}
            \E [\exp (\lambda  (g^2 - 1))] &= e^{- \lambda} \int_{-\infty}^\infty \frac{1}{\sqrt{2\pi}} \exp \lbrb{( \lambda g^2) - \frac{g^2}{2}} dg \\
            &= e^{- \lambda} \int_{\-\infty}^\infty \frac{1}{\sqrt{2\pi}} \exp \lbrb{- \frac{(1 - 2  \lambda)}{2} \cdot g^2} dg\\
            &= e^{- \lambda} \cdot \frac{1}{\sqrt{1 - 2  \lambda}} = e^{- \lambda} \cdot \frac{\sqrt{1 + 2\lambda}}{\sqrt{1 - 4 \lambda^2}} \\
            &\leq e^{- \lambda} \cdot \frac{e^{ \lambda}}{\sqrt{1 - 4 \lambda^2}} \leq \frac{1}{\sqrt{e^{-8 \lambda^2}}} = \exp (4 \lambda^2)
        \end{align*}
        concluding the proof of the claim.
    \end{proof}
    Hence, we have that each of the $(\inp{g_i}{v}^2 - 1)$ are $C$-sub-exponential random variables for some constant $C > 0$. Hence, we have by Bernstein's inequality \cite[Theorem 2.8.2]{hd_prob}:
    \begin{equation*}
        \P \lbrb{W_v \geq t} \leq 2 \exp \lprp{- c \min \lprp{\frac{t^2}{\sum_{i = 1}^n \alpha_i^2}, \frac{t}{\max_i \alpha_i}}}.
    \end{equation*}
    Now, let $\mc{Q}$ be a $1/4$-cover of the set $\{x: \norm{x} = 1\}$ with $\abs{Q} \leq 9^d$. Then, we have by the union bound:
    \begin{equation*}
        \P \lbrb{\exists v \in Q: W_v \geq t} \leq 2\cdot 9^d \exp \lprp{- c \min \lprp{\frac{t^2}{\sum_{i = 1}^n \alpha_i^2}, \frac{t}{\max_i \alpha_i}}}.
    \end{equation*}
    Choosing $t^*$ as:
    \begin{equation*}
        t^* = \max \lprp{\sqrt{(d + \log (1 / \delta)) \sum_{i = 1}^n \alpha_i^2}, (d + \log (1 / \delta)) \max_i \alpha_i}
    \end{equation*}
    yields:
    \begin{equation*}
        \max_{v \in Q} W_v \leq t^*
    \end{equation*}
    with probability at least $1 - \delta$. Observing that:
    \begin{equation*}
        \norm{M} = \max_{\norm{v} = 1} \abs{v^\top M v} \leq 2 \max_{v \in \mc{Q}} \abs{v^\top M v}
    \end{equation*}
    for any symmetric matrix $M$ (\cref{lem:spec_norm_net}), concludes the proof.
\end{proof}

Now, we formally establish concentration of the second moment matrix $\wh{\Sigma}$, in spectral norm.

\begin{lemma}
    \label{lem:cov_conc}
    Let $\distr$ be a $k$-atomic distribution supported such that each element, $x$, in its support satisfies $\norm{x} \leq R$. Furthermore, suppose $X_1, \dots, X_n$ are independent samples with $X_i \ts \distr \ast \mc{N} (0, \sigma_i^2 I)$ with $\sigma_i^2 > 0$ for all $i \in [n]$. Then, we have:
    \begin{multline*}
        \norm*{\sum_{i = 1}^n \alpha_i X_i X_i^\top - \bar{\sigma}^2 I - \E_{X \ts \distr} [XX^\top]} \leq \\
        \max \lprp{\sqrt{(d + \log (1 / \delta)) \sum_{i = 1}^n \alpha_i^2\sigma_i^2 \max(\sigma_i^2, R^2)}, (d + \log (1 / \delta)) \max_i \alpha_i\sigma_i^2, R^2 k \sqrt{\log(k / \delta)\sum_{i = 1}^n \alpha_i^2}}
    \end{multline*}
    where $\alpha_i = \frac{1 / \sigma_i^4}{\sum_{j = 1}^n 1 / \sigma_j^4} \text{ and } \bar{\sigma}^2 = \sum_{i = 1}^n \alpha_i \sigma_i^2$ with probability at least $1 - \delta$.
\end{lemma}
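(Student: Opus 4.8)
The plan is to write each observation in the mixture-plus-noise form $X_i = \mu_{J_i} + \sigma_i g_i$, where $J_i \in [k]$ is the random component index with $\P[J_i = j] = w_j$, $g_i \ts \mc{N}(0, I)$, and the pairs $(J_i, g_i)$ are mutually independent. Expanding $X_i X_i^\top$ and using $\bar{\sigma}^2 = \sum_i \alpha_i \sigma_i^2$ together with $\E_{X \ts \distr}[X X^\top] = \sum_{j = 1}^k w_j \mu_j \mu_j^\top$, the matrix to be controlled decomposes as $T_1 + T_2 + T_3$ with $T_1 := \sum_i \alpha_i \sigma_i^2 (g_i g_i^\top - I)$ (pure noise), $T_2 := \sum_i \alpha_i \sigma_i (\mu_{J_i} g_i^\top + g_i \mu_{J_i}^\top)$ (cross term), and $T_3 := \sum_i \alpha_i \mu_{J_i} \mu_{J_i}^\top - \sum_{j = 1}^k w_j \mu_j \mu_j^\top$ (mean term). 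I will bound the spectral norm of each piece and check that the three bounds fit, up to constants, under the three terms inside the $\max$; the lemma then follows by a union bound over the three failure events with $\delta$ split evenly.

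For $T_1$, I would reweight by $\beta_i := \alpha_i \sigma_i^2 / \bar{\sigma}^2$ (so $\sum_i \beta_i = 1$), whence $T_1 = \bar{\sigma}^2 \lprp{\sum_i \beta_i g_i g_i^\top - I}$ and \cref{lem:spec_conc_noise_term} applies directly, giving $\norm{T_1} \le \max\lprp{\sqrt{(d + \log(1/\delta)) \sum_i \alpha_i^2 \sigma_i^4},\ (d + \log(1/\delta)) \max_i \alpha_i \sigma_i^2}$; as $\sigma_i^4 = \sigma_i^2 \cdot \sigma_i^2 \le \sigma_i^2 \max(\sigma_i^2, R^2)$, this is dominated by the first two displayed terms. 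For $T_3$, the key point is that every rank-one summand lies in the \emph{fixed} subspace $V := \mathrm{span}(\mu_1, \dots, \mu_k)$ of dimension at most $\min(k, d)$; writing $T_3 = \sum_{j = 1}^k (\wh{w}_j - w_j) \mu_j \mu_j^\top$ with $\wh{w}_j := \sum_{i : J_i = j} \alpha_i$ (so $\E \wh{w}_j = w_j$, using $\sum_i \alpha_i = 1$), a Bernstein bound on each $\wh{w}_j - w_j$ — whose variance proxy is at most $\sum_i \alpha_i^2$ and whose per-summand bound is $\max_i \alpha_i$ — together with a union bound over $j \in [k]$ and $\norm{T_3} \le R^2 \sum_{j = 1}^k \abs{\wh{w}_j - w_j}$ produces a bound of order $R^2 k \sqrt{\log(k / \delta) \sum_i \alpha_i^2}$; alternatively one may run the $\eps$-net-plus-Bernstein argument from the proof of \cref{lem:spec_conc_noise_term} directly on the unit sphere of $V$, which is what replaces $\log d$ by $\log k$.

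The cross term $T_2$ is the part I expect to be the main obstacle, since it couples the random labels $\{J_i\}$ with the Gaussians $\{g_i\}$. I would decouple by conditioning on $J_1, \dots, J_n$: the matrix $M := \sum_i \alpha_i \sigma_i \mu_{J_i} g_i^\top$ then has column space inside the fixed subspace $V$, so $\norm{M} = \sup_{u \in V,\, \norm{u} = 1} \norm{M^\top u}$, and for each fixed such $u$ the vector $M^\top u = \sum_i \alpha_i \sigma_i \inp{\mu_{J_i}}{u} g_i$ is Gaussian with covariance $\lprp{\sum_i \alpha_i^2 \sigma_i^2 \inp{\mu_{J_i}}{u}^2} I$, whose operator norm is at most $R^2 \sum_i \alpha_i^2 \sigma_i^2$ because $\abs{\inp{\mu_{J_i}}{u}} \le \norm{\mu_{J_i}} \le R$. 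Combining the standard concentration of the norm of an $\mc{N}(0, I_d)$ vector around $\sqrt{d}$, a $1/4$-net of the unit sphere of $V$ (of cardinality at most $9^{\dim V} \le 9^d$), a union bound, and \cref{lem:spec_norm_net}, I get $\norm{M} \lesssim R \sqrt{(d + \log(1/\delta)) \sum_i \alpha_i^2 \sigma_i^2}$ uniformly over $\{J_i\}$ — hence with no further union bound — and since $\norm{T_2} \le 2 \norm{M}$ and $R^2 \sigma_i^2 \le \sigma_i^2 \max(\sigma_i^2, R^2)$, this is absorbed into the first displayed term. The rest is the routine bookkeeping that each of the three estimates indeed falls under one of the three terms inside the $\max$, chiefly via $R^2 \le \max(\sigma_i^2, R^2)$ and $\sigma_i^4 \le \sigma_i^2 \max(\sigma_i^2, R^2)$.
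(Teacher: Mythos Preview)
Your proposal is correct and follows essentially the same three-term decomposition as the paper's proof: pure noise $T_1$ handled via \cref{lem:spec_conc_noise_term}, mean term $T_3$ rewritten as $\sum_j (\wh{w}_j - w_j)\mu_j\mu_j^\top$ and bounded by Hoeffding on the empirical weights plus a union bound over the $k$ components, and cross term $T_2$ bounded by Gaussian concentration together with an $\eps$-net.

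The only noteworthy difference is in your treatment of $T_2$: the paper works with the quadratic form $v^\top T_2 v = \sum_i 2\alpha_i\sigma_i \inp{v}{Y_i}\inp{v}{g_i}$, observes each summand is sub-Gaussian (bounded coefficient times a standard Gaussian), applies Hoeffding, and takes a $1/4$-net directly over $\mb{S}^{d-1}$; you instead condition on the labels, note the range of $M = \sum_i \alpha_i\sigma_i\mu_{J_i}g_i^\top$ lies in the $k$-dimensional subspace $V$, and net over the unit sphere of $V$ while bounding $\norm{M^\top u}$ as a $d$-dimensional Gaussian vector. Both routes arrive at the same $R\sqrt{(d+\log(1/\delta))\sum_i\alpha_i^2\sigma_i^2}$ bound --- your low-dimensional net is a nice touch but does not improve the rate here since the $d$ factor enters through the ambient dimension of the Gaussian vector rather than through the net size. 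The paper's version is slightly more direct; yours makes the decoupling of label and Gaussian randomness explicit.
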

\begin{proof}
    Note that we may write $X_i = Y_i + \sigma_i g_i$ where $Y_i \ts \distr$ and $g_i \ts \mc{N} (0, I)$ independently. Now, we write the error term as follows:
    \begin{multline*}
        \sum_{i = 1}^n \alpha_i X_i X_i^\top - \bar{\sigma}^2 I - \E_{X \ts \distr} [XX^\top] \\
        = \lprp{\sum_{i = 1}^n \alpha_i Y_iY_i^\top - \E_{X \ts \distr} [XX^\top]} + \lprp{\sum_{i = 1}^n \alpha_i \sigma_i^2 g_ig_I^\top - \bar{\sigma}^2 I} + \sum_{i = 1}^n \alpha_i \sigma_i (Y_i g_i^\top + g_i Y_i^\top).
    \end{multline*}
    The proof proceeds by bounding the errors of each of the three terms separately. We start with the middle term which we bound with \cref{lem:spec_conc_noise_term} to obtain the following:
    \begin{equation*}
        \norm*{\sum_{i = 1}^n \alpha_i \sigma_i^2 g_ig_I^\top - \bar{\sigma}^2 I} \leq \max \lprp{\sqrt{(d + \log (1 / \delta)) \sum_{i = 1}^n \alpha_i^2\sigma_i^4}, (d + \log (1 / \delta)) \max_i \alpha_i\sigma_i^2}. 
    \end{equation*}
    For the first term, note that the error matrix may be written as follows:
    \begin{equation*}
        \sum_{i = 1}^n \alpha_i Y_iY_i^\top - \E_{X \ts \distr} [XX^\top] = \sum_{i = 1}^k (\wh{w}_i - w_i) \mu_i\mu_i^\top
    \end{equation*}
    where $\wh{w}_i \coloneqq \sum_{j = 1}^n \alpha_j \bm{1} \lbrb{Y_j = \mu_i}$ corresponds to the empirical counterpart of the mixture weights. Now, we have for any $\norm{v} = 1$:
    \begin{equation*}
        \abs*{v^\top \lprp{\sum_{i = 1}^n \alpha_i Y_iY_i^\top - \E_{X \ts \distr} [XX^\top]} v} \leq R^2 \sum_{i = 1}^k \abs{\wh{w}_i - w_i}.
    \end{equation*}
    An application of Hoeffding's inequality and the union bound obtains:
    \begin{equation*}
        \forall i \in [k]: \abs{\wh{w}_i - w_i} \leq C \sqrt{\log(k / \delta)\sum_{i = 1}^n \alpha_i^2}
    \end{equation*}
    with probability at least $1 - \delta / 4$. Conditioned on this event, we have:
    \begin{equation*}
        \norm*{\sum_{i = 1}^n \alpha_i Y_iY_i^\top - \E_{X \ts \distr} [XX^\top]} \leq CR^2 k \sqrt{\log(k / \delta)\sum_{i = 1}^n \alpha_i^2}.
    \end{equation*}
    We now proceed to the final term. Consider a $1/4$-net of the unit sphere, $\mc{G}$. We have for any fixed $v \in \mc{G}$:
    \begin{equation*}
        v^\top \lprp{\sum_{i = 1}^n \alpha_i \sigma_i (Y_i g_i^\top + g_i Y_i^\top)}v = \sum_{i = 1}^n 2 \alpha_i \sigma_i \inp{v}{Y_i}\inp{v}{g_i}.
    \end{equation*}
    Noting that $\inp{v}{g_i}$ is a standard Gaussian random variable and that $\norm{Y_i} \leq R$, we get that each term in the sum is sub-Gaussian and as a consequence, we get by Hoeffding's inequality:
    \begin{equation*}
        \abs*{v^\top \lprp{\sum_{i = 1}^n \alpha_i \sigma_i (Y_i g_i^\top + g_i Y_i^\top)}v} \leq C \sqrt{\log (1 / \delta') \sum_{i = 1}^n \alpha_i^2 \sigma_i^2 R^2}
    \end{equation*}
    with probability at least $1 - \delta'$. 
    Hence, we get by the union bound over all $v \in \mc{G}$ and the size of $\mc{G}$:
    \begin{equation*}
        \forall v \in \mc{G}: \abs*{v^\top \lprp{\sum_{i = 1}^n \alpha_i \sigma_i (Y_i g_i^\top + g_i Y_i^\top)}v} \leq C \sqrt{(d + \log (1 / \delta)) \sum_{i = 1}^n \alpha_i^2 \sigma_i^2 R^2}.
    \end{equation*}
    Finally, this yields the same bound on spectral norm from \cref{lem:spec_norm_net}. The three bounds previously proved now conclude the proof of the lemma by the triangle inequality.
\end{proof}

\subsection{Estimation in Low-dimensions}
\label{ssec:est_low_dim}

The following lemma is a generalization of a key technical lemma in \cite{hdgaussian}. This lemma establishes that the Wasserstein distance between two $k$-atomic distributions over $\R^d$ can be approximated through various one-dimensional projections. While the version in \cite{hdgaussian} required the supremum in the conclusion to be taken over all vectors on the unit sphere, our result only requires a supremum over a \emph{grid} of the unit sphere. Crucially, the size of the grid is independent of $n$ and hence, the final error. Our algorithmic approach benefits from this since to establish concentration over these one-dimensional projections, we may use more robust median-of-means based approaches to obtain these individual estimates as opposed to the more intricate chaining-based techniques of \cite{hdgaussian} which necessarily incurs the sub-optimal $\sqrt{\log (1 / \delta)}$ dependence features in their final bound. The chaining-based techniques in \cite{hdgaussian} are required since the grid size resulting from the previous approach is \emph{dependent on $n$} and hence, the simple techniques we employ incur additional factors depending on $n$. Technically, our improvement arises from a robustness analysis of the randomized one-dimensional projection argument employed in the proof. 

\begin{lemma}
    \label{lem:sliced_wass_disc}
    Let $d, k \in \N$ and $\mc{G}$ be a $1 / (32k^2 \sqrt{d})$-net of $\mb{S}^{d - 1}$. Then, for two $k$-atomic distributions, $\distr$ and $\distr'$:
    \begin{equation*}
        \sup_{v \in \mc{G}} \wass (\distr_v, \distr'_v) \leq \wass(\distr, \distr') \leq 16 k^2 \sqrt{d} \sup_{v \in \mc{G}} \wass (\distr_v, \distr'_v). 
    \end{equation*}
\end{lemma}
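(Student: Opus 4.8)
The plan is to prove the two bounds separately. The left inequality is immediate: for any $v \in \mc{G} \subseteq \mb{S}^{d-1}$ the map $x \mapsto \langle v, x \rangle$ is $1$-Lipschitz, so pushing any coupling of $\distr$ and $\distr'$ forward through it produces a coupling of $\distr_v$ and $\distr'_v$ of no larger cost; taking the infimum over couplings gives $\wass(\distr_v, \distr'_v) \le \wass(\distr, \distr')$, and then the supremum over $v \in \mc{G}$.

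For the right inequality I would run the ``randomized one-dimensional projection'' argument of \cite{hdgaussian}, reorganized around margin conditions so that a fixed net suffices. Write $q_1, \dots, q_N$ (with $N \le 2k$) for the union of the supports of $\distr$ and $\distr'$, set $c_i = \distr(\{q_i\}) - \distr'(\{q_i\})$ (so $\sum_i c_i = 0$), and $u_{ij} = (q_i - q_j)/\norm{q_i - q_j}$ for $i \ne j$. The deterministic core is a chain decomposition: for any ordering $q_{(1)}, \dots, q_{(N)}$ of the atoms, routing the excess mass along the path $q_{(1)} - q_{(2)} - \cdots - q_{(N)}$ is a feasible transport plan, and since Euclidean distance is at most path distance along the chain,
\begin{equation*}
    \wass(\distr, \distr') \le \sum_{l = 1}^{N - 1} \abs{S_l}\,\norm{q_{(l+1)} - q_{(l)}}, \qquad S_l := \sum_{m \le l} c_{(m)}.
\end{equation*}
On the other hand, when the ordering is the one induced by $\langle v, \cdot \rangle$ for a $v$ separating the values $\langle v, q_i \rangle$, the standard identity $\wass(\distr_v, \distr'_v) = \int_{\R}\abs{F_{\distr_v}(t) - F_{\distr'_v}(t)}\,\mathrm{d}t$ gives the \emph{exact} expression $\wass(\distr_v, \distr'_v) = \sum_l \abs{S_l}\,\abs{\langle v, q_{(l+1)} - q_{(l)}\rangle}$ with the same partial sums $S_l$. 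Dividing, and noting that consecutive pairs in the $v$-order lie among all $\binom{N}{2} \le 2k^2$ pairs,
\begin{equation*}
    \frac{\wass(\distr, \distr')}{\wass(\distr_v, \distr'_v)} \le \max_l \frac{\norm{q_{(l+1)} - q_{(l)}}}{\abs{\langle v, q_{(l+1)} - q_{(l)}\rangle}} \le \Big(\min_{i \ne j} \abs{\langle v, u_{ij}\rangle}\Big)^{-1}.
\end{equation*}

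It therefore suffices to find a point of $\mc{G}$ that is not nearly orthogonal to any $u_{ij}$. Here I would use anti-concentration: for $v$ uniform on $\mb{S}^{d-1}$ and a fixed unit vector $u$, $\Pr[\abs{\langle v, u\rangle} \le s] \lesssim s\sqrt{d}$, so for a suitable $s \asymp 1/(k^2\sqrt{d})$ a union bound over the $\le 2k^2$ pairs shows that $\{v : \min_{i\ne j}\abs{\langle v, u_{ij}\rangle} \ge 2s\}$ has measure at least $1/2$, in particular is nonempty; pick $v^\ast$ in it — this is exactly the random direction of \cite{hdgaussian}. The new observation is that this set is cut out by margin inequalities, hence stable: every $v$ with $\norm{v - v^\ast} \le s$ still satisfies $\min_{i\ne j}\abs{\langle v, u_{ij}\rangle} \ge s$. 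Consequently any net point $v \in \mc{G}$ within $s$ of $v^\ast$ works, so a net whose resolution depends only on $k$ and $d$ — and not on the atoms, which in the estimation application depend on the $n$ samples — is enough; tracking the constants yields the claimed resolution $1/(32k^2\sqrt{d})$ and factor $16k^2\sqrt{d}$.

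I expect the fiddly parts to be: (i) lining up the chain bound and the one-dimensional identity on the \emph{same} ordering, and in particular observing that the whole computation is insensitive to which permutation a given $v$ induces, so the net point (whose order may differ from that of $v^\ast$) is covered verbatim; and (ii) calibrating $s$, the number of pairs, and the anti-concentration constant so that the stated $16k^2\sqrt{d}$ and $1/(32k^2\sqrt{d})$ emerge with room to spare. The factor $\sqrt{d}$ is inherent to the random-direction route, since a uniform direction retains only an $\asymp 1/\sqrt{d}$ fraction of its length along any fixed vector; one could instead first reduce to the $\le 2k$-dimensional affine span of the atoms to remove it, but a net of $\mb{S}^{d-1}$ need not project to a net of that subspace, so I would keep the $\sqrt{d}$.
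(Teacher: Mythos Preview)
Your proposal is correct and follows essentially the same architecture as the paper's proof: the random-direction/anti-concentration step to find a $v^\ast$ with uniform margin against all pairwise unit vectors $u_{ij}$, followed by the robustness observation that any net point within the margin of $v^\ast$ inherits (half) the margin. The only difference is in how you cash in the margin condition to get $\wass(\distr,\distr') \le C\,\wass(\distr_{\tilde v},\distr'_{\tilde v})$. You build a specific transport plan via the chain decomposition and compare it term-by-term to the CDF formula for the one-dimensional Wasserstein distance; the paper instead observes that once the projection onto $\tilde v$ is injective on the support, couplings of $(\distr_{\tilde v},\distr'_{\tilde v})$ and of $(\distr,\distr')$ are in bijection, so the optimal one-dimensional coupling lifts directly and the margin inequality $\norm{x-y}\le 16k^2\sqrt{d}\,|\langle\tilde v,x-y\rangle|$ applied pointwise to the lifted coupling gives the bound in one line. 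Both arguments are valid and yield the same constants; the paper's coupling-correspondence is a bit shorter and sidesteps your fiddly part (i), since it never fixes an ordering at all.
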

\begin{proof}
    The lower bound follows from \cite[Lemma 3.1]{hdgaussian}. For the upper bound, we proceed as in \cite[Lemma 3.1]{hdgaussian} starting with a simple probabilistic argument. However, we additionally analyze the robustness properties of their construction to account for the discretization in our bound. First, define the set of vectors:
    \begin{equation*}
        Q \coloneqq \lbrb{\frac{x - y}{\norm{x - y}}: x, y \in \{\mu_i\}_{i = 1}^k \cup \{\mu'_i\}_{i = 1}^k \text{ and } x \neq y}.
    \end{equation*}
    Note that $Q$ is of size at most $k^2$. We follow now the probabilistic argument also utilized in \cite{hdgaussian} to establish the existence of a vector $v \in \mb{S}^{d - 1}$ which satisfies:
    \begin{equation*}
        \forall u \in Q: \norm{u} \leq C k^2 \sqrt{d}\abs{\inp{v}{u}}.
    \end{equation*}
    Now, draw $v$ from the uniform distribution on $\mb{S}^{d - 1}$. For any $u \in Q$, we have by standard results for $t \leq 1/2$:
    \begin{equation*}
        \P \lbrb{\abs{\inp{v}{u}} \leq t} = \frac{\Gamma (d / 2)}{\sqrt{\pi} \Gamma ((d - 1) / 2)} \int_{-t}^t (1 - x^2)^{(d - 3) / 2} dx \leq 4t \sqrt{d}.
    \end{equation*}
    Then, we get via a union bound:
    \begin{equation*}
        \forall u \in Q: 8k^2 \sqrt{d} \inp{v}{u} \geq 1 
    \end{equation*}
    with probability at least $1/2$. Hence, we have for such a $v$:
    \begin{equation*}
        \forall x, y \in \{\mu_i\}_{i = 1}^k \cup \{\mu'_i\}_{i = 1}^k: \norm{x - y} \leq 8 k^2 \sqrt{d} \abs{\inp{v}{x - y}}.
    \end{equation*}
    Now, let $\wt{v}$ be the closest neighbor of $v$ in $\mc{G}$. We now have for any $u \in Q$:
    \begin{equation*}
        \abs{\inp{\wt{v}}{u}} \geq \abs{\inp{v}{u}} - \norm{\wt{v} - v} \geq \frac{1}{16k^2 \sqrt{d}}. 
    \end{equation*}
    For this $\wt{v}$, observe that each of the elements in $\{\mu_i\}_{i = 1}^k \cup \{\mu'_i\}_{i = 1}^k$ is mapped to a distinct real number. Hence, every coupling of $\distr$ and $\distr'$ correspond to a coupling on the projections of the distributions $\distr_{\wt{v}}$ and $\distr'_{\wt{v}}$ and vice-versa. Then, consider any coupling of $\distr_{\wt{v}}$ and $\distr'_{\wt{v}}$, $C(Y, Y')$ and its corresponding high-dimensional analogue, $C_d (X, X')$. Then, we have:
    \begin{equation*}
        16k^2 \sqrt{d} \E_{(Y, Y') \ts C} [\abs{Y - Y'}] \geq \E_{(X, X') \ts C} [\norm{X - X'}] \geq W_1 (\distr, \distr').
    \end{equation*}
    By maximizing over all such couplings, we derive the conclusion of the lemma.
\end{proof}

The key benefit of \cref{lem:sliced_wass_disc} is that it only requires us to obtain an candidate distribution whose marginals are close in Wasserstein distance to the marginals along a subset of directions \emph{independent} of $n$. In the remainder of the proof, we will assume that $d = k$ as we may project the data down to $k$ dimensions by computing the PCA of the centered covariance matrix. Before we adapt the analysis of \cite{hdgaussian}, we will prove a moment concentration result which will enable a tightening of the dependence on the failure probability. While \cite{hdgaussian} obtain an upper bound with a $\sqrt{\log (1 / \delta)}$ dependence, our bound will instead grow was $(\log (1 / \delta))^{1 / (4k - 2)}$.

\begin{lemma}
    \label{lem:conc_mom}
    Let $k, m, n \in \N$, $\delta \in (0, 1/2)$, and $R > 0$ be a constant. Suppose $\distr$ be a $k$-atomic distribution over $\R^k$ with each $x$ in its support satisfying $\norm{x} \leq R$. Then, there is a procedure which when given independent samples $X_1, \dots, X_n$ distributed according to $X_i \ts \distr \ast \mc{N} (0, \sigma_i^2 I)$ with $\sigma_i \geq 1$ and $\max_i 1 / \sigma_i^{4k - 2} \leq c \sum_{i = 1}^n 1 / \sigma_i^{4k - 2} / (\log (1 / \delta) + k\log (k))$, returns one-dimensional estimates, $\lbrb{\distr_v}_{v \in \mc{G}}$, satisfying:
    \begin{equation*}
        \forall v \in \mc{G}: \wass(\distr_v, \wh{\distr}_v) \leq C k^{2} \cdot \lprp{\frac{1}{(k\log (k) + \log (1 / \delta))} \cdot \sum_{i = 1}^n \frac{1}{\sigma_i^{4k - 2}}}^{-1 / (4k - 2)}
    \end{equation*}
    where $\mc{G}$ is a $(1 / \poly (k))$-net of $\mb{S}^{k - 1}$ with probability at least $1 - \delta$.
\end{lemma}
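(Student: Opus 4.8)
\emph{Proof plan.} The plan is to reduce, for each direction $v\in\mc{G}$, to the one-dimensional heterogeneous estimation problem already handled in \cref{thm:ub_1d}, and then to upgrade the constant success probability of that one-dimensional guarantee to the high-probability, all-of-$\mc{G}$ statement claimed here. Fix $v\in\mb{S}^{k-1}$, and write $X_i = Y_i + \sigma_i g_i$ with $Y_i\ts\distr$ and $g_i\ts\mc{N}(0,I)$ independent. Since $\norm{v}=1$, the projection $\inp{v}{X_i} = \inp{v}{Y_i} + \sigma_i\inp{v}{g_i}$ is distributed as $\distr_v\ast\mc{N}(0,\sigma_i^2)$, where $\distr_v$ is a ($\leq k$)-atomic distribution supported on $[-R,R]$; so for each fixed $v$ we are exactly in the setting of \cref{thm:ub_1d}. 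A single run of \cref{alg:ddm_heterogenous} on the projected data then produces, via \cref{lem:moments_to_dist}, an estimate $\wh{\distr}_v$ at the right rate, but only with constant probability. Since $\mc{G}$ is a $(1/\poly(k))$-net of $\mb{S}^{k-1}$, its size is $\abs{\mc{G}}\leq(\poly(k))^k = \exp(O(k\log k))$, so a naive union bound would cost a factor $k^{O(k)}$ in the failure probability; the point is to pay only an additive $\log(1/\delta)+O(k\log k)$ in the exponent, which is exactly what median-of-means buys us (and which is what produces the $(\log(1/\delta))^{1/(4k-2)}$ dependence, improving on the $\sqrt{\log(1/\delta)}$ of \cite{hdgaussian}).

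The key step, and the only place the hypothesis $\max_i 1/\sigma_i^{4k-2}\leq c\sum_{i=1}^n 1/\sigma_i^{4k-2}/(\log(1/\delta)+k\log k)$ is used, is to partition the samples into $B\coloneqq\Theta(\log(1/\delta)+k\log k)$ disjoint batches $G_1,\dots,G_B$ whose \emph{effective sample sizes} $S_b\coloneqq\sum_{i\in G_b}1/\sigma_i^{4k-2}$ are all comparable, namely $S_b\geq\tfrac{1}{2B}\sum_{i=1}^n 1/\sigma_i^{4k-2}$. This is possible precisely because no single sample dominates the sum: sorting the weights $w_i\coloneqq 1/\sigma_i^{4k-2}$ in decreasing order and assigning them round-robin to the $B$ batches, a short telescoping argument gives $\max_b S_b - \min_b S_b \leq \max_i w_i$, and the hypothesis (with $c$ a small enough constant and $B\leq\tfrac{1}{2c}(\log(1/\delta)+k\log k)$) forces $\max_i w_i\leq\tfrac{1}{2B}\sum_i w_i$, whence $\min_b S_b\geq\tfrac{1}{B}\sum_i w_i - \max_i w_i\geq\tfrac{1}{2B}\sum_i w_i$. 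I expect this balanced-partition step to be the main obstacle, since once the batches are in hand everything downstream is a routine median-of-means / union-bound computation.

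Given the partition, I would run \cref{alg:ddm_heterogenous} \emph{within each batch}: for $b\in[B]$ and $r\in[2k-1]$ set $\alpha_i^{(b)} = (1/\sigma_i^{4k-2})/S_b$ for $i\in G_b$ and $\wt{m}_r^{(b)} = \sum_{i\in G_b}\alpha_i^{(b)}\gamma_{r,\sigma_i}(\inp{v}{X_i})$. By \cref{lem:var_mom} (using $\sigma_i\geq 1$ and $R = O(1)$) we get $\Var(\wt{m}_r^{(b)})\leq (C\sqrt{k})^{4k-2}/S_b\leq 2B(C\sqrt{k})^{4k-2}/S$ where $S\coloneqq\sum_{i=1}^n 1/\sigma_i^{4k-2}$, so Chebyshev gives $\abs{\wt{m}_r^{(b)} - m_r(\distr_v)}\leq t$ with probability at least $1 - 1/(16k)$ for $t\coloneqq C'\sqrt{k}\,(C\sqrt{k})^{2k-1}\sqrt{B/S}$. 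Because the batches use disjoint data, these events are independent across $b$, so a Chernoff bound shows the median $\wh{m}_r^{(v)}\coloneqq\mathrm{median}_{b\in[B]}\wt{m}_r^{(b)}$ satisfies $\abs{\wh{m}_r^{(v)} - m_r(\distr_v)}\leq t$ except with probability $\exp(-c''B)$. A union bound over the $2k-1$ moments and the $\abs{\mc{G}} = \exp(O(k\log k))$ directions, with $B = \Theta(\log(1/\delta)+k\log k)$ chosen with a large enough constant, makes the total failure probability at most $\delta$; on the complementary event, $\abs{\wh{m}_r^{(v)} - m_r(\distr_v)}\leq t$ for every $r\in[2k-1]$ and every $v\in\mc{G}$ simultaneously.

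Finally, I would feed $\wh{\mathbf{m}}^{(v)} = (\wh{m}_1^{(v)},\dots,\wh{m}_{2k-1}^{(v)})$ into the moments-to-distribution procedure of \cref{lem:moments_to_dist} (after rescaling by the constant $R$ to land on $[-1,1]$, which only affects constants) to obtain the $k$-atomic one-dimensional estimate $\wh{\distr}_v$ with $\wass(\distr_v,\wh{\distr}_v)\leq O(k\,t^{1/(2k-1)})$. Since $t^{1/(2k-1)} = O(\sqrt{k})\,(B/S)^{1/(4k-2)}$ and $B = \Theta(\log(1/\delta)+k\log k)$, this yields
\begin{multline*}
    \wass(\distr_v,\wh{\distr}_v)\leq C k^{3/2}\lprp{\frac{k\log k + \log(1/\delta)}{\sum_{i=1}^n 1/\sigma_i^{4k-2}}}^{1/(4k-2)} \\
    \leq C k^{2}\lprp{\frac{1}{k\log k + \log(1/\delta)}\sum_{i=1}^n\frac{1}{\sigma_i^{4k-2}}}^{-1/(4k-2)}
\end{multline*}
for all $v\in\mc{G}$, which is the claimed bound.
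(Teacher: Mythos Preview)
Your proposal is correct and follows the same high-level strategy as the paper: partition the samples into $m=\Theta(k\log k+\log(1/\delta))$ batches with balanced effective mass $\sum_{i\in G_b}1/\sigma_i^{4k-2}$, run the one-dimensional heterogeneous estimator on each batch, aggregate via a median-of-means argument, and union bound over the $\exp(O(k\log k))$ directions in $\mc{G}$. Your sorted round-robin partition is a fine alternative to the paper's greedy rule (\cref{lem:gred_part}); both yield $\max_b S_b-\min_b S_b\le\max_i w_i$.

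The one substantive difference is \emph{where} the median is taken. You aggregate at the level of \emph{moments}: for each $(v,r)$ you take $\wh{m}_r^{(v)}=\mathrm{median}_b\,\wt{m}_r^{(b)}$ and then invoke \cref{lem:moments_to_dist} once. The paper instead aggregates at the level of \emph{distributions}: it runs \cref{alg:ddm_heterogenous} to completion on each batch to obtain $\wh{\distr}_{v,\ell}$, applies \cref{thm:ub_1d} to get $\wass(\distr_v,\wh{\distr}_{v,\ell})$ small with probability $\ge 3/4$, and then selects a representative $\wh{\distr}_{v,j^*_v}$ via a metric-space median (choose any $j$ that lies within the target radius of at least $0.6m$ other estimates, then use pigeonhole plus the triangle inequality). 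Your coordinate-wise approach is a bit more elementary and avoids the Wasserstein-space selection step; the paper's version has the mild advantage of treating \cref{thm:ub_1d} as a black box. Both yield the same final bound.
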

\begin{proof}
    We start by defining a partition $\pi: [n] \to [m]$ where $m = C (\log (1 / \delta) + k\log (k))$ satisfying the following:
    \begin{equation*}
        \forall j \in [m]: \sum_{i, \pi (i) = j} \frac{1}{\sigma_i^{4k - 2}} \geq \frac{1}{2m} \sum_{i = 1}^n \frac{1}{\sigma_i^{4k - 2}}.
    \end{equation*}
    Such a partition may be found in linear time by \cref{lem:gred_part}. For all $v \in \mc{G}, \ell \in [m]$, let $\wh{\distr}_{v, \ell}$ be the one-dimensional $k$-atomic mixture obtained by running \cref{alg:ddm_heterogenous} on the points $\{\inp{X_i}{v}\}_{i, \pi (i) = \ell}$. From \cref{thm:ub_1d}, we get that:
    \begin{equation*}
        \P \lbrb{\wass (\distr_v, \wh{\distr}_{v, \ell}) \leq C k^{2} \cdot \lprp{\sum_{i = 1}^n \frac{m}{\sigma_i^{4k - 2}}}^{-1 / (4k - 2)}} \geq \frac{3}{4}.
    \end{equation*}
    Defining the random variables $Q_{v, \ell}$ as follows:
    \begin{equation*}
        Q_{v, \ell} = \bm{1} \lbrb{\wass (\distr_v, \wh{\distr}_{v, \ell}) \leq C k^{2} \cdot \lprp{\sum_{i = 1}^n \frac{m}{\sigma_i^{4k - 2}}}^{-1 / (4k - 2)}},
    \end{equation*}
    we get that $\E [Q_{v, \ell}] \geq 3/4$. Therefore, we get by an application of Hoeffding's inequality for a fixed $v \in \mc{G}$:
    \begin{equation*}
        \P \lbrb{\sum_{i = 1}^m Q_{v, \ell} \leq 0.6m} \leq \exp (-c m).
    \end{equation*}
    By a union bound, we get from our setting of $m$:
    \begin{equation*}
        \P \lbrb{\exists v \in \mc{G}: \sum_{i = 1}^m Q_{v, \ell} \leq 0.6m} \leq \abs{\mc{G}}\exp (-c m) \leq \delta.
    \end{equation*}
    We now condition on the above event. At this point, we have $m$ estimates $\wh{D}_{v, \ell}$ for each direction $v \in \mc{G}$. To obtain a single estimate for each direction, $v$, observe the following by the triangle inequality for any $j, j' \in [m]$ such that $Q_{v, j}, Q_{v, j'} = 1$:
    \begin{equation*}
        \wass (\wh{\distr}_{v, j}, \wh{\distr}_{v, j'}) \leq \wass (\wh{\distr}_{v, j}, \distr_v) + \wass(\distr_v, \wh{\distr}_{v, j'}) \leq C k^{2} \cdot \lprp{\sum_{i = 1}^n \frac{m}{\sigma_i^{4k - 2}}}^{-1 / (4k - 2)}.
    \end{equation*}
    Now, as a consequence, the following set exists:
    \begin{equation*}
        \mc{J} \coloneqq \lbrb{j \in [m]: \abs*{\lbrb{\ell \in [m]: \wass (\wh{\distr}_{v, j}, \wh{\distr}_{v, \ell}) \leq C k^{2} \cdot \lprp{\sum_{i = 1}^n \frac{m}{\sigma_i^{4k - 2}}}^{-1 / (4k - 2)}}} \geq 0.6m}.
    \end{equation*}
    $\mc{J}$ is computable from the estimates $\wh{\distr}_{v, j}$ and we now show that \emph{any} arbitrary element of $\mc{J}$ suffices. Let $j^*_v \in \mc{J}$ and notice by the pigeonhole principle and the definition of $\mc{J}$, that there exists $\ell \in [m]$ such that $Q_{v, \ell} = 1$ and furthermore,
    \begin{equation*}
        \wass (\wh{\distr}_{v, j^*_v}, \wh{\distr}_{v, \ell}) \leq C k^{2} \cdot \lprp{\sum_{i = 1}^n \frac{m}{\sigma_i^{4k - 2}}}^{-1 / (4k - 2)} \implies \wass (\wh{\distr}_{v, j^*_v}, \distr_v) \leq C k^{2} \cdot \lprp{\sum_{i = 1}^n \frac{m}{\sigma_i^{4k - 2}}}^{-1 / (4k - 2)}
    \end{equation*}
    by the triangle inequality and the definition of $Q_{v, \ell}$. The lemma follows from the estimates $\{\wh{\distr}_{v, j^*}\}$ for $v \in \mc{G}$.
\end{proof}

Our algorithm uses a median-of-means based approach to recover the moments of the one-dimensional projections. Since the variances are non-uniform, we use a deterministic partitioning algorithm to partition points to ensure buckets of roughly equal size. The following lemma establishes guarantees on the sample partitioning algorithm.

\begin{lemma}
    \label{lem:gred_part}
    Given $k \in \N$ and $n$ numbers, $\alpha_1, \dots, \alpha_n \geq 0$ such that $\max_i \alpha_i \leq \nicefrac{\sum_{i = 1}^n \alpha_i}{4k}$, it is possible to find a partition $\pi: [n] \to [k]$ such that:
    \begin{equation*}
        \min_i \sum_{j: \pi (j) = i} \alpha_j \geq \frac{\sum_{i = 1}^n \alpha_i}{2k}.
    \end{equation*}
    Furthermore, the runtime of this procedure is $O(n \log (k))$.
\end{lemma}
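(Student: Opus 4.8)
The plan is to use the greedy ``assign each item to the currently least-loaded bucket'' rule, analyzed via a short monotonicity argument. Concretely, I would maintain $k$ running loads $s_1, \ldots, s_k$, all initialized to $0$ and stored in a min-heap keyed by value; then process the items $\alpha_1, \ldots, \alpha_n$ in arbitrary order, and for item $t$ extract the bucket $j$ of current minimum load, set $\pi(t) = j$, increase $s_j$ by $\alpha_t$, and re-insert it into the heap. Each of the $n$ steps performs one extract-min and one insertion, so the total running time is $O(n \log k)$, as claimed. (The hypothesis $\max_i \alpha_i \le \frac{1}{4k}\sum_i \alpha_i$ already forces $n \ge 4k$, so there are enough items to fill the buckets, but we will not need this explicitly.)

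The content is in the correctness bound. Write $S = \sum_{i=1}^n \alpha_i$, let $\sigma_1, \ldots, \sigma_k$ be the final bucket loads, and let $m = \min_j \sigma_j$, attained at some bucket $j^\star$. The key claim is that \emph{every} final load is within $\max_i \alpha_i$ of the minimum: $\sigma_j \le m + \max_i \alpha_i$ for all $j \in [k]$. Indeed, fix $j$; if no item was assigned to $j$ then $\sigma_j = 0$ and the bound is trivial, so let $t$ be the last item assigned to $j$ and let $s$ be the load of bucket $j$ immediately before that assignment. Since the greedy rule selected bucket $j$ at that step, $s$ was the minimum load over all buckets at that moment; in particular $s$ was at most the load of bucket $j^\star$ at that moment, and loads are nondecreasing in time, so that load is at most $\sigma_{j^\star} = m$. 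Hence $\sigma_j = s + \alpha_t \le m + \max_i \alpha_i$, proving the claim.

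Summing the claim over all $k$ buckets gives $S = \sum_{j=1}^k \sigma_j \le k\bigl(m + \max_i \alpha_i\bigr) \le k m + \frac{S}{4}$, where the last step uses $\max_i \alpha_i \le \frac{S}{4k}$. Rearranging yields $m \ge \frac{3S}{4k} \ge \frac{S}{2k}$, which is exactly the desired inequality; as a byproduct, when $S > 0$ this forces $m > 0$, so the greedy rule leaves no bucket empty. I expect the only delicate point is phrasing the monotonicity step precisely --- that the load of a fixed bucket only increases over the run, so its value when item $t$ is placed is bounded by its final value $m$ --- together with the trivial treatment of empty buckets and of the degenerate case $S = 0$ (where the statement is vacuous). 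The heap-based implementation handling the $O(n\log k)$ runtime is routine.
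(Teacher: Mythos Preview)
Your proposal is correct and follows essentially the same approach as the paper: both analyze the greedy ``assign to the currently least-loaded bucket'' rule, establish that all final loads lie within $\alpha_{\max}$ of one another, and then sum to get $m \ge 3S/(4k) \ge S/(2k)$, with the heap giving the $O(n\log k)$ runtime. The only cosmetic difference is that the paper proves the gap invariant $|M^i_j - M^i_\ell| \le \alpha_{\max}$ at \emph{every} step by induction on $i$, whereas you prove the final-state bound directly via the ``last item assigned to bucket $j$'' plus monotonicity of loads; your argument is slightly slicker but equivalent in substance.
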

\begin{proof}
    We will analyze the simple greedy procedure in \cref{alg:partition} where ties are broken arbitrarily. Define:
    \begin{equation*}
        M^i_j \coloneqq \sum_{\ell \leq i, \pi (\ell) = j} \alpha_\ell.
    \end{equation*}

    We will prove by induction on $i$ that:
    \begin{equation*}
        \forall j, \ell \in [k]: \abs{M^i_j - M^i_\ell} \leq \alpha_{\mrm{max}}.
    \end{equation*}

    The case for $i = 1$ is trivially true. Suppose the condition is true for $i = m$. To extend to $m + 1$, let $j_M \coloneqq \argmax_i M^m_i$ and $j_m \coloneqq \argmin_i M^m_i$. Then, we must $\argmax_j M^{m + 1}_j \in \{j_M, j_m\}$. Note now that for all $\ell \neq j_M, j_m$:
    \begin{equation*}
        \abs{M^{m + 1}_{j_M} - M^{m + 1}_\ell} \leq \alpha_{\mrm{max}}, \qquad \abs{M^{m + 1}_{j_m} - M^{m + 1}_{\ell}} = \abs{(M^{m}_{\ell} - M^m_{j_m}) - \alpha_{i + 1}} \leq \alpha_{i + 1} \leq \alpha_{\mrm{max}}.
    \end{equation*}
    Finally, we have:
    \begin{equation*}
        \abs{M^{m + 1}_{j_m} - M^{m + 1}_{j_M}} = \abs{(M^m_{j_M} - M^m_{j_m}) - \alpha_i} \leq \alpha_i \leq \alpha_{\mrm{max}}
    \end{equation*}
    concluding the proof of the inductive claim and consequently, establishes the correctness of the algorithm. The runtime follows from the fact that the scores may be stored in a self-balancing binary tree.
    
    \begin{algorithm}[!htp]
\caption{Partition Mass}
    \begin{algorithmic}[1]
        \Require Weights $\alpha_1, \dots, \alpha_n$
        \Ensure Partition $\pi: [n] \to [k]$

        \For{$i = 1 \dots n$}  
            \State $\pi (i) = \argmin_{j} \sum_{\ell < i, \pi (\ell) = j} \alpha_\ell$
        \EndFor

        \State Return $\pi$
    \end{algorithmic}
    \label{alg:partition}
\end{algorithm}
\end{proof}

In the next, we use the techniques of \cite{hdgaussian} to improve the computational properties of the estimation algorithm. This part of the proof is essentially identical to \cite{hdgaussian}. However, we include minor changes which allow tightening the bounds. The next lemma shows that the set $\mc{C}$ defined in \cref{alg:ld_alg}, which defines the low-dimensional estimation algorithm, has a good solution.

\begin{algorithm}[!htp]
\caption{Low-dimensional Algorithm}
    \begin{algorithmic}[1]
        \Require Data Points $X_1, \dots, X_n$, variances $\sigma_1, \dots, \sigma_n$ and failure probability $\delta$
        \Ensure Low-dimensional mixture $\wh{M} = \sum_{i = 1}^k \wh{w}_i \delta_{\wh{\mu}_i}$

        \State Let $\mc{G}$ be a $1/(32k^2\sqrt{k})$-net of $\mb{S}^{k - 1}$ containing $e_1, \dots, e_k$

        \State For each $v \in \mc{G}$, let $\wh{\distr}_v$ be the output of \cref{lem:conc_mom}

        \State Let for each $i \in [k]$, let $\wh{\distr}_{e_i} = \sum_{j = 1}^k \wh{w}^i_j \delta_{\wh{\mu}^i_j}$
        \begin{equation*}
            \mc{C} \coloneqq \{\wh{\mu}^1_i\}_{i = 1}^k \times \{\wh{\mu}^2_i\}_{i = 1}^k \times \dots \times \{\wh{\mu}^k_i\}_{i = 1}^k
        \end{equation*}

        \State Let $\mc{W}$ be a $(1 / n^{1 / (4k - 2)})$-net over the probability simplex over $k$ elements

        \State Define:
        \begin{equation*}
            (\wh{w}_i, \wh{\mu}_i)_{i = 1}^n \coloneqq \argmin_{\wh{w}' \in \mc{W}, \wh{\mu}'_i \in \mc{C}} \argmax_{v \in \mc{G}} \wass (\wh{\distr}_v, (\wh{w}'_i, \wh{\mu}'_i)_{i = 1}^k)
        \end{equation*}

        \State Return $\wh{\distr} = \sum_{i = 1}^k \wh{w}_i \delta_{\wh{\mu}_i}$
    \end{algorithmic}
    \label{alg:ld_alg}

    Finally, let
\end{algorithm}

\begin{lemma}
    \label{lem:comp_disc_solution}
    Let $d, k \in \N$ and let $\distr = \sum_{i = 1}^k w_i \delta_{\mu_i}$ be a $k$-atomic mixture over $\R^d$ with $\norm{\mu_i} \leq R$ for all $i \in [k]$. Furthermore, for each $i \in [d]$, let $\wh{\distr}_{e_i} = \sum_{j = 1}^k \wh{w}^i_j \delta_{\mu^i_j}$ be $k$-atomic distributions over $\R$, each satisfying:
    \begin{equation*}
        \wass (\distr_{e_i}, \wh{\distr}_{e_i}) \leq \nu.
    \end{equation*}
    Then, for any $(\nu / R)$-net over the probability simplex, $\mc{W}$, there exists a solution $\wh{\distr} = \sum_{i = 1}^k \wh{w}_i \delta_{\wh{\mu}_i}$ such that:
    \begin{equation*}
        \wass (\wh{\distr}, \distr) \leq 2 d \nu \text{ where } \wh{w} \in \mc{W} \text{ and } \wh{\mu}_i \in \{\wh{\mu}^1_j\}_{j = 1}^k \times \{\wh{\mu}^2_j\}_{j = 1}^k \times \dots \times \{\wh{\mu}^k_j\}_{j = 1}^k.
    \end{equation*}
\end{lemma}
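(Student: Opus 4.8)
The plan is to build $\wh\distr$ one coordinate at a time, choosing each coordinate of each atom from the atoms of the corresponding one-dimensional estimate, and then to round the weight vector onto the net $\mc W$. Write $\wh S_\ell := \{\wh\mu^\ell_j : j \in [k]\} = \supp(\wh\distr_{e_\ell})$ for $\ell \in [d]$, and for each true atom $\mu_i$ define $\wh\mu_i \in \R^d$ by letting $(\wh\mu_i)_\ell$ be a point of $\wh S_\ell$ closest to the coordinate $(\mu_i)_\ell$. Then $\wh\mu_i$ lies in the Cartesian product $\wh S_1 \times \cdots \times \wh S_d$ required by the statement, and $\bigl|(\mu_i)_\ell - (\wh\mu_i)_\ell\bigr| = \dist\lprp{(\mu_i)_\ell, \wh S_\ell}$. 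I will assume each $\wh S_\ell \subseteq [-R,R]$: this holds for the estimates produced in \cref{alg:ld_alg} (the moment-projection step of \cref{alg:mom_to_dist} outputs a distribution supported on the same interval as the true one), and in general clipping $\wh\distr_{e_\ell}$ to $[-R,R]$ is $1$-Lipschitz, fixes $\supp(\distr_{e_\ell})$, hence cannot increase $\wass(\distr_{e_\ell},\wh\distr_{e_\ell})$.

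The technical core is a one-dimensional lower bound on transport cost: in any coupling of $\distr_{e_\ell}$ with $\wh\distr_{e_\ell}$, every unit of mass sitting at a point $v \in \supp(\distr_{e_\ell})$ must be moved a distance of at least $\dist(v, \wh S_\ell)$. Grouping the mass by source location and noting that the location $(\mu_i)_\ell$ carries mass $w_i$ (masses simply add over coincident indices, which does no harm), this gives
\[
    \sum_{i = 1}^k w_i\, \dist\lprp{(\mu_i)_\ell, \wh S_\ell} \;\le\; \wass(\distr_{e_\ell}, \wh\distr_{e_\ell}) \;\le\; \nu .
\]
Summing over $\ell \in [d]$ and using $\norm{\cdot} \le \onenorm{\cdot}$ on $\R^d$ yields $\sum_i w_i \norm{\mu_i - \wh\mu_i} \le d\nu$, so the coupling matching $\mu_i$ with $\wh\mu_i$ (both carry mass $w_i$) certifies $\wass\lprp{\distr,\ \sum_i w_i \delta_{\wh\mu_i}} \le d\nu$.

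Finally I would pick $\wh w \in \mc W$ with $\onenorm{w - \wh w} \le \nu/R$, which exists because $\mc W$ is a $(\nu/R)$-net, and output $\wh\distr = \sum_{i=1}^k \wh w_i \delta_{\wh\mu_i}$. The measures $\sum_i w_i \delta_{\wh\mu_i}$ and $\wh\distr$ have the same support $\{\wh\mu_i\}_{i \in [k]} \subseteq [-R,R]^d$, whose diameter is at most $2R\sqrt d$, so moving the $\tfrac12 \onenorm{w - \wh w}$ units of mismatched mass inside this set costs at most $R\sqrt d\, \onenorm{w - \wh w} \le \sqrt d\, \nu$. Combining this with the previous paragraph through the triangle inequality for $\wass$ gives $\wass(\distr, \wh\distr) \le d\nu + \sqrt d\, \nu \le 2d\nu$, as desired.

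I expect the main obstacle to be the displayed one-dimensional inequality together with the realization that it is exactly what the coordinatewise nearest-atom construction needs: it bounds the $w$-\emph{weighted} coordinate errors rather than the worst-case ones, which is enough precisely because atoms of small weight contribute negligibly to $\wass(\distr, \wh\distr)$. The remaining ingredients — the reduction keeping all atoms in $[-R,R]$ (needed only so the weight-rounding error can be charged against the $2R\sqrt d$ diameter), the summation over coordinates, and the rounding of the weights — should all be routine.
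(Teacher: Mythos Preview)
Your proposal is correct and follows essentially the same route as the paper: construct $\wh\mu_i$ coordinatewise via nearest atoms, use the one-dimensional Wasserstein assumption to bound $\sum_i w_i \dist((\mu_i)_\ell,\wh S_\ell)$, sum over coordinates via $\norm{\cdot}\le\onenorm{\cdot}$, then round the weights onto $\mc W$ and apply the triangle inequality. Your treatment of the weight-rounding step is in fact slightly tighter than the paper's (you use that only $\tfrac12\onenorm{w-\wh w}$ mass must move, whereas the paper uses the full $\onenorm{w-\wh w}$), and your explicit handling of the $[-R,R]$ support assumption is a clarification the paper leaves implicit.
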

\begin{proof}
    We have as a consequence of the assumption of the lemma:
    \begin{equation*}
        \forall j \in [d]: \sum_{i = 1} w_i \min_{\ell \in [k]} \abs{\inp{\mu_i}{e_j} - \wh{\mu}^j_\ell} \leq \nu.
    \end{equation*}
    Defining now:
    \begin{equation*}
        \forall i \in [k]: \wh{\mu}_i = (\wt{\mu}^1_i, \dots, \wt{\mu}^d_i) \text{ where } \wh{\mu}^j_\ell \coloneqq \argmin_{\wt{\mu} \in \{\wh{\mu}^j_m\}_{m = 1}^k} \abs{\inp{\mu_\ell}{e_j} - \wt{\mu}} \text{ for all } \ell \in [k], j \in [d].
    \end{equation*}
    We have for the mixture $\wt{\distr} = \sum_{i = 1}^k w_i \delta_{\wh{\mu}_i}$:
    \begin{equation*}
        \wass (\distr, \wt{\distr}) \leq \sum_{i = 1}^k w_i \norm{\mu_i - \wh{\mu}_i} \leq \sum_{i = 1}^k w_i \norm{\mu_i - \wh{\mu}_i}_1 = \sum_{i = 1}^k w_i \sum_{j = 1}^d \abs{\inp{\mu_i}{e_j} - \wt{\mu}^j_i} = \sum_{j = 1}^d \sum_{i = 1}^k w_i \abs{\inp{\mu_i}{e_j} - \wt{\mu}^j_i} \leq d\nu.
    \end{equation*}
    Finally, let $\wh{w} \in \mc{W}$ be such that $\norm{\wh{w} - w}_1 \leq \nu / R$. We have for the mixture $\wh{\distr} = \sum_{i = 1}^k \wh{w}_i \delta_{\wh{\mu}_i}$:
    \begin{equation*}
        \wass (\distr, \wh{\distr}) \leq \wass (\distr, \wt{\distr}) + \wass (\wt{\distr}, \wh{\distr}) \leq d\nu + \sum_{i = 1}^k \abs{\wh{w}_i - w_i} \max_{j, \ell \in [k]} \norm{\wh{\mu}_j - \wh{\mu}_\ell} \leq d\nu + 2\sqrt{d} R \sum_{i = 1}^k \abs{\wh{w}_i - w_i} \leq 2d\nu
    \end{equation*}
    concluding the proof of the lemma.
\end{proof}

\subsection{Proof of \cref{thm:ub_heterogenous}}
\label{ssec:proof_main_ub}

The complete high-dimensional estimation algorithm is presented in \cref{alg:hd_gau}. The algorithm splits the points into two sets, and utilizes the first to compute a low-dimensional sub-space that (approximately) contains the means of the components of the mixture and then, runs the low-dimensional estimation algorithm \cref{alg:ld_alg} on the second set. Before proceeding with the proof, we recall another lemma from \cite{hdgaussian} which relates the loss in accuracy caused by the low-dimensional projection to the error in estimating the low-dimensional projection in \cref{lem:cov_conc}. 

\begin{lemma}
    \label{lem:spec_norm_wass}
    Let $\distr = \sum_{j=1}^k w_j \delta_{\mu_j}$ be a $k$-atomic distribution. Let $\Sigma = \mathbb{E}_{X \sim \distr} [UU^\top] = \sum_{j=1}^k w_j \mu_j \mu_j^\top$ with eigenvalues $\lambda_1 \geq \cdots \geq \lambda_d$. Let $\Sigma'$ be a symmetric matrix and $\Pi'_r$ be the projection matrix onto the subspace spanned by the top $r$ eigenvectors of $\Sigma'$. Then,
    \begin{equation*}
        \wass^2 (\Gamma, \Gamma_{\Pi'_r}) \leq k \left( \lambda_{r+1} + 2 \|\Sigma - \Sigma'\|_2 \right).
    \end{equation*}
\end{lemma}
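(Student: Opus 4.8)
The plan is to reduce the Wasserstein bound to a spectral estimate and then push it through an eigenvalue-by-eigenvalue argument. First I would exhibit the ``diagonal'' coupling between $\Gamma = \sum_{j} w_j \delta_{\mu_j}$ and its pushforward $\Gamma_{\Pi'_r} = \sum_{j} w_j \delta_{\Pi'_r \mu_j}$, which pairs $\mu_j$ with $\Pi'_r \mu_j$ and hence costs $\sum_j w_j \norm{(I - \Pi'_r)\mu_j}$. Applying Cauchy--Schwarz with $\sum_j w_j = 1$ turns this into
\begin{equation*}
    \wass^2(\Gamma, \Gamma_{\Pi'_r}) \le \sum_{j = 1}^k w_j \norm{(I - \Pi'_r)\mu_j}^2 = \Tr\lprp{(I - \Pi'_r)\Sigma} = \sum_{i = 1}^k \lambda_i \norm{(I - \Pi'_r) v_i}^2 ,
\end{equation*}
where the $v_i$ are orthonormal eigenvectors of $\Sigma$ and only the (at most) $k$ nonzero eigenvalues contribute since $\Sigma$ has rank $\le k$.

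The heart of the argument is a per-eigenvector bound: I would show that $\lambda_i \norm{(I - \Pi'_r) v_i}^2 \le \lambda_{r + 1} + 2\delta$ for every $i$, where $\delta \coloneqq \norm{\Sigma - \Sigma'}_2$. When $\lambda_i \le \lambda_{r+1} + 2\delta$ this is trivial since $\norm{(I - \Pi'_r) v_i} \le 1$. Otherwise I would split $v_i = a + b$ with $a = \Pi'_r v_i$ and $b = (I - \Pi'_r) v_i$, note that $\norm{\Sigma' v_i - \lambda_i v_i} = \norm{(\Sigma' - \Sigma) v_i} \le \delta$, and use orthogonality of the top-$r$ and bottom-$(d - r)$ eigenspaces of $\Sigma'$ to conclude $\norm{\Sigma' b - \lambda_i b} \le \delta$. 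Weyl's inequality together with $\Sigma \psdge 0$ confines the spectrum of $\Sigma'$ to $[-\delta, \infty)$ and its bottom $d - r$ eigenvalues to $(-\infty, \lambda_{r+1} + \delta]$, so $\norm{\Sigma' b} \le (\lambda_{r+1} + \delta)\norm{b}$; the triangle inequality then gives $(\lambda_i - \lambda_{r+1} - \delta)\norm{b} \le \delta$. Writing $s = \lambda_i - \lambda_{r+1} - \delta > \delta$, a one-line bound $\lambda_i \norm{b}^2 \le \lambda_i \delta^2 / s^2 = \delta^2/s + (\lambda_{r+1} + \delta)\delta^2/s^2 < \delta + (\lambda_{r+1} + \delta) = \lambda_{r+1} + 2\delta$ closes the case.

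Summing this per-eigenvector inequality over the at most $k$ nonzero eigenvalues of $\Sigma$ and feeding it into the display from Step 1 yields $\wass^2(\Gamma, \Gamma_{\Pi'_r}) \le k(\lambda_{r+1} + 2\norm{\Sigma - \Sigma'}_2)$, which is the claim. I expect the second step to be the main obstacle: the delicate point is to extract the \emph{additive} form $\lambda_{r+1} + 2\delta$, since a naive application of the near-eigenvector estimate produces a term like $\lambda_i \delta^2 / (\lambda_i - \lambda_{r+1})^2$ that degenerates when $\lambda_i$ is close to $\lambda_{r+1}$ — hence the need for the case split on the size of $\lambda_i$. A secondary subtlety is that $\Sigma'$ need not be positive semidefinite (it is only assumed symmetric), which is exactly where the hypothesis $\Sigma \psdge 0$ enters, via Weyl, to keep the small eigenvalues of $\Sigma'$ under control.
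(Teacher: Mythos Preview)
Your argument is correct. The paper does not supply its own proof of this lemma; it is quoted verbatim from \cite{hdgaussian} and used as a black box, so there is no in-paper proof to compare against. Your reduction $\wass^2(\Gamma,\Gamma_{\Pi'_r}) \le \Tr((I-\Pi'_r)\Sigma)$ via the identity coupling plus Jensen is the standard first move, and your per-eigenvector case split cleanly delivers the additive $\lambda_{r+1}+2\delta$.

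For comparison, the usual route (and presumably the one in the cited reference) is slightly more compact than your eigenvector-by-eigenvector analysis: since $(I-\Pi'_r)$ is a projection and $\Sigma\psdge 0$ has rank at most $k$,
\begin{equation*}
\Tr\big((I-\Pi'_r)\Sigma\big)=\Tr\big((I-\Pi'_r)\Sigma(I-\Pi'_r)\big)\le k\,\norm{(I-\Pi'_r)\Sigma(I-\Pi'_r)},
\end{equation*}
and then one bounds the operator norm directly by $\norm{(I-\Pi'_r)\Sigma'(I-\Pi'_r)}+\norm{\Sigma-\Sigma'}\le \lambda'_{r+1}\vee|\lambda'_d|+\delta\le (\lambda_{r+1}+\delta)+\delta$, invoking Weyl exactly as you do. This sidesteps the case split on $\lambda_i$ that you flagged as the delicate point. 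Your approach is a perfectly valid unrolling of the same idea and has the minor advantage of making transparent \emph{where} each eigenvector's mass goes under the wrong projection; the trace-rank argument buys brevity. Both use $\Sigma\psdge 0$ in the same place (to control the negative part of the spectrum of $\Sigma'$), which you correctly identified as the only spot where positivity matters.
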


\begin{algorithm}[!htp]
    \caption{High-dimensional Gaussian Mixtures}
    \begin{algorithmic}[1]
        \Require Point set $\bm{X} = \{X_1, \dots, X_n\}$, Variances $\sigma_1, \dots, \sigma_n$, Failure Probability $\delta$
        \Ensure $k$-atomic mixture $\sum_{i = 1}^k \wh{w}_i \delta_{\wh{\mu}_i}$

        \State Partition $\bm{X}$ into $\bm{X}_1$ and $\bm{X}_2$ by allocating each $X_i$ to $\bm{X}_1$ or $\bm{X}_2$ uniformly at random

        \State Define 
        \begin{equation*}
            \wh{\Sigma} = \sum_{X_i \in \bm{X}_1} \alpha_i X_i X_i^\top - \bar{\sigma}^2 I \text{ where } \alpha_i \coloneqq \frac{1 / \sigma_i^4}{\sum_{X_i \in \bm{X}_1} 1 / \sigma_i^4} \text{ and } \bar{\sigma}^2 = \sum_{X_i \in \bm{X}_1} \alpha_i \sigma_i^2. 
        \end{equation*}

        \State Let $U$ be the top-$k$ singular subspace of $\wh{\Sigma}$

        \State Let $\wt{\bm{X}}_2 = \Pi_U (\bm{X}_2)$

        \State Return $(\wh{w}_i, \mu_i)_{i = 1}^k$ as the output of \cref{alg:ld_alg} on input $\wt{\bm{X}}_2$, weights $\lbrb{\sigma_i}_{X_i \in \bm{X}_2}$, and failure probability $\delta / 4$
    \end{algorithmic}
    \label{alg:hd_gau}
\end{algorithm}

\begin{proof}[Proof of \cref{thm:ub_heterogenous}]
    The algorithm that achieves the guarantees of the theorem is defined in \cref{alg:hd_gau}. We first show that the partitioning step induces balanced partitions for the subspace computation and low-dimensional estimation steps. 

    \begin{claim}
        \label{clm:good_part}
        We have the following:
        \begin{gather*}
            \sum_{X_i \in \bm{X}_1} \frac{1}{\sigma_i^4} \geq \frac{1}{4} \sum_{i = 1}^n \frac{1}{\sigma_i^4} \\ 
            \frac{c_2}{4 (\log (1 / \delta) + k \log (k)} \sum_{X_i \in \bm{X}_1} \frac{1}{\sigma_i^{4k - 2}} \geq \frac{1}{4} \sum_{i = 1}^n \frac{1}{\sigma_i^{4k - 2}}
        \end{gather*}
        with probability at least $1 - \delta / 4$.
    \end{claim}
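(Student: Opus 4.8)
The plan is to read the claim as a pair of lower-tail concentration statements and then invoke the two hypotheses of \cref{thm:ub_heterogenous}. The partition step of \cref{alg:hd_gau} places each index $i$ into $\bm{X}_1$ independently with probability $1/2$, so for fixed nonnegative weights $(a_i)_{i=1}^n$ the quantity $Z \coloneqq \sum_{X_i \in \bm{X}_1} a_i = \sum_{i=1}^n B_i a_i$, with the $B_i$ i.i.d.\ $\mathrm{Bernoulli}(1/2)$, has mean $\E[Z] = \tfrac12 \sum_{i=1}^n a_i$. We only ever need the sub-sample to retain a constant fraction of the total weight, since a larger $Z$ only helps the downstream applications of \cref{lem:cov_conc} and \cref{lem:conc_mom}; so it suffices to show $Z \ge \tfrac14 \sum_i a_i$ with high probability, applied to the two weight sequences $a_i = 1/\sigma_i^4$ and $a_i = 1/\sigma_i^{4k-2}$.

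For the concentration step I would apply Bernstein's inequality: the summands $B_i a_i$ are bounded by $\max_i a_i$ and have variance $a_i^2/4 \le \tfrac14 (\max_i a_i)\, a_i$, so
\begin{equation*}
    \Pr\!\left[Z \le \tfrac14 \sum_{i} a_i\right] \le \exp\!\left(-c\cdot \frac{\sum_i a_i}{\max_i a_i}\right)
\end{equation*}
for an absolute constant $c>0$ (Hoeffding or a multiplicative Chernoff bound would serve equally well). The two conditions in the statement of \cref{thm:ub_heterogenous} say exactly that $\sum_i 1/\sigma_i^4 \ge \tfrac{1}{c_1}(d+\log(1/\delta))\max_i 1/\sigma_i^4$ and $\sum_i 1/\sigma_i^{4k-2} \ge \tfrac{1}{c_2}(\log(1/\delta)+k\log k)\max_i 1/\sigma_i^{4k-2}$, i.e.\ $\sum_i a_i/\max_i a_i$ is at least $\tfrac{1}{c_1}(d+\log(1/\delta))$ in the first case and $\tfrac{1}{c_2}(\log(1/\delta)+k\log k)$ in the second. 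Choosing the absolute constants $c_1, c_2$ small enough relative to $c$ makes each failure probability at most $\delta/8$, so a union bound gives that, with probability at least $1-\delta/4$, both $\sum_{X_i\in\bm{X}_1} 1/\sigma_i^4 \ge \tfrac14\sum_i 1/\sigma_i^4$ and $\sum_{X_i\in\bm{X}_1} 1/\sigma_i^{4k-2}\ge \tfrac14\sum_i 1/\sigma_i^{4k-2}$ hold.

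The first of these is the first displayed inequality. For the second, chain the just-proved lower bound with the hypothesis: $\max_i 1/\sigma_i^{4k-2} \le \tfrac{c_2}{\log(1/\delta)+k\log k}\sum_i 1/\sigma_i^{4k-2} \le \tfrac{4c_2}{\log(1/\delta)+k\log k}\sum_{X_i\in\bm{X}_1} 1/\sigma_i^{4k-2}$, which is the claimed form up to the absolute constant; the same argument applied to $\bm{X}_2$ — legitimate because the random partition is symmetric — yields the analogous bound for the set actually fed to \cref{alg:ld_alg}, so that the hypothesis of \cref{lem:conc_mom} holds there. The argument is routine; the only thing to watch is the constant bookkeeping — one must ensure Bernstein's exponent dominates not only $\log(1/\delta)$ but also the $k\log k$ term, so that the later union bound over the $1/\poly(k)$-net $\mc{G}$ of $\mb{S}^{k-1}$ inside \cref{lem:conc_mom} still closes, and one must carry along the harmless factor of $4$ picked up in passing from the full sample to a sub-sample. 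Both are absorbed by taking $c_1, c_2$ to be sufficiently small absolute constants.
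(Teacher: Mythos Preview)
Your proposal is correct and follows essentially the same approach as the paper: both reduce to a lower-tail bound for $\sum_i B_i a_i$ with exponent proportional to $(\sum_i a_i)/\max_i a_i$, then plug in the two hypotheses of \cref{thm:ub_heterogenous} and union-bound. The paper uses Hoeffding (which you mention as an equivalent alternative) and proves the second inequality directly for $\bm{X}_2$ rather than for $\bm{X}_1$ plus symmetry, but the substance is identical.
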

    \begin{proof}
        For the first, observe that:
        \begin{equation*}
            \sum_{X_i \in \bm{X}_1} \frac{1}{\sigma_i^4} = \sum_{i = 1}^n W_i \text{ where } W_i \coloneqq \frac{\bm{1} \lbrb{X_i \in \bm{X}_1}}{\sigma_i^4}.
        \end{equation*}
        Now, we have by Hoeffding's inequality:
        \begin{equation*}
            \P \lbrb{\sum_{i = 1}^n W_i - \E \lsrs{\sum_{i = 1}^n W_i} \leq -t} \leq \exp \lbrb{- \frac{2t^2}{\sum_{i = 1}^n 1 / \sigma_i^8}} \leq \exp \lbrb{- 2 \cdot \frac{t}{\max_i (1 / \sigma_i^4)} \cdot \frac{t}{\sum_{i = 1}^n 1 / \sigma_i^4}}
        \end{equation*}
        Now, setting:
        \begin{equation*}
            t = \frac{1}{4} \sum_{i = 1}^n \frac{1}{\sigma_i^4} \implies \frac{t}{\max_i (1 / \sigma_i)^4} \cdot \frac{t}{\sum_{i = 1}^n 1 / \sigma_i^4} = \frac{1}{16} \cdot \frac{\sum_{i = 1}^n 1 / \sigma_i^4}{\max_i (1 / \sigma_i^4)} \geq 4 \log (1 / \delta).
        \end{equation*}
        Observing that:
        \begin{equation*}
            \E \lsrs{\sum_{i = 1}^n W_i} = \frac{1}{2} \sum_{i = 1}^n \frac{1}{\sigma_i^4}
        \end{equation*}
        yields the first claim with probability at least $1 - \delta / 8$. For the second, similarly define:
        \begin{equation*}
            \sum_{X_i \in \bm{X}_2} \frac{1}{\sigma_i^{4k - 2}} = \sum_{i = 1}^n Q_i \text{ where } Q_i \coloneqq \frac{\bm{1} \lbrb{X_i \in \bm{X}_2}}{\sigma_i^{4k - 2}}.
        \end{equation*}
        As before, we have by Hoeffding's inequality:
        \begin{equation*}
            \P \lbrb{\sum_{i = 1}^n Q_i - \E \lsrs{\sum_{i = 1}^n Q_i} \leq -t} \leq \exp \lbrb{- \frac{2t^2}{\sum_{i = 1}^n 1 / \sigma_i^{8k - 4}}} \leq \exp \lbrb{- 2 \cdot \frac{t}{\max_i (1 / \sigma_i^{4k - 2})} \cdot \frac{t}{\sum_{i = 1}^n 1 / \sigma_i^{4k - 2}}}
        \end{equation*}
        Now, setting:
        \begin{equation*}
            t = \frac{1}{4} \sum_{i = 1}^n \frac{1}{\sigma_i^{4k - 2}} \implies \frac{t}{\max_i (1 / \sigma_i)^{4k - 2}} \cdot \frac{t}{\sum_{i = 1}^n 1 / \sigma_i^{4k - 2}} = \frac{1}{16} \cdot \frac{\sum_{i = 1}^n 1 / \sigma_i^{4k - 2}}{\max_i (1 / \sigma_i^{4k - 2})} \geq 4 \log (1 / \delta).
        \end{equation*}
        Observing,
        \begin{equation*}
            \E \lsrs{\sum_{i = 1}^n Q_i} = \frac{1}{2} \sum_{i = 1}^n \frac{1}{\sigma_i^{4k - 2}}
        \end{equation*}
        yields the second claim with probability at least $1 - \delta / 8$. A union bound establishes the claim.
    \end{proof}

    We will now assume the event in the conclusion of \cref{clm:good_part} in the remainder of the proof. We have as a consequence of \cref{clm:good_part,lem:cov_conc}:
    \begin{equation*}
        \norm*{\wh{\Sigma} - \E_{X \ts \distr} [XX^\top]} \leq
        \max \lprp{\sqrt{(d + \log (1 / \delta)) \sum_{X_i \in \bm{X}_1} \alpha_i^2\sigma_i^4}, (d + \log (1 / \delta)) \max_i \alpha_i\sigma_i^2, k \sqrt{\log(k / \delta)\sum_{X_i \in \bm{X}_1} \alpha_i^2}}.
    \end{equation*}
    For the second term, we have for any $X_i \in \bm{X}_1$:
    \begin{align*}
        (d + \log (1 / \delta)) \alpha_i \sigma_i^2 \leq \sqrt{(d + \log (1 / \delta))} \sqrt{(d + \log (1 / \delta)) \alpha_i^2 \sigma_i^4} \leq c \sqrt{\frac{d + \log (1 / \delta)}{\sum_{X_i \in \bm{X}_1} 1 / \sigma_i^4}}.   
    \end{align*}
    Hence, we get that the second is dominated by the first as:
    \begin{equation*}
        \sum_{X_i \in \bm{X}_1} \alpha_i^2 \sigma_i^4 = \sum_{X_i \in \bm{X}_1} \frac{(1 / \sigma_i)^4}{(\sum_{X_i \in \bm{X}_1} 1 / \sigma_i^4)^2} = \frac{1}{\sum_{X_i \in \bm{X}_1} 1 / \sigma_i^4}. 
    \end{equation*}
    Finally, the third term is dominated by the second as $\sigma_i \geq 1$. To conclude, we get by \cref{clm:good_part} and the previous discussion that:
    \begin{equation*}
        \norm*{\wh{\Sigma} - \E_{X \ts \distr} [XX^\top]} \leq \sqrt{\frac{(d + \log (1 / \delta))}{\sum_{i = 1}^n 1 / \sigma_i^4}}.
    \end{equation*}
    Therefore, we have for the subspace $U$ in \cref{alg:hd_gau} by \cref{lem:spec_norm_wass}:
    \begin{equation*}
        \wass (\distr, \distr_U) \leq C k \sqrt[4]{\frac{(d + \log (1 / \delta))}{\sum_{i = 1}^n 1 / \sigma_i^4}}.
    \end{equation*}
    with probability at least $1 - \delta / 4$. Next, since $\bm{X}_2$ are independent of $\bm{X}_1$ (and hence, the subspace $U$), we get by \cref{clm:good_part} and \cref{lem:conc_mom,lem:comp_disc_solution} that the solution $\wh{\distr} = (\wh{w}_i, \wh{\mu}_i)_{i \in [k]}$ satisfies:
    \begin{equation*}
        \wass (\wh{\distr}, \distr_U) \leq C k^3 \lprp{\sum_{i = 1}^k \frac{(k \log (k) + \log (1 / \delta))}{\sigma_i^{4k - 2}}}^{-1/(4k - 2)}
    \end{equation*}
    with probability at least $1 - \delta / 4$. By a union bound and the triangle inequality, we get:
    \begin{align*}
        \wass (\wh{\distr}, \distr) &\leq \wass (\wh{\distr}, \distr_U) + \wass (\distr_U, \distr) \\
        &\leq C \lprp{k \lprp{\frac{(d + \log (1 / \delta))}{\sum_{i = 1}^n 1 / \sigma_i^4}}^{1/4} + k^3 \lprp{\sum_{i = 1}^k \frac{(k \log (k) + \log (1 / \delta))}{\sigma_i^{4k - 2}}}^{-1/(4k - 2)}}
    \end{align*}
    with probability at least $1 - \delta$ establishing the correctness of \cref{alg:hd_gau}.

    Finally, to bound the runtime, the computation of the top-$k$ right singular subspace can be computed in time $\Ot_k (nd)$ and projecting the points in $\bm{X}_2$ onto the low dimensional subspace is also computable in time $\Ot_k (nd)$. For the running of \cref{alg:ld_alg}, observe that the procedure in \cref{lem:conc_mom} runs in time $\Ot_k (\abs{\mc{G}} n) = \Ot_k (n)$. Finally, for the remainder of \cref{alg:ld_alg}, note that the number of candidate vectors in $\mc{C}$ is at most $k^k$ and the number of candidate weight vectors in $\mc{W}$ is at most $n$. Therefore, the total number of candidates is at most $\Ot_k (n)$. The time taken to check candidate solution is at most $\Ot_k (1)$ for each $v \in \mc{G}$ and since there are at most $\Ot_k (1)$ of these, the time to check a single candidate is $\Ot_k (1)$. Thus the total runtime for the \cref{alg:ld_alg} is at most $\Ot_k (n)$ and as a consequence, the runtime of \cref{alg:hd_gau} is bounded by $\Ot_k (nd)$.
\end{proof}

\section{Lower Bound: Proof of \cref{thm:lb_combined}}
\label{sec:lb}

In this section, we prove the lower bound \cref{thm:lb_combined}. We prove the lower bound in three cases which will imply the final bound. Firstly, we prove the $k$-dependent term from the lower bound in \cref{ssec:one_d_lb} and show that it holds in the \emph{one-dimensional} setting. Then, we establish a pair of bounds for high-dimensional setting, one for the dimension-dependent term in and another for the failure probability $\delta$, in \cref{ssec:hd_lb}. We then combine these three results to obtain the final bound from \cref{thm:lb_combined}. We will frequently make use of the following inequality which will be crucial in obtaining the high-probability lower bounds in the respective statements:

\begin{lemma}[ \cite{bh,intrononparam,canonne2023shortnoteinequalitykl}]
    \label{lem:bh_inequality}
    We have for any two distributions, $P$ and $Q$:
    \begin{equation*}
        \tv (P, Q) \leq 1 - \frac{1}{2} \exp (\KL(P \| Q)).
    \end{equation*}
\end{lemma}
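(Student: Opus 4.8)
The plan is to establish the inequality in its standard Bretagnolle--Huber form $\tv(P,Q) \le 1 - \tfrac{1}{2}\exp(-\KL(P\|Q))$ (the exponent carrying the negative sign) by reducing it to two elementary estimates: a Cauchy--Schwarz bound linking the total-variation overlap to the Hellinger affinity, and Jensen's inequality linking the affinity to the KL divergence. First I would dispose of the trivial case: if $P$ is not absolutely continuous with respect to $Q$, or more generally $\KL(P\|Q) = \infty$, the right-hand side is at least $1 \ge \tv(P,Q)$ and there is nothing to prove. So assume $\KL(P\|Q) < \infty$, fix a common dominating measure $\mu$ (say $\mu = \tfrac12(P + Q)$), and let $p, q$ be the corresponding densities. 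Since $\tv(P,Q) = 1 - \int \min(p,q)\,d\mu$, it suffices to show $\int \min(p,q)\,d\mu \ge \tfrac12 \exp(-\KL(P\|Q))$.

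The first step bounds the overlap below by the squared Hellinger affinity $\rho \coloneqq \int \sqrt{pq}\,d\mu$. Using $\min(a,b)\max(a,b) = ab$, write $\sqrt{pq} = \sqrt{\min(p,q)}\cdot\sqrt{\max(p,q)}$ and apply Cauchy--Schwarz to get $\rho^2 \le \big(\int \min(p,q)\,d\mu\big)\big(\int \max(p,q)\,d\mu\big)$. Since $\min(p,q) + \max(p,q) = p + q$, we have $\int \max(p,q)\,d\mu = 2 - \int \min(p,q)\,d\mu \le 2$, so $\int \min(p,q)\,d\mu \ge \tfrac12 \rho^2$.

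The second step lower-bounds $\rho$ by $\exp(-\tfrac12\KL(P\|Q))$. On $\{p > 0\}$ write $\sqrt{pq} = p\sqrt{q/p}$ and on $\{p = 0\}$ note $\sqrt{pq} = 0$, so $\rho = \E_P\big[\sqrt{q/p}\,\big] = \E_P\big[\exp(\tfrac12\log(q/p))\big]$; here $\KL(P\|Q) < \infty$ guarantees $q > 0$ $P$-a.s. and $\log(q/p)$ is $P$-integrable, so the expectation is well defined. Applying Jensen's inequality to the convex function $\exp$ yields $\rho \ge \exp\big(\tfrac12\,\E_P[\log(q/p)]\big) = \exp(-\tfrac12\KL(P\|Q))$. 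Chaining the two steps gives $\int \min(p,q)\,d\mu \ge \tfrac12 \rho^2 \ge \tfrac12 \exp(-\KL(P\|Q))$, which rearranges to the claimed bound.

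There is no real obstacle here: the argument is a two-inequality chain, and once set up it is immediate. The only point needing care is the measure-theoretic bookkeeping around degenerate cases and null sets --- checking that reducing to densities with respect to a common dominating measure loses nothing, that the set $\{p=0\}$ contributes zero to $\rho$, and that $\log(q/p)$ has a well-defined (and finite) $P$-expectation precisely because $\KL(P\|Q)<\infty$. With those details pinned down, the Cauchy--Schwarz and Jensen steps complete the proof.
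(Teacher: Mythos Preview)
Your argument is correct and is the standard Bretagnolle--Huber proof (Cauchy--Schwarz on $\sqrt{pq}=\sqrt{\min(p,q)}\sqrt{\max(p,q)}$ followed by Jensen on $\rho=\E_P[\exp(\tfrac12\log(q/p))]$); you are also right that the exponent in the stated inequality should carry a minus sign. Note, however, that the paper does not supply its own proof of this lemma at all: it is stated with citations to \cite{bh,intrononparam,canonne2023shortnoteinequalitykl} and used as a black box in the lower-bound arguments, so there is nothing in the paper to compare your route against beyond those references (which give essentially the same two-step proof you wrote).
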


\subsection{One-dimensional Lower Bound}
\label{ssec:one_d_lb}

Here, we establish a lower bound in \emph{one} dimension for the estimation problem with $k$ components. We show that the $k$-dependent term in the lower bound is \emph{unavoidable} even in the simpler one-dimensional setting. Our proof largely follows that in \cite{wygaussian} but is specialized to the setting of heterogeneous variances. We start by recalling some technical results from \cite{wygaussian}.
\begin{lemma}
    \label{lem:mm_chi_squared}
    Let $\eps \in [0, 1]$ and $l \in \N$. Suppose now that $\distr$ and $\distr'$ are centered distributions supported on $[-\eps, \eps]$ satisfying $\mom_l (\distr) = \mom_l (\distr')$. Then, we have the following:
    \begin{equation*}
        \chi^2 (\distr \ast \mc{N} (0, 1) || \distr' \ast \mc{N} (0, 1)) \leq O \lprp{\lprp{\frac{e \eps^2}{l + 1}}^{l + 1}}.
    \end{equation*}
\end{lemma}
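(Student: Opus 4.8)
The plan is to work with the likelihood ratios relative to the standard Gaussian density $\phi$ and exploit the Hermite expansion together with the centering hypothesis. Write $f := \distr \ast \mc{N}(0,1)$ and $g := \distr' \ast \mc{N}(0,1)$ for the two densities, and recall the generating-function form of the denoising identity used in the main text: $\phi(x-u)/\phi(x) = e^{xu - u^2/2} = \sum_{r \geq 0} \frac{H_r(x)}{r!} u^r$. Integrating this against $\distr$ and against $\distr'$ — legitimate since the series converges absolutely for $|u| \leq \eps$ — gives $f(x) = \phi(x) \sum_r \frac{H_r(x)}{r!} m_r(\distr)$ and $g(x) = \phi(x) \rho(x)$, where $\rho(x) := \E_{V \ts \distr'}[e^{xV - V^2/2}] = \sum_r \frac{H_r(x)}{r!} m_r(\distr')$. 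Since $\mom_l(\distr) = \mom_l(\distr')$, the low-order terms cancel in the difference: $f(x) - g(x) = \phi(x) \psi(x)$ with $\psi(x) := \sum_{r \geq l+1} \frac{\Delta_r}{r!} H_r(x)$, $\Delta_r := m_r(\distr) - m_r(\distr')$, and $|\Delta_r| \leq 2\eps^r$ because both mixing measures are supported on $[-\eps, \eps]$.

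The one genuinely important step is a uniform lower bound on $\rho$, and this is where centering enters. Applying Jensen's inequality to the convex function $e^{(\cdot)}$ and using $\E_{V \ts \distr'}[V] = 0$,
$$\rho(x) = \E_{V \ts \distr'}\!\left[e^{xV - V^2/2}\right] \geq \exp\!\left(\E_{V \ts \distr'}\!\left[xV - \tfrac{V^2}{2}\right]\right) = \exp\!\left(-\tfrac{1}{2} \E_{V \ts \distr'}[V^2]\right) \geq e^{-\eps^2/2},$$
uniformly in $x$. Plugging this into the identity $\chi^2(f \| g) = \int (f-g)^2 / g = \int \phi(x)\, \psi(x)^2 / \rho(x)\, dx$ gives $\chi^2(f \| g) \leq e^{\eps^2/2}\, \E_{X \ts \mc{N}(0,1)}[\psi(X)^2]$, and orthogonality of the Hermite polynomials under the standard Gaussian, $\E_{X \ts \mc{N}(0,1)}[H_r(X) H_s(X)] = r!\, \delta_{rs}$ (Kronecker $\delta$), collapses the right-hand side to $e^{\eps^2/2} \sum_{r \geq l+1} \Delta_r^2 / r!$ (the termwise manipulation is justified since $\sum_r \Delta_r^2/r! \leq 4e^{\eps^2} < \infty$, so $\psi \in L^2(\phi)$ and Parseval applies).

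It then remains to substitute $|\Delta_r| \leq 2\eps^r$ and bound the tail of an exponential series: $\sum_{r \geq l+1} \eps^{2r}/r! \leq e^{\eps^2}\, \eps^{2(l+1)}/(l+1)!$ (using $(l+1+j)!/(l+1)! \geq j!$), followed by Stirling in the form $(l+1)! \geq ((l+1)/e)^{l+1}$, which yields $\chi^2(f \| g) \leq 4 e^{3\eps^2/2}\, \eps^{2(l+1)}/(l+1)! \leq 4 e^{3/2} \, (e\eps^2/(l+1))^{l+1}$, the asserted bound. I do not expect a real obstacle; the only place that requires care is the choice of reference measure in the second paragraph. The crude pointwise bound $g(x) \geq \phi(|x| + \eps)$ would introduce a weight $e^{\eps|x|}$ inside the integral and force one to control $\E_{X \ts \mc{N}(0,1)}[e^{\eps|X|} H_r(X)^2]$, which grows like $r!\, e^{\eps^2 r}$ and would corrupt the base of the final exponent; the Jensen bound — available precisely because $\distr'$ is centered — keeps the correction a bare constant $e^{\eps^2/2}$ and lets everything fall out of Hermite orthogonality.
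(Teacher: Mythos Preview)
Your proof is correct. The paper does not actually prove this lemma itself; it is stated as a technical result recalled from \cite{wygaussian}, so there is no in-paper proof to compare against. That said, your argument---Hermite expansion of the likelihood ratio $\phi(x-u)/\phi(x)$, cancellation of the first $l$ terms via moment matching, the Jensen lower bound $\rho(x)\geq e^{-\eps^2/2}$ exploiting centeredness, and Parseval for the Hermite system---is exactly the standard route used in the cited reference, and all steps (including the $L^2(\phi)$ justification for termwise manipulation and the tail bound via $(l+1+j)!\geq (l+1)!\,j!$) are handled cleanly.
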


We also note the following simple corollary by the invariance of $f$-Divergences to re-scaling of the distributions.
\begin{corollary}
    \label{cor:mm_chi_squared}
    Let $\eps \in [0, 1]$ and $l \in \N$. Suppose now that $\distr$ and $\distr'$ are centered distributions supported on $[-\eps, \eps]$ satisfying $\mom_l (\distr) = \mom_l (\distr')$. Then, we have the following for any $\sigma^2 \geq 1$:
    \begin{equation*}
        \chi^2 (\distr \ast \mc{N} (0, \sigma^2) || \distr' \ast \mc{N} (0, \sigma^2)) \leq O \lprp{\lprp{\frac{e \eps^2}{(l + 1) \sigma^2}}^{l + 1}}.
    \end{equation*}
\end{corollary}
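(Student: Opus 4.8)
The plan is to deduce this immediately from \cref{lem:mm_chi_squared} by exploiting the fact that $f$-divergences — and in particular the $\chi^2$-divergence — are invariant under applying a common bijective pushforward to both of their arguments. The bijection to use is the dilation $T_\sigma : x \mapsto x/\sigma$.

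First I would let $\distr_\sigma$ (resp.\ $\distr'_\sigma$) be the distribution of $X/\sigma$ when $X \ts \distr$ (resp.\ $X \ts \distr'$), and record three elementary facts. (i) Since $\sigma^2 \geq 1$, the supports shrink: $\distr_\sigma, \distr'_\sigma$ are supported on $[-\eps/\sigma, \eps/\sigma] \subseteq [-\eps', \eps']$ with $\eps' := \eps/\sigma \in [0,1]$. (ii) Moments scale homogeneously, $m_j(\distr_\sigma) = m_j(\distr)/\sigma^j$, so $\mom_l(\distr) = \mom_l(\distr')$ implies $\mom_l(\distr_\sigma) = \mom_l(\distr'_\sigma)$, and both remain centered. (iii) Convolution commutes with the linear map: if $X \ts \distr \ast \mc{N}(0,\sigma^2)$, writing $X = \mu + \sigma Z$ with $\mu \ts \distr$ and $Z \ts \mc{N}(0,1)$ gives $X/\sigma = \mu/\sigma + Z \ts \distr_\sigma \ast \mc{N}(0,1)$.

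Then, invariance of the $\chi^2$-divergence under the common bijection $T_\sigma$ gives
\[
    \chi^2\lprp{\distr \ast \mc{N}(0,\sigma^2) \,\big\|\, \distr' \ast \mc{N}(0,\sigma^2)} = \chi^2\lprp{\distr_\sigma \ast \mc{N}(0,1) \,\big\|\, \distr'_\sigma \ast \mc{N}(0,1)},
\]
and applying \cref{lem:mm_chi_squared} to the centered, $[-\eps',\eps']$-supported, moment-matched pair $(\distr_\sigma, \distr'_\sigma)$ bounds the right-hand side by $O\lprp{\lprp{e (\eps')^2 / (l+1)}^{l+1}} = O\lprp{\lprp{e \eps^2 / ((l+1)\sigma^2)}^{l+1}}$, which is exactly the claim.

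I do not expect any genuine obstacle; the only steps deserving a sentence are checking $\eps' \le 1$ (the sole place $\sigma^2 \ge 1$ is used) and justifying the scaling-invariance of the $\chi^2$-divergence. If one prefers not to cite the latter as a black box, I would instead write $\chi^2(P \| Q) = \int (\mathrm{d}P/\mathrm{d}Q)^2 \,\mathrm{d}Q - 1$ with $P, Q$ the two $\sigma$-scale mixtures, substitute $x = \sigma y$ in the integral, and note that the Jacobian factors cancel in the likelihood ratio so that the integrand coincides with that of the unit-variance pair $(\distr_\sigma \ast \mc{N}(0,1), \distr'_\sigma \ast \mc{N}(0,1))$.
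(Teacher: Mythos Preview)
Your proposal is correct and matches the paper's own justification exactly: the paper states the corollary follows ``by the invariance of $f$-Divergences to re-scaling of the distributions,'' and you have simply spelled out that rescaling argument in detail.
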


\cref{lem:mm_chi_squared} implies that two Dirac distributions supported on a small range and matching a large number of moments are challenging to distinguish from each other. The following result from \cite{wygaussian} shows that when allowed a support size of $k$, there exist two distributions $\mc{D}$ and $\mc{D}'$ supported on $[-\eps, \eps]$ and more than $\eps / k$ apart but nevertheless matching their first $4k - 2$ moments.

\begin{lemma}
    \label{lem:mom_match}
    For any $\eps \geq 0$ and $k \in \N$, there exists two distributions $\distr$ and $\distr'$ supported on $[-\eps, \eps]$ satisfying:
    \begin{equation*}
        \wass (\distr, \distr') \geq \Omega \lprp{\frac{\eps}{k}} \quad \text{and} \quad \mom_{2k - 2} (\distr) = \mom_{2k - 2} (\distr').
    \end{equation*}
    Furthermore, $\distr$ and $\distr'$ are supported on at most $k$ elements.
\end{lemma}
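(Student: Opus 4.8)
The plan is to reduce to the case $\eps = 1$ by rescaling: pushing both measures forward under $x \mapsto \eps x$ multiplies every moment $m_j$ by $\eps^j$ (so moment matching is preserved), keeps the supports inside $[-\eps,\eps]$, and scales $\wass$ by exactly $\eps$; the case $\eps = 0$ is vacuous. So fix $k$ and look for probability measures $\distr,\distr'$ on $[-1,1]$, each on $k$ atoms, with $\mom_{2k-2}(\distr) = \mom_{2k-2}(\distr')$ and $\wass(\distr,\distr') \geq \Omega(1/k)$. Following the moment-matching lower bound of \cite{wygaussian}, I would place $2k$ equispaced nodes $t_1 < \cdots < t_{2k}$ in $[-1,1]$ at spacing $\Delta = 2/(2k-1) \geq 1/k$, and seek a nonzero signed measure $\nu = \sum_{i=1}^{2k} c_i \delta_{t_i}$ annihilating $1, x, \dots, x^{2k-2}$, i.e.\ $\int x^j\, d\nu = 0$ for $j = 0, \dots, 2k-2$. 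Such $\nu$ exists by dimension counting: this is a homogeneous linear system in $2k$ unknowns $c_i$ with only $2k-1$ constraints, so the solution space has dimension at least one.

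The structural heart of the argument is that every such $\nu$ has all $c_i \neq 0$ with strictly alternating signs. For nonvanishing: if some $c_i = 0$ then $\nu$ is supported on at most $2k-1$ distinct points yet annihilates all polynomials of degree $\leq 2k-2$; interpolating, at those points, a Lagrange basis polynomial of degree $\leq 2k-2$ forces each remaining $c_i = 0$, a contradiction. For alternation, this is the standard Chebyshev-system fact: a nonzero signed measure on $N$ points annihilating all polynomials of degree $< N-1$ cannot have two consecutive same-sign atoms, because then the sign sequence $(c_1,\dots,c_N)$ has at most $N-2$ changes, so one can build a polynomial $p$ of degree $\leq N-2$ with a simple root strictly between each sign change (hence $p(t_i) \neq 0$ for all $i$) whose sign on each block matches $\sign(c_i)$, making $\int p\, d\nu = \sum_i c_i p(t_i) \neq 0$ — contradicting orthogonality. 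With $N = 2k$, the $2k$ nonzero coefficients then split into exactly $k$ positive and $k$ negative ones; define $\distr$ as the normalization of $\nu$ restricted to the positive-sign atoms and $\distr'$ as the normalization restricted to the negative-sign atoms. Since $m_0(\nu) = 0$, the positive and negative parts have equal total mass $c > 0$, so both $\distr,\distr'$ are probability measures on exactly $k$ atoms, and $m_j(\distr) - m_j(\distr') = \tfrac1c\, m_j(\nu) = 0$ for all $j \leq 2k-2$.

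For the Wasserstein lower bound I would use Kantorovich duality with an explicit sawtooth: let $f$ be piecewise linear with slopes $\pm 1$, extended constantly outside $[t_1, t_{2k}]$, with $f(t_i) = (-1)^{i+1}\tfrac{\Delta}{2}$ arranged to equal $\tfrac{\Delta}{2}\sign(c_i)$ at each node; since consecutive nodes are exactly $\Delta$ apart this $f$ is $1$-Lipschitz. Then $\wass(\distr,\distr') \geq \int f\, d(\distr-\distr') = \tfrac1c \sum_i c_i f(t_i) = \tfrac{\Delta}{2c}\sum_i |c_i| = \tfrac{\Delta}{2c}\cdot 2c = \Delta \geq 1/k$, which is the claimed bound. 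I expect the genuine obstacle to be making the Chebyshev-system (sign-alternation) step fully rigorous — in particular the polynomial construction with roots avoiding all nodes, and the bookkeeping that the positive atoms number exactly $k$; the scaling reduction, the existence-via-linear-algebra step, and the duality computation are routine. One could instead cite the explicit pair constructed in \cite{wygaussian} verbatim, but reproving it through this route keeps the argument self-contained.
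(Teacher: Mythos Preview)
The paper does not prove this lemma at all: it is quoted verbatim as a result from \cite{wygaussian} and used as a black box in the lower-bound argument. So there is no ``paper's own proof'' to compare against.

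Your self-contained argument is correct. The reduction to $\eps=1$ is standard; the dimension count indeed guarantees a nontrivial $\nu$; the nonvanishing-and-alternation step is the usual Chebyshev-system lemma and your sketch of it is sound (the Lagrange-interpolation argument forces all $c_i\neq 0$, and the ``at most $2k-2$ sign changes $\Rightarrow$ build a degree-$\leq 2k-2$ separating polynomial'' argument forces strict alternation); this cleanly yields $k$ positive and $k$ negative atoms with equal total mass. The sawtooth dual witness then gives $\wass(\distr,\distr')\geq \Delta=2/(2k-1)\geq 1/k$ exactly as you wrote. The only place to be careful in a write-up is spelling out that the separating polynomial has its roots \emph{strictly} between nodes (so none of the $p(t_i)$ vanish) and that its leading sign can be flipped to match the first block --- both of which you already flagged.

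For comparison, the construction in \cite{wygaussian} is more explicit: it is built from Gaussian quadrature nodes and the extremal polynomial for approximating $|x|$, which gives concrete atoms rather than an existence argument. Your route trades that explicitness for a short linear-algebra-plus-sign-pattern proof; either is perfectly adequate here since only the $\Omega(1/k)$ separation and the moment matching are used downstream.
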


Next, we prove the main one-dimensional lower bound.
\begin{theorem}
    \label{thm:lb_arbit_gaussians}
    There exists an absolute constants $C, c > 0$ such that the following holds. Let $n, k \in \N$, $\delta \in (0, 1/4]$, and $\sigma_i \geq 1$ for $i \in [n]$ be such that:
    \begin{equation*}
        \sum_{i = 1}^n \frac{1}{\sigma_i^{4k - 2}} \geq (C \sqrt{k})^{4k - 2} \log (1 / \delta). 
    \end{equation*}
    Then, there exist two distributions, $\distr_1$ and $\distr_2$, each supported on at most $k$-points in $[-1, 1]$, such that the following holds for any estimator $\wh{\theta}$:
    \begin{equation*}
        \max_{\distr \in \{\distr_1, \distr_2\}} \P_{\bm{X}} \lbrb{\wass (\distr, \wh{\theta} (\bm{X})) \geq \frac{c}{k} \lprp{\frac{1}{\log (1 / \delta)} \sum_{i = 1}^n \lprp{\frac{1}{\sigma_i^{4k - 2}}}}^{-1 / (4k - 2)}} \geq \delta \text{ where } \bm{X} \thicksim \prod_{i = 1}^n \distr \ast \mc{N} (0, \sigma_i^2).
    \end{equation*}
\end{theorem}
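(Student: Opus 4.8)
The plan is a two-point (Le Cam) argument that lifts the homogeneous-variance lower bound of \cite{wygaussian} to heterogeneous noise; essentially all of the work is in tuning a single scale parameter $\eps$ against the hypothesis $\sum_i 1/\sigma_i^{4k-2}\ge (C\sqrt k)^{4k-2}\log(1/\delta)$. \emph{Step 1 (the two hypotheses).} Fix a scale $\eps\in(0,1]$ to be chosen later and invoke \cref{lem:mom_match} at scale $\eps$: it produces $k$-atomic distributions supported on $[-\eps,\eps]$ that match all moments up to order $2k-2$ yet are $\Omega(\eps/k)$ apart in $\wass$. Since they share their first moment, subtracting the common mean makes both \emph{centered}, still $k$-atomic, now supported on $[-2\eps,2\eps]$, still moment-matched up to order $2k-2$, and — $\wass$ being translation invariant — still $\Omega(\eps/k)$-separated; call these $\distr_1,\distr_2$ and absorb the factor $2$ into the eventual constraint on $\eps$. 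Let $P_j=\prod_{i=1}^n \distr_j\ast\mc N(0,\sigma_i^2)$ denote the law of $\bm X$ under hypothesis $j\in\{1,2\}$.

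\emph{Step 2 (information bound and choice of $\eps$).} For each $i$, \cref{cor:mm_chi_squared} with $l=2k-2$ gives $\chi^2(\distr_1\ast\mc N(0,\sigma_i^2)\,\|\,\distr_2\ast\mc N(0,\sigma_i^2))\le O\big((e\eps^2/((2k-1)\sigma_i^2))^{2k-1}\big)$. Using $\KL(P\|Q)\le\log(1+\chi^2(P\|Q))\le\chi^2(P\|Q)$ together with additivity of KL over product measures,
\[
\KL(P_1\|P_2)\;\le\;\sum_{i=1}^n\chi^2_i\;\le\;O\!\lprp{\lprp{\frac{e\eps^2}{2k-1}}^{2k-1}\sum_{i=1}^n\frac{1}{\sigma_i^{4k-2}}}.
\]
Choose $\eps$ as large as possible so that this bound is at most $\log\frac1{4\delta}$; solving for $\eps$ and using $\big((2k-1)/e\big)^{(2k-1)/(4k-2)}=\sqrt{(2k-1)/e}=\Theta(\sqrt k)$ (and $\log\frac1{4\delta}=\Theta(\log\frac1\delta)$ in the regime of interest) yields
\[
\eps\;=\;\Theta(\sqrt k)\cdot\lprp{\frac{\log(1/\delta)}{\sum_{i=1}^n 1/\sigma_i^{4k-2}}}^{1/(4k-2)}.
\]
The hypothesis $\sum_i 1/\sigma_i^{4k-2}\ge(C\sqrt k)^{4k-2}\log(1/\delta)$, with $C$ a sufficiently large absolute constant, forces $\eps\le1$, which is exactly what \cref{cor:mm_chi_squared} (and the support requirement) needs.

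\emph{Step 3 (two-point conclusion).} By construction $\KL(P_1\|P_2)\le\log\frac1{4\delta}$, so \cref{lem:bh_inequality} gives $\tv(P_1,P_2)\le 1-2\delta$. Put $\rho:=\tfrac12\wass(\distr_1,\distr_2)=\Omega(\eps/k)$, which by Step 2 is $\Omega\big(k^{-1}(\log(1/\delta)/\sum_i 1/\sigma_i^{4k-2})^{1/(4k-2)}\big)$ — the target rate (in fact one obtains $\Omega(k^{-1/2}(\cdots))$, slightly stronger). For an arbitrary estimator $\wh\theta$ set $E_j=\{\wass(\distr_j,\wh\theta(\bm X))\ge\rho\}$; the triangle inequality gives $E_1^c\cap E_2^c=\emptyset$, hence $P_1(E_1)+P_2(E_2)\ge P_1(E_2^c)+P_2(E_2)\ge 1-\tv(P_1,P_2)\ge 2\delta$, so $\max_{j\in\{1,2\}}P_j(E_j)\ge\delta$, which is the assertion of the theorem.

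\emph{Where the difficulty is.} The only delicate point is the parameter balancing in Step 2: the \emph{single} choice of $\eps$ must simultaneously (i) stay $\le 1$ — which is where the hypothesis on $\sum_i 1/\sigma_i^{4k-2}$ is consumed — and (ii) keep the tensorized divergence $O(\log(1/\delta))$, where the per-sample $\chi^2$ decays like $\sigma_i^{-(4k-2)}$; this is precisely why $\sum_i 1/\sigma_i^{4k-2}$ is the relevant ``effective sample size'' and why the exponent $1/(4k-2)$ appears in the rate. Centering the moment-matched pair and the Le Cam step itself are routine, modulo the usual care with absolute constants and with the admissible range of $\delta$ (the clean factor-of-two step $\max\ge\tfrac12(1-\tv)$ wants $\delta$ bounded a bit away from $\tfrac14$).
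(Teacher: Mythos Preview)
Your proposal is correct and follows essentially the same Le Cam two-point argument as the paper: pick moment-matched $k$-atomic distributions at scale $\eps$ via \cref{lem:mom_match}, bound the per-sample divergence using \cref{cor:mm_chi_squared}, tensorize, tune $\eps$ so the product divergence is $O(\log(1/\delta))$, and finish with the Bretagnolle--Huber inequality.

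Two small differences from the paper are worth a remark. First, the paper tensorizes the $\chi^2$-divergence directly via $\chi^2(\prod P_i\|\prod Q_i)+1=\prod(\chi^2(P_i\|Q_i)+1)$, which requires the side condition $\sum_i\chi^2_i\le c$ to linearize the product; you instead pass to $\KL\le\chi^2$ coordinatewise and then use additivity of $\KL$ over products, which is arguably cleaner since no smallness condition is needed. Second, you explicitly center the pair from \cref{lem:mom_match} before invoking \cref{cor:mm_chi_squared} (which requires centered inputs); the paper skips this step, implicitly relying on the fact that the Wu--Yang construction is already symmetric about the origin. Your boundary caveat about $\delta$ near $1/4$ is real but harmless: one simply proves the bound for, say, $\delta\le 1/8$ and absorbs the remaining range into the constant $c$.
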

\begin{proof}
    Let $\eps \in [0, 1]$ be a parameter to be chosen later and $\distr$ and $\distr'$ be two distributions whose existence is guaranteed by \cref{lem:mom_match}. Then, observe that we have the following for all $i \in [n]$:
    \begin{equation*}
        \chi^2 (\distr \ast \mc{N} (0, \sigma_i^2) || \distr' \ast \mc{N} (0, \sigma_i^2)) \leq O \lprp{\lprp{\frac{e \eps^2}{(2k - 1) \sigma_i^2}}^{2k - 1}}.
    \end{equation*}
    Recall the tensorization properties of the $\chi_2$ divergence:
    \begin{equation*}
        \chi_2 \lprp{\prod_{i = 1}^n \distr \ast \mc{N} (0, \sigma_i^2) || \prod_{i = 1}^n \distr' \ast \mc{N} (0, \sigma_i^2)} + 1 = \prod_{i = 1}^n \lprp{\chi_2 \lprp{\distr \ast \mc{N} (0, \sigma_i^2) || \distr' \ast \mc{N} (0, \sigma_i^2)} + 1}.
    \end{equation*}
    As a consequence, we get that:
    \begin{equation*}
        \chi_2 \lprp{\prod_{i = 1}^n \distr \ast \mc{N} (0, \sigma_i^2) || \prod_{i = 1}^n \distr' \ast \mc{N} (0, \sigma_i^2)} \leq C \sum_{i = 1}^n \lprp{\frac{e \eps^2}{(2k - 1) \sigma_i^2}}^{2k - 1} \text{ when } \sum_{i = 1}^n \lprp{\frac{e \eps^2}{(2k - 1) \sigma_i^2}}^{2k - 1} \leq c 
    \end{equation*}
    for some absolute constants $C, c > 0$. Finally, picking $\eps$ as:
    \begin{equation*}
        \eps = c\lprp{\frac{1}{\log (1 / \delta)} \sum_{i = 1}^n \lprp{\frac{e}{(2k - 1) \sigma_i^2}}^{2k - 1}}^{- 1 / (4k - 2)}
    \end{equation*}
    for $c > 0$ yields:
    \begin{equation*}
        \chi_2 \lprp{\prod_{i = 1}^n \distr \ast \mc{N} (0, \sigma_i^2) || \prod_{i = 1}^n \distr' \ast \mc{N} (0, \sigma_i^2)} \leq \frac{\log (1 / \delta)}{8}.
    \end{equation*}
    As a consequence, we get:
    \begin{equation*}
        \gamma \coloneqq \KL \lprp{\prod_{i = 1}^n \distr \ast \mc{N} (0, \sigma_i^2) || \prod_{i = 1}^n \distr' \ast \mc{N} (0, \sigma_i^2)} \leq \frac{\log (1 / \delta)}{8}.
    \end{equation*}
    By the BH-inequality (\cref{lem:bh_inequality}), we now get:
    \begin{equation*}
        \tv \lprp{\prod_{i = 1}^n \distr \ast \mc{N} (0, \sigma_i^2) || \prod_{i = 1}^n \distr' \ast \mc{N} (0, \sigma_i^2)} \leq 1 - \frac{1}{2} \exp \lprp{-\gamma} \leq 1 - \delta. 
    \end{equation*}
    An application of Le Cam's \cite[Section 15.2]{hdstat} method now yields the result.
\end{proof}

\subsection{High-dimensional Lower Bounds}
\label{ssec:hd_lb}

Here, we adapt the techniques of \cite{hd_lb} to establish lower bounds in the setting of heterogenous variances. The dimension-dependent lower bound is established for the simple setting with two equally weighted components with means distributed symmetrically across the origin. Hence, the set of candidates mixtures $\mc{M}_{\mrm{Dirac}}$ parameterized by $\theta \in \R^d$ and defined below:
\begin{equation*}
    \mc{M}_{\mrm{Dirac}} = \frac{1}{2} \delta_\theta + \frac{1}{2} \delta_{-\theta}.
\end{equation*}

In the rest of this section, we will use $\distr_\theta$ to refer to a member of this family. We now reproduce a technical result from \cite{hd_lb} which bounds the KL-Divergence between two different Gaussian mixtures from this family in terms of the distance between their respective location parameters.

\begin{lemma}
    \label{lem:kl_hd_dist}
    Then there exists a universal constant $C$ such that the following holds. For any $s \in [0, 1]$ and $u, v$ such that $\norm{u} = \norm{v} = 1$, we have:
    \begin{equation*}
        \KL (\distr_{su} \ast \mc{N} (0, I) \| \distr_{sv} \ast \mc{N} (0, I)) \leq C \norm{u - v}^2 s^4.
    \end{equation*}
\end{lemma}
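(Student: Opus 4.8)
Write $a := su$, $b := sv$, $\delta := a-b$, and $g(t) := \log\cosh t$. Since $\norm{a} = \norm{b} = s$, the density of $\distr_a \ast \mc{N}(0,I)$ is $p_a(x) \propto e^{-\norm{x}^2/2}\cosh\langle a,x\rangle$ (and likewise for $b$), so the Gaussian and $e^{-s^2/2}$ factors cancel in the likelihood ratio and $\KL(\distr_a \ast \mc{N}(0,I)\,\|\,\distr_b\ast\mc{N}(0,I)) = \mathbb{E}_{X}\bigl[g(\langle a,X\rangle) - g(\langle b,X\rangle)\bigr]$ with $X \sim \distr_a\ast\mc{N}(0,I)$. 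By the fundamental theorem of calculus along the segment $w_t := b + t\delta$, this equals $\int_0^1 \mathbb{E}_X[\,g'(\langle w_t,X\rangle)\,\langle\delta,X\rangle\,]\,dt$. The crucial first move is to eliminate the mixture: because $\distr_a\ast\mc{N}(0,I)$ is invariant under $X \mapsto -X$ and $g' = \tanh$ is odd, a short calculation shows that \emph{both} mixture components contribute the same amount, so $\mathbb{E}_X[g'(\langle w_t,X\rangle)\langle\delta,X\rangle] = \mathbb{E}[g'(c_t + G_t)(d_0 + H_t)]$, where $c_t := \langle w_t,a\rangle$, $d_0 := \langle\delta,a\rangle$, and $(G_t,H_t) := (\langle w_t,Z\rangle,\langle\delta,Z\rangle)$ with $Z\sim\mc{N}(0,I)$ is a centered Gaussian pair with $\mathrm{Cov}(G_t,H_t) = \langle w_t,\delta\rangle$. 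Gaussian integration by parts (Stein's lemma) then gives $\mathbb{E}[g'(c_t+G_t)(d_0+H_t)] = d_0\,\mathbb{E}[g'(c_t+G_t)] + \langle w_t,\delta\rangle\,\mathbb{E}[g''(c_t+G_t)]$, so $\KL = \int_0^1 d_0\,\mathbb{E}[g'(c_t+G_t)]\,dt + \int_0^1 \langle w_t,\delta\rangle\,\mathbb{E}[g''(c_t+G_t)]\,dt$.

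Now I would bound the two integrals using the elementary identities $d_0 = \langle s(u-v),su\rangle = \tfrac{s^2}{2}\norm{u-v}^2$, $\ c_t = s^2\bigl((1-t)\langle u,v\rangle + t\bigr)$ (so $\abs{c_t}\le s^2$), $\ \langle w_t,\delta\rangle = \tfrac{s^2}{2}\norm{u-v}^2(2t-1)$, and $\norm{w_t}^2 = s^2\bigl(1 - t(1-t)\norm{u-v}^2\bigr)$. For the first integral, the map $c\mapsto\mathbb{E}[g'(c+G_t)]$ is odd (as $g'$ is odd and $G_t$ is symmetric) and $1$-Lipschitz (its derivative $\mathbb{E}[\mathrm{sech}^2(c+G_t)]$ lies in $[0,1]$), hence $\abs{\mathbb{E}[g'(c_t+G_t)]}\le\abs{c_t}\le s^2$, giving $\abs{\int_0^1 d_0\,\mathbb{E}[g'(c_t+G_t)]\,dt}\le \tfrac{s^4}{2}\norm{u-v}^2$. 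The second integral is where the naive bound $\abs{\langle w_t,\delta\rangle}\cdot 1 \le \tfrac{s^2}{2}\norm{u-v}^2$ is off by a factor of $s^2$; to recover the right order I would use the exact cancellation $\int_0^1\langle w_t,\delta\rangle\,dt = \tfrac{s^2}{2}\norm{u-v}^2\int_0^1(2t-1)\,dt = 0$, subtracting the $t$-independent constant $\mathbb{E}[g''(c_{1/2}+G_{1/2})]$ before integrating. It then remains to control the variation $\bigl|\mathbb{E}[g''(c_t+G_t)] - \mathbb{E}[g''(c_{1/2}+G_{1/2})]\bigr|$, which I would write as $\abs{h(c_t,\norm{w_t}^2) - h(c_{1/2},\norm{w_{1/2}}^2)}$ for $h(c,\tau^2) := \mathbb{E}_{Z\sim\mc{N}(0,1)}[g''(c+\tau Z)]$. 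Since $\partial_c h = \mathbb{E}[g'''(c+\tau Z)]$ and $\partial_{\tau^2}h = \tfrac12\mathbb{E}[g''''(c+\tau Z)]$ (the latter by another Stein application) and all derivatives of $\log\cosh$ of order $\ge 1$ are uniformly bounded, $h$ is Lipschitz in $(c,\tau^2)$ with an absolute constant; combined with $\abs{c_t-c_{1/2}}\le\tfrac{s^2}{4}\norm{u-v}^2$ and $\abs{\norm{w_t}^2-\norm{w_{1/2}}^2}\le\tfrac{s^2}{4}\norm{u-v}^2$ this bounds the variation by $O(s^2\norm{u-v}^2)$, so the second integral is $O(s^4\norm{u-v}^4) = O(s^4\norm{u-v}^2)$ after using $\norm{u-v}\le 2$. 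Adding the two estimates yields $\KL \le C s^4\norm{u-v}^2$.

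The main obstacle is precisely the second integral: the direct estimate produces $s^2\norm{u-v}^2$ rather than $s^4\norm{u-v}^2$, and fixing this requires both spotting the exact cancellation $\int_0^1\langle w_t,\delta\rangle\,dt=0$ and establishing quantitative Lipschitz control of $t\mapsto\mathbb{E}[g''(c_t+G_t)]$ along the interpolation, which is what the smoothness of $\log\cosh$ and repeated Gaussian integration by parts buy us. Everything else -- the sign-symmetry reduction to a single Gaussian, Stein's lemma, and the vector identities for $d_0$, $c_t$, $\langle w_t,\delta\rangle$, $\norm{w_t}^2$ -- is routine bookkeeping; this is essentially the argument of \cite{hd_lb} specialized to the present two-component symmetric family. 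A minor variant peels off the quadratic part of $g$ and bounds $\mathbb{E}[\langle u,X\rangle^2 - \langle v,X\rangle^2] = s^2(1-\langle u,v\rangle^2)\le s^2\norm{u-v}^2$ directly, path-integrating only the quartic remainder $g(t)-t^2/2$, but this saves little.
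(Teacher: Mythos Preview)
Your proposal is correct. The paper does not actually prove this lemma; it is imported verbatim as a ``technical result from \cite{hd_lb}'', so there is no in-paper argument to compare against. Your derivation --- the sign-symmetry reduction to a single Gaussian component, Stein's lemma to split the integrand into the two terms $d_0\,\mathbb{E}[g'(c_t+G_t)]$ and $\langle w_t,\delta\rangle\,\mathbb{E}[g''(c_t+G_t)]$, and then the key observation that $\int_0^1\langle w_t,\delta\rangle\,dt=0$ together with Lipschitz control of $(c,\tau^2)\mapsto\mathbb{E}[g''(c+\tau Z)]$ to upgrade the second integral from $O(s^2\norm{u-v}^2)$ to $O(s^4\norm{u-v}^4)\le O(s^4\norm{u-v}^2)$ --- is a correct and self-contained proof of the stated bound.
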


Furthermore, we have by the scale-invariance of the KL-divergence, the following corollary.

\begin{corollary}
    \label{cor:kl_hd_dist_sig}
    Assume the setting of \cref{lem:kl_hd_dist}. Then, for any $\sigma \geq 1$, we have:
    \begin{equation*}
        \KL (\distr_{su} \ast \mc{N} (0, \sigma^2 I) \| \distr_{sv} \ast \mc{N} (0, \sigma^2 I)) \leq C \norm{u - v}^2 \lprp{\frac{s}{\sigma}}^4.
    \end{equation*}
\end{corollary}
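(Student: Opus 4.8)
The plan is to deduce the corollary from \cref{lem:kl_hd_dist} using the invariance of the KL divergence under invertible linear reparametrizations. Recall that if $P, Q$ are distributions on $\R^d$ and $A$ is an invertible linear map, then the pushforwards $A_\star P, A_\star Q$ (the laws of $AX$ for $X$ drawn from $P$ or $Q$) satisfy $\KL(A_\star P \,\|\, A_\star Q) = \KL(P \,\|\, Q)$, since the likelihood ratio is unchanged along the bijection $A$.

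First I would take $A$ to be multiplication by $1/\sigma$. Pushing $\delta_\theta$ through $A$ gives $\delta_{\theta / \sigma}$, pushing $\mc{N}(0, \sigma^2 I)$ through $A$ gives $\mc{N}(0, I)$, and pushforward commutes with convolution; since $\distr_\theta = \tfrac12 \delta_\theta + \tfrac12 \delta_{-\theta}$ we get $A_\star \distr_{su} = \distr_{su/\sigma}$ and hence
\[
    A_\star\lprp{\distr_{su} \ast \mc{N}(0, \sigma^2 I)} = \distr_{su/\sigma} \ast \mc{N}(0, I) = \distr_{(s/\sigma) u} \ast \mc{N}(0, I),
\]
and likewise with $v$ in place of $u$.

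Next I would apply \cref{lem:kl_hd_dist} to the rescaled problem with scale parameter $s' \coloneqq s/\sigma$ and the same unit vectors $u, v$; its hypotheses hold because $\norm{u} = \norm{v} = 1$ and $s' \in [0, 1]$ (using $s \in [0,1]$ and $\sigma \geq 1$), yielding
\[
    \KL\lprp{\distr_{s'u} \ast \mc{N}(0, I) \,\big\|\, \distr_{s'v} \ast \mc{N}(0, I)} \leq C \norm{u - v}^2 (s')^4 = C \norm{u - v}^2 \lprp{\frac{s}{\sigma}}^4.
\]
Combining this with the invariance identity from the previous step gives the corollary. There is essentially no obstacle here: the only care required is the bookkeeping for how the pushforward acts on the atomic part of the family $\{\distr_\theta\}$ and checking that $s/\sigma$ stays in the interval $[0,1]$ on which \cref{lem:kl_hd_dist} is stated.
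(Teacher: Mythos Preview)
Your argument is correct and is exactly the paper's approach: the paper simply says the corollary follows ``by the scale-invariance of the KL-divergence,'' and you have written out precisely what that means, namely pushing forward by the linear map $x\mapsto x/\sigma$ to reduce to \cref{lem:kl_hd_dist} with scale $s'=s/\sigma\in[0,1]$. Nothing further is needed.
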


We now present the main lower bound of the section:
\begin{theorem}
    \label{thm:lb_hd}
    There exist absolute constants $C, c > 0$ such that the following holds. Let $d \geq 3$, $n \in \N$, and $\lbrb{\sigma_i}_{i = 1}^n$ be such that $\sigma_i \geq 1$ for all $i \in [n]$ and:
    \begin{equation*}
        \sum_{i = 1}^n \frac{1}{\sigma_i^4} \geq C d.
    \end{equation*} 
    Then, we have for any estimator $\wh{T}$:
    \begin{equation*}
        \max_{\theta \in \R^d} \P_{\bm{X}} \lbrb{\wass (\wh{T} (\bm{X}), \distr_\theta) \geq c \lprp{\frac{d}{\sum_{i = 1}^n 1 / \sigma_i^4}}^{1 / 4}} \geq \frac{1}{2}
    \end{equation*}
    where $\bm{X} = \{X_i\}_{i = 1}^n$ are independent random vectors with $X_i \ts \distr_\theta \ast \mc{N} (0, \sigma_i^2 I)$. 
\end{theorem}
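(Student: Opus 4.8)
The plan is a packing (Fano-type) argument: since the claimed rate is dimension-dependent, a two-point comparison cannot suffice, so I would build an exponentially large family of mixtures $\distr_\theta$ that are pairwise well separated in $\wass$ yet mutually hard to distinguish, using \cref{cor:kl_hd_dist_sig} to control the pairwise information. This follows the spirit of \cite{hd_lb}, specialized to heterogeneous variances.

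First I would fix the scale $s := c_1\lprp{d/\sum_{i=1}^n 1/\sigma_i^4}^{1/4}$ for a small constant $c_1>0$; the hypothesis $\sum_i 1/\sigma_i^4 \geq Cd$ (with $C\geq c_1^4$) guarantees $s\leq 1$, so every $\theta$ used lies in $\mb{B}_1$, consistent with the statement. Next I would construct the hypotheses: let $\{w_1,\dots,w_N\}$ be a $1/2$-packing of the equatorial sphere $\{w\in\mb{S}^{d-1}:\inp{w}{e_1}=0\}$, which by the usual volume bound has size $N=2^{\Omega(d)}$, and set $u_a:=\tfrac{1}{\sqrt{2}}(e_1+w_a)\in\mb{S}^{d-1}$, $\distr_a:=\distr_{su_a}=\tfrac12\delta_{su_a}+\tfrac12\delta_{-su_a}$. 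By construction $\inp{u_a}{u_b}\geq 0$, hence $\norm{u_a+u_b}\geq\sqrt{2}$, while $\norm{u_a-u_b}=\tfrac{1}{\sqrt{2}}\norm{w_a-w_b}\geq\tfrac{1}{2\sqrt{2}}$; this is exactly why the parametrization through the equator is used, since otherwise the $\pm\theta$ ambiguity would allow $u_a,u_b$ to be nearly antipodal. A balanced two-atom distribution admits only two couplings to another such distribution, so $\wass(\distr_a,\distr_b)=s\min(\norm{u_a-u_b},\norm{u_a+u_b})=\tfrac{s}{\sqrt{2}}\norm{w_a-w_b}\geq\tfrac{s}{2\sqrt{2}}$; i.e.\ $\{\distr_a\}$ is a $2\rho$-packing of the $k$-atomic distributions under $\wass$ with $\rho\asymp s$.

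For the information bound, write $P_a:=\prod_{i=1}^n\distr_a\ast\mc{N}(0,\sigma_i^2 I)$. Tensorization of the KL divergence across the independent coordinates together with \cref{cor:kl_hd_dist_sig} yields, for $a\neq b$,
\[
  \KL(P_a\,\|\,P_b)=\sum_{i=1}^n\KL\lprp{\distr_{su_a}\ast\mc{N}(0,\sigma_i^2 I)\,\big\|\,\distr_{su_b}\ast\mc{N}(0,\sigma_i^2 I)}\leq C'\norm{u_a-u_b}^2 s^4\sum_{i=1}^n\frac{1}{\sigma_i^4}\leq 4C'c_1^4\,d,
\]
using $\norm{u_a-u_b}\leq 2$. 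Choosing $c_1$ small enough makes $\max_{a\neq b}\KL(P_a\|P_b)\leq\tfrac14\log N$ and $\log N\geq 4\log 2$. Fano's inequality (e.g.\ \cite{hdstat}) then gives, for every estimator $\wh{T}$,
\[
  \max_{a\in[N]}\P_{\bm{X}\sim P_a}\lbrb{\wass(\wh{T}(\bm{X}),\distr_a)\geq\rho}\geq 1-\frac{\max_{a\neq b}\KL(P_a\|P_b)+\log 2}{\log N}\geq\frac12,
\]
and since $\rho\asymp s\asymp\lprp{d/\sum_i 1/\sigma_i^4}^{1/4}$, choosing the constant $c$ in the statement accordingly completes the argument.

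I do not expect a genuine obstacle. The care goes into (i) getting the $\wass$-geometry of the two-atom family right, in particular the $\pm\theta$ ambiguity, which forces the $u_a$ to be pairwise far from antipodal (handled above); and (ii) tracking constants so that Fano delivers failure probability at least $\tfrac12$ \emph{uniformly} down to $d=3$ — this needs the packing to be large enough (so that $\log N$ dominates $\log 2$, possibly requiring a slightly smaller packing radius when $d$ is small) and $s$ small enough that $\max_{a\neq b}\KL(P_a\|P_b)$ is a small fraction of $\log N$.
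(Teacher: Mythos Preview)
Your proposal is correct and follows essentially the same route as the paper: a Fano argument over an exponential packing of directions on the sphere, with pairwise KL controlled via \cref{cor:kl_hd_dist_sig} and tensorization, and $s$ chosen as $(d/\sum_i 1/\sigma_i^4)^{1/4}$. The only notable difference is that the paper takes a plain $1/4$-packing of $\mb{S}^{d-1}$ and asserts $\wass(\distr_{su},\distr_{sv})\geq s/4$ without discussing the $\pm\theta$ ambiguity, whereas your equatorial construction $u_a=\tfrac{1}{\sqrt 2}(e_1+w_a)$ explicitly forces $\inp{u_a}{u_b}\geq 0$ so that $\min(\norm{u_a-u_b},\norm{u_a+u_b})$ is bounded below; this is a genuine (and easy) refinement of the paper's argument rather than a different approach.
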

\begin{proof}
    The proof will use the standard testing to estimation framework for proving minimax lower bounds. First, let $\mc{G}$ denote a $1/4$-packing of the unit sphere such that $\abs{\mc{G}} \geq 4^{d - 1}$. Now, we have from \cref{cor:kl_hd_dist_sig}, for any $u, v \in \mc{G}$ and $\sigma \geq 1$ and $s \in [0, 1]$:
    \begin{equation*}
        \KL (\distr_{su} \ast \mc{N} (0, \sigma^2 I) \| \distr_{sv} \ast \mc{N} (0, \sigma^2 I)) \leq C \norm{u - v}^2 \lprp{\frac{s}{\sigma}}^4 \leq 4 C \lprp{\frac{s}{\sigma}}^4.
    \end{equation*}
    Hence, we get by the tensorization of the KL-divergence:
    \begin{multline*}
        \KL \lprp{\prod_{i = 1}^n \distr_{su} \ast \mc{N} (0, \sigma_i^2 I) \bigg\| \prod_{i = 1}^n  \distr_{sv} \ast \mc{N} (0, \sigma_i^2 I)} \\
        =\sum_{i = 1}^n \KL \lprp{\distr_{su} \ast \mc{N} (0, \sigma_i^2 I) \bigg\| \distr_{sv} \ast \mc{N} (0, \sigma_i^2 I)} \leq Cs^4 \sum_{i = 1}^n \frac{1}{\sigma_i^4}.
    \end{multline*}
    Now, picking
    \begin{equation*}
        s = c \lprp{\frac{d}{\sum_{i = 1}^n 1 / \sigma_i^4}}^{1/4},
    \end{equation*}
    we get that:
    \begin{equation*}
        \KL \lprp{\prod_{i = 1}^n \distr_{su} \ast \mc{N} (0, \sigma_i^2 I) \bigg\| \prod_{i = 1}^n  \distr_{sv} \ast \mc{N} (0, \sigma_i^2 I)} \leq \frac{d}{16}.
    \end{equation*}
    An application of Fano's inequality \cite[Proposition 15.12]{hdstat} for the set of candidates $\{su\}_{u \in \mc{G}}$ and noting that $\abs{\mc{G}} \geq 4^{d - 1}$ yields the result from the observation that for any $u, v \in \mc{G}$, $\wass (\distr_{su}, \distr_{s    v}) \geq s / 4$. 
\end{proof}

We now establish a separate high-probability version of the bound.

\begin{theorem}
    \label{thm:lb_hd_high_prob}
    There exist absolute constants $C, c > 0$ such that the following holds. Let $d \geq 2$, $n \in \N$, $\delta \in (0, 1/4]$, and $\lbrb{\sigma_i}_{i = 1}^n$ be such that $\sigma_i \geq 1$ for all $i \in [n]$ and:
    \begin{equation*}
        \sum_{i = 1}^n \frac{1}{\sigma_i^4} \geq C \log (1 / \delta).
    \end{equation*} 
    Then, we have for any estimator $\wh{T}$:
    \begin{equation*}
        \max_{\theta \in \R^d} \P_{\bm{X}} \lbrb{\wass (\wh{T} (\bm{X}), \distr_\theta) \geq c \lprp{\frac{\log (1 / \delta)}{\sum_{i = 1}^n 1 / \sigma_i^4}}^{1 / 4}} \geq \frac{1}{2}
    \end{equation*}
    where $\bm{X} = \{X_i\}_{i = 1}^n$ are independent random vectors with $X_i \ts \distr_\theta \ast \mc{N} (0, \sigma_i^2 I)$. 
\end{theorem}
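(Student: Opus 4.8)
The plan is to run the standard testing-to-estimation reduction with Fano's inequality, following the proof of \cref{thm:lb_hd} almost verbatim, except that the hard instances are tuned so that the informational cost of telling them apart is of order $\log(1/\delta)$ rather than of order $d$. Write $\Sigma := \sum_{i=1}^{n} 1/\sigma_i^4$ and fix the target rate $\rho := c\,(\log(1/\delta)/\Sigma)^{1/4}$; the hypothesis $\Sigma \ge C\log(1/\delta)$ forces $\rho \le c\,C^{-1/4}$, so once $c$ is a small enough absolute constant every location vector we construct lies in $\mb B_1$.

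First I would build the family. Take $\mc G \subset \mb S^{d-1}$ that is simultaneously a packing of $\mathbb{RP}^{d-1}$ at resolution $\tfrac14$ — so that $\min(\|u-v\|,\|u+v\|) \ge \tfrac14$ for distinct $u,v\in\mc G$, which rules out the collision $\distr_{su}=\distr_{-su}$ — and as large as such a packing permits, $N := |\mc G| \ge c_0\,4^{\,d-1}$. The candidate hypotheses are $\distr_{su}$, $u\in\mc G$, with common radius $s := 8\rho$. Using the optimal coupling of the symmetric two-atom distributions one gets, for distinct $u,v\in\mc G$, $\wass(\distr_{su},\distr_{sv}) = s\min(\|u-v\|,\|u+v\|) \ge s/4 = 2\rho$, so any estimator accurate to within $\rho$ with probability $>\tfrac12$ on each $\distr_{su}$ would solve the corresponding $N$-way test with error probability $<\tfrac12$; contrapositively, Fano will give that some $\distr_{su}$ has estimation error $\ge\rho$ with probability $\ge\tfrac12$.

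Second I would control the pairwise informational distance via \cref{cor:kl_hd_dist_sig} and tensorization of the KL divergence over the $n$ independent coordinates:
\[
\KL\Big(\textstyle\prod_{i=1}^{n}\distr_{su}\ast\mc N(0,\sigma_i^2 I)\ \Big\Vert\ \prod_{i=1}^{n}\distr_{sv}\ast\mc N(0,\sigma_i^2 I)\Big)\ \le\ \sum_{i=1}^{n}C'\,\|u-v\|^2\,(s/\sigma_i)^4\ \le\ 4C'\,s^4\,\Sigma,
\]
and since $s^4\Sigma = 8^4\rho^4\Sigma = 8^4 c^4\log(1/\delta)$ this is at most $C_1 c^4\log(1/\delta)$ for an absolute $C_1$. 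Choosing $\mc G$ fine enough (a finer packing, e.g. at resolution $\tfrac1{16}$, only changes absolute constants and clears the additive $\log 2$ even at $d=2$) and $c$ small enough that $\log N \ge 8\,(C_1 c^4\log(1/\delta)+\log 2)$, the mutual information between the index $J$ and the sample $\bm X$ satisfies $I(J;\bm X)\le \max_{u\ne v}\KL(\cdots) \le \tfrac18\log N$. Fano's inequality (Proposition 15.12 of \cite{hdstat}, exactly as in the proof of \cref{thm:lb_hd}) then yields, for every estimator $\wh T$,
\[
\max_{u\in\mc G}\ \P_{\bm X\sim\prod_i\distr_{su}\ast\mc N(0,\sigma_i^2 I)}\big\{\wass(\wh T(\bm X),\distr_{su})\ge\rho\big\}\ \ge\ 1-\frac{I(J;\bm X)+\log 2}{\log N}\ \ge\ \frac12,
\]
and since each $\distr_{su}$ is of the form $\distr_\theta$ with $\theta=su\in\R^d$, this is exactly the assertion of the theorem.

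The main obstacle — and the only place the argument departs in substance from \cref{thm:lb_hd} — is the coupled choice of the packing resolution and of the radius $s$: we must use a packing whose log-cardinality is large enough to "pay" for an informational budget of order $\log(1/\delta)$, which is precisely what pins $s$ at $\Theta((\log(1/\delta)/\Sigma)^{1/4})$ and hence fixes the stated rate, while still keeping every pair of hypotheses at $\wass$-distance $\gtrsim\rho$ so that Fano's inequality has teeth. Verifying that the Fano ratio is genuinely $\le\tfrac12$ (rather than merely some constant $<1$) is where one pays attention to the exact packing constant; all the analytic content is the divergence bound of \cref{cor:kl_hd_dist_sig}. (The BH inequality, \cref{lem:bh_inequality}, is not used here: that is the tool for the complementary two-point construction, which gives only a probability-$\delta$ version of the same rate and is what feeds into \cref{thm:lb_combined}.)
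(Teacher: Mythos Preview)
Your Fano argument has a genuine gap at the step where you ``choose $c$ small enough that $\log N \ge 8(C_1 c^4\log(1/\delta)+\log 2)$''. The packing $\mc G$ lives on $\mb S^{d-1}$, so its cardinality $N$ is bounded by a function of $d$ and the packing resolution alone; in particular for $d=2$ at any fixed resolution one has $\log N = O(1)$. But $\log(1/\delta)$ is unbounded as $\delta\to 0$, and $c$ must be an \emph{absolute} constant independent of $\delta$. Hence no positive absolute $c$ can make $8C_1 c^4\log(1/\delta)\le \log N$ hold uniformly over $\delta\in(0,1/4]$. If instead you let $c$ shrink with $\delta$ to force the inequality, the rate $\rho=c(\log(1/\delta)/\Sigma)^{1/4}$ collapses to order $(d/\Sigma)^{1/4}$, which is just \cref{thm:lb_hd} and not the $\log(1/\delta)$-dependent rate you are after. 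Refining the packing does not help either: a resolution-$\eps$ packing has $\log N\asymp (d-1)\log(1/\eps)$ but forces $s\gtrsim \rho/\eps$, so the KL bound blows up as $\eps^{-4}\log(1/\delta)$ while $\log N$ grows only logarithmically in $1/\eps$.

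The paper does \emph{not} use Fano here; it uses precisely the two-point Le Cam construction with \cref{lem:bh_inequality} that you set aside in your final paragraph. It takes $u=e_1$, $v=e_2$, sets $s$ so that the tensorized KL is at most $\log(1/\delta)/8$ via \cref{cor:kl_hd_dist_sig}, converts this through the BH inequality into $\tv\le 1-\delta$, and then invokes Le Cam. Note that Le Cam with $\tv\le 1-\delta$ yields an error-probability lower bound of order $\delta$, not $1/2$; the ``$\ge 1/2$'' in the theorem statement appears to be a slip, and indeed the downstream \cref{thm:lb_combined} only claims probability $\ge\delta$. The underlying point is that the $\log(1/\delta)$ rate is a two-point high-probability phenomenon: Fano trades packing log-cardinality against information, and in fixed dimension that log-cardinality simply cannot scale with $\log(1/\delta)$.
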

\begin{proof}
    Let $u = e_1$ and $v = e_2$. Then, we have by \cref{cor:kl_hd_dist_sig} for any $\sigma \geq 1$ and $s \in [0, 1]$:
    \begin{equation*}
        \KL (\distr_{su} \ast \mc{N} (0, \sigma^2 I) \| \distr_{sv} \ast \mc{N} (0, \sigma^2 I)) \leq 2C \lprp{\frac{s}{\sigma}}^4.
    \end{equation*}
    We now have by the tensorization of the KL-divergence:
    \begin{multline*}
        \KL \lprp{\prod_{i = 1}^n \distr_{su} \ast \mc{N} (0, \sigma_i^2 I) \bigg\| \prod_{i = 1}^n  \distr_{sv} \ast \mc{N} (0, \sigma_i^2 I)} \\
        =\sum_{i = 1}^n \KL \lprp{\distr_{su} \ast \mc{N} (0, \sigma_i^2 I) \bigg\| \distr_{sv} \ast \mc{N} (0, \sigma_i^2 I)} \leq Cs^4 \sum_{i = 1}^n \frac{1}{\sigma_i^4}.
    \end{multline*}
    Now, setting:
    \begin{equation*}
        s = c \lprp{\frac{\log (1 / \delta)}{\sum_{i = 1}^n 1 / \sigma_i^4}}^{1 / 4},
    \end{equation*}
    we get:
    \begin{equation*}
        \KL \lprp{\prod_{i = 1}^n \distr_{su} \ast \mc{N} (0, \sigma_i^2 I) \bigg\| \prod_{i = 1}^n  \distr_{sv} \ast \mc{N} (0, \sigma_i^2 I)} \leq \frac{\log (1 / \delta)}{8}.
    \end{equation*}
    Again, we get by the BH-inequality (\cref{lem:bh_inequality}):
    \begin{equation*}
        \tv \lprp{\prod_{i = 1}^n \distr_{su} \ast \mc{N} (0, \sigma_i^2 I), \prod_{i = 1}^n  \distr_{sv} \ast \mc{N} (0, \sigma_i^2 I)} \leq 1 - \delta.
    \end{equation*}
    An application of Le Cam's method \cite[Lemma 15.9]{hdstat} concludes the proof of the result.
\end{proof}

Finally, we combine the three bounds to prove the complete lower bound from \cref{thm:lb_combined}.

\begin{proof}[Proof of \cref{thm:lb_combined}]
    The proof follows from \cref{thm:lb_arbit_gaussians,thm:lb_hd,thm:lb_hd_high_prob} by choosing the largest out of the three lower bounds in the conclusions of the results.
\end{proof}

\section{Additional Results}
In what follows, we present some additional results that did not fit in the main paper.
\begin{figure}[H]
    \centering
    \begin{tabular}{r@{\hspace{0.1em}}c@{\hspace{0.1em}}c@{\hspace{0.1em}}c}
        & CIFAR & CelebA & ImageNet \\[0.3em]
        $\sigma=0.0$ & \includegraphics[width=0.18\textwidth,viewport=0 0 96 32,clip]{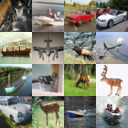} &
        \includegraphics[width=0.18\textwidth,viewport=0 0 192 64,clip]{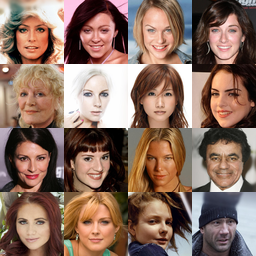} &
        \includegraphics[width=0.18\textwidth,viewport=0 0 192 64,clip]{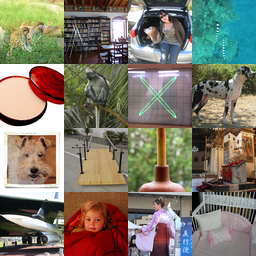} \\[0pt]
        $\sigma=0.1$ & \includegraphics[width=0.18\textwidth,viewport=0 0 96 32,clip]{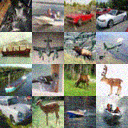} &
        \includegraphics[width=0.18\textwidth,viewport=0 0 192 64,clip]{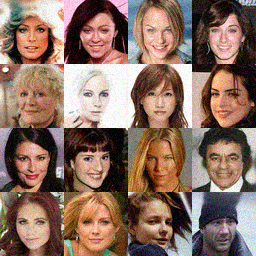} &
        \includegraphics[width=0.18\textwidth,viewport=0 0 192 64,clip]{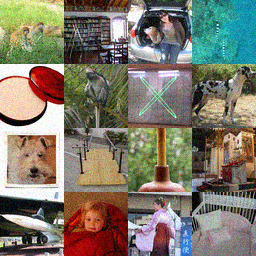} \\[0pt]
        $\sigma=0.2$ & \includegraphics[width=0.18\textwidth,viewport=0 0 96 32,clip]{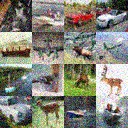} &
        \includegraphics[width=0.18\textwidth,viewport=0 0 192 64,clip]{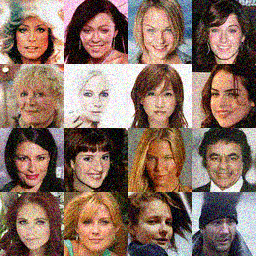} &
        \includegraphics[width=0.18\textwidth,viewport=0 0 192 64,clip]{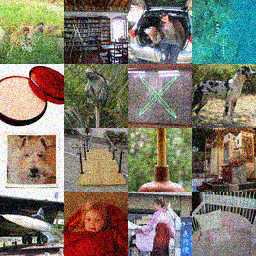}
    \end{tabular}
    \caption{Dataset images with varying noise levels ($\sigma$).}
    \label{fig:image_comparison_noise_levels}
\end{figure}

\begin{figure}[H]
    \centering
    \includegraphics[width=0.8\linewidth]{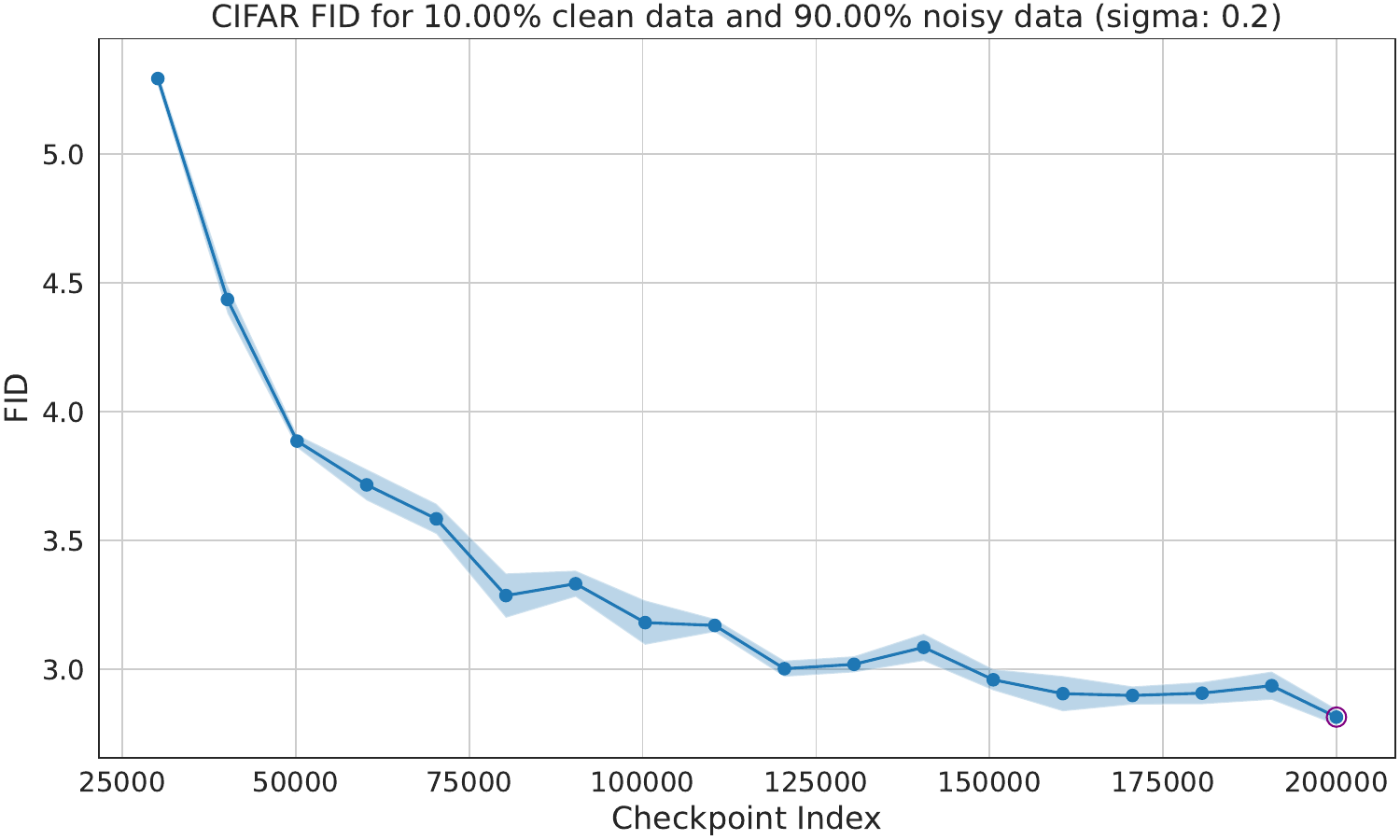}
    \caption{FID as a function of training steps for a model trained with a mix of clean and noisy data. FID continues to go down as we train more and more, indicating that the model at 200K iterations is still undertrained.}
    \label{fig:cifar-ckpt-fid}
\end{figure}

\section{Experimental Details}
\label{sec:experimental_details}
\subsection{Training}
We train our CIFAR-10 models for 200,000 steps, our CelebA models for 100,000 steps and our ImageNet models for $2,500,000$ steps. All of our images are normalized in $[-1, 1]$ prior to adding additive Gaussian Noise. We use the training hyperparameters from the EDM codebase~\citep{karras2022elucidating}. For our experiments with consistency, we use $8$ steps for the reverse sampling and $32$ samples to estimate the expectations. We use a fixed coefficient to weight the consistency loss that is chosen as a hyperparameter from the set of $\{0.1, 1.0, 10.0\}$ to maximize performance. Upon acceptance of this work, we will provide all the code and checkpoints to accelerate research in this area. Consistency leads to a $2\times$ increase in hardware requirements (memory footprint) and $2\times$ (ImageNet) to $3\times$ (CIFAR, CelebA) slowdown.

All our code and models are available in the following URL: \href{https://github.com/giannisdaras/ambient-laws}{https://github.com/giannisdaras/ambient-laws}.

\subsection{Evaluation}
We generate 50,000 images to compute FID. Each FID number reported in this paper is the average of three independent FID computations that correspond to the seeds: 0-49,999, 50,000-99,999, 100,000-149999. 
\end{document}